\DeclareMathOperator*{\argmax}{arg\,max}
\DeclareMathOperator*{\argmin}{arg\,min}
\definecolor{mygreen}{RGB}{28,172,0} 
\definecolor{mylilas}{RGB}{170,55,241}
\def\BState{\State\hskip-\ALG@thistlm}
\newcommand{\E}{\mathbb{E}}
\newcommand{\tp}{\mathsf{T}}
\newcommand{\N}{\mathbb{N}}
\newcommand{\R}{\mathbb{R}}
\newtheorem{theorem}{Theorem}
\newtheorem{lemma}{Lemma}
\newtheorem{proposition}{Proposition}
\newtheorem{definition}{Definition}
\newtheorem{assumption}{Assumption}
\title{Sampling Attacks on Meta Reinforcement Learning: A Minimax Formulation and Complexity Analysis}
\author{%
  Tao Li, Haozhe Lei, and  Quanyan Zhu \\
  Department of Electrical and Computer Engineering\\
  New York University\\
  NY 11201 \\
  \texttt{\{taoli, hl4155, qz494\}@nyu.edu} \\
}
\begin{document}

\maketitle

\begin{abstract}
Meta reinforcement learning (meta RL), as a combination of meta-learning ideas and reinforcement learning (RL), enables the agent to adapt to different tasks using a few samples. However, this sampling-based adaptation also makes meta  RL vulnerable to adversarial attacks. By manipulating the reward feedback from sampling processes in meta RL, an attacker can mislead the agent into building wrong knowledge from training experience, which deteriorates the agent's performance when dealing with different tasks after adaptation. This paper provides a game-theoretical underpinning for understanding this type of security risk. In particular, we formally define the sampling attack model as a Stackelberg game between the attacker and the agent, which yields a minimax formulation. It leads to two online attack schemes: Intermittent Attack and Persistent Attack, which enable the attacker to learn an optimal sampling attack, defined by an $\epsilon$-first-order stationary point, within $\mathcal{O}(\epsilon^{-2})$ iterations. These attack schemes freeride the learning progress concurrently without extra interactions with the environment.  By corroborating the convergence results with numerical experiments, we observe that a minor effort of the attacker can significantly deteriorate the learning performance, and the minimax approach can also help robustify the meta RL algorithms. Code and demos are available at \verb|https://github.com/NYU-LARX/Attack_MetaRL|
\end{abstract}

\section{Introduction}
Meta Reinforcement Learning (meta RL), as a combination of meta learning and reinforcement learning, intends to learn a decision-making model that can quickly adapt to new tasks. Compared with learning from scratch, meta RL enables the agent to leverage previous training data, requiring fewer samples when dealing with new tasks.  The successes of meta RL \citep{16learnRL, abbeel16rl2, finn2017model, fallah_sgmrl} can be explained from a feature learning standpoint: meta RL aims at building an internal representation of the policy (meta policy) that is broadly suitable for a collection of similar tasks. This internal representation is learned by either recurrent neural networks \citep{16learnRL,abbeel16rl2} or stochastic optimization \citep{finn2017model,fallah_sgmrl}, when agents are exposed to the collection of tasks via a unique sampling process. In meta RL, in addtion to producing sample trajectories within a single task, the sampling process also randomly  samples environments for learning a common representation shared across these tasks. 

However, this unique sampling process also makes meta RL vulnerable when facing adversarial attacks. By manipulating sample data collected in the meta training phase, the attacker can mislead the agent into learning a wrong meta policy that adapts poorly to individual tasks from the collection. For example, in Model-Agnostic Meta Reinforcement Learning (MAMRL) \citep{finn2017model}, a unique sampling process is required for optimizing the meta policy. As shown by our numerical results, slightly perturbing the reward feedback from this sampling, the attacker can significantly degrade the performance of meta RL agents under different tasks.    

\textbf{Contributions} This paper aims to  provide a theoretical understanding of adversarial attacks on sampling processes of meta RL, which is referred to as sampling attacks.  Due to its mathematical clarity, we choose MAMRL \citep{finn2017model} as the departure point of our discussion. Yet, the idea of sampling attacks also applies to other meta RL formulations \citep{16learnRL,abbeel16rl2} (see \Cref{app:relevance}). 

Our contributions are three-fold. 1) We characterize the sampling attack as a Stackelberg game \citep{stackelberg} between the attacker and the agent (the learner), which yields a minimax formulation. 2) Based on this formulation, two online black-box training-time attack schemes are proposed: Intermittent Attack and Persistent Attack, which enable the attacker to learn optimal sampling attacks concurrently with the meta learning without extra interactions with the environment. 3) The optimality of sampling attacks is defined by $\epsilon$-first-order stationarity, and our nonasymptotic analysis shows that the proposed attack schemes can achieve the optimality within $\mathcal{O}(\epsilon^{-2})$ iterations.

To the best of our knowledge, this work is among the first endeavor to investigate online training-time adversarial attacks on sampling processes of meta RL. In addition, our analysis of nonconvex-nonconcave minimax problem can be extended to other RL problems sharing the same nature of nonconvexity. 

\section{Related Works}\label{sec:related}This section reviews recent progress in the area of adversarial attacks on RL.   We position our work using the following categorizations and comment on the differences between existing approaches and the proposed one in this paper, highlighting our contribution to the security study of RL and meta RL. 

\textbf{Testing-time and Training-time Attacks} 
Unlike testing-time attacks, where the attacker intends to degrade the performance of a learned and deployed policy, training-time attacks aim to enforce a target policy in favor of the attacker by manipulating the learning process.

Our work falls within the class of training-time attack. Even though our attack methods are based on reward manipulations, which have also been explored in the literature \citep{ma2019policy,yunhan19,zhang2020adaptive,zhang21npg},  we emphasize that previous works mainly target Q-learning algorithm \citep{ma2019policy,yunhan19,zhang2020adaptive} or neural policy gradient \citep{zhang21npg}, which relies on $Q$ value estimation. For these value-based algorithms, the influence of the reward manipulation on the value function is more straightforward than in meta RL, where the adaptation in the meta training complicates the analysis. The optimality results regarding previous attacks do not apply to meta RL, and our paper's theoretical analysis of the optimality of the reward manipulation is more challenging.

\textbf{White-box and Black-box Attacks} For both testing-time and training-time attacks on RL agents, existing works have investigated both white-box \citep{ma2019policy,zhang2020adaptive,yunhan19} and black-box settings \citep{xu21transfer,russoACC21state_perturb}. For white-box attacks, the attacker needs to fully know the learner's learning algorithm and the policy model. Some attacks additionally require the knowledge of  the environment, e.g., transition dynamics \citep{xu21transfer}. 

In contrast, the proposed attacks, as a black-box attack\footnote{The fault line between white-box and black-box may vary in different contexts, We here follow the notion used in \citep{xu21transfer} and treat our proposed attack as a black-box attack.}, do not require prior knowledge of the environment setup or learning algorithms. With access to the policy model, the attacker can adjust its attack by gradient updates, regardless of learning algorithms or the environment parameters. 

Compared with previous black-box attacks \citep{xu21transfer,russoACC21state_perturb}, the proposed attacks do not require interactions with the training environment, e.g., the simulator. All it needs are the sample trajectories rolled out by the learner during the training process. In other words, the attacker considered in this paper is a free rider who does not actively collect information regarding the agent's learning algorithm or the environment.    

\textbf{Offline and Online Attacks} 
Previous works mainly consider offline training time reward manipulations \citep{ma2019policy,yunhan19}, where the attacker has full knowledge of the training dataset. The attacker makes one decision on reward manipulation and then provides the poisoned data to RL agent. In this work, online attacks are studied, where the attacker manipulates the reward feedback sequentially, adaptive to the learner's learning process. 

\section{Preliminaries}\label{sec:pre}
 Let $\{\mathcal{T}_i\}_{i=1}^T$ be the set of finite-horizon Markov Decision Processes (MDP) representing $T$ different tasks/environments. Each task $\mathcal{T}_i$ is given by a tuple $\left\langle \mathcal{S},\mathcal{A},P_i, r_i, \rho_i\right\rangle$, where $\mathcal{S}$ and $\mathcal{A}$ denotes the the space of states and actions, respectively. For each $\mathcal{T}_i$, $P_i: \mathcal{S}\times \mathcal{A}\rightarrow\mathcal{B}(\mathcal{S})$ denotes the transition kernel that maps a state-action pair to  $\mathcal{B}(\mathcal{S})$, a Borel probability measure over the state space. Finally, the agent receives a reward $r_i(s,a)$ when taking action $a$ at state $s$, and the initial state is drawn from the initial distribution $\rho_i$. For the sake of simplicity, we assume that all tasks share the same state and action spaces, as well as the same horizon length $H$, and our analysis applies to cases where different tasks have different MDP formulations \citep{fallah_sgmrl}.  
 
 To facilitate our discussion, the following notations are introduced.  A trajectory from an MDP $\mathcal{T}_i$ is defined as $\tau({\mathcal{T}_i})=(s_1,a_1,\ldots, s_H,a_H)$, where $H$ denotes the horizon length. A batch of trajectories $\tau({\mathcal{T}_i})$ under the same MDP is considered to be a training data set $\mathcal{D}_i$. Given a trajectory $\tau_i$ (a shorthand for $\tau(\mathcal{T}_i)$), the total reward received over this trajectory is $R(\tau_i):=\sum_{t=1}^H r_i(s_t,a_t)$.
 
 Suppose that the agent's policy in $\mathcal{T}_i$, $\pi(\cdot|\cdot;\theta):\mathcal{S}\rightarrow\mathcal{B}(\mathcal{A})$, is parameterized with $\theta\in  R^d$, and $\pi(a|s;\theta)$ is the probability of choosing action $a$ at state $s$. Then, given the transition dynamics $P_i(s'|s,a)$ and the initial distribution $\rho_i(s)$, the probability that such a trajectory $\tau_i$ occurs is $q_i(\tau;\theta):=\rho_i(s_1)\prod_{t=1}^H\pi(a_t|s_t;\theta)\prod_{t=1}^{H-1}P_i(s_{t+1}|s_t,a_t).$  Therefore, the expected cumulative rewards under the policy $\pi(\cdot|\cdot;\theta)$ in the task $\mathcal{T}_i$ is $J_i(\theta)= \E_{\tau_i\sim q_i(\cdot;\theta)}[R(\tau_i)]$. RL algorithms, such as  policy gradient \citep{sutton_PG},  seek to find a $\theta^*$ that maximizes  $  J_i(\theta)$.

The idea of policy gradient method is to apply gradient descent with respect to the objective function $J_i(\theta)$. Following the policy gradient theorem \citep{sutton_PG}, we obtain $\nabla J_i(\theta)=\mathbb{E}_{\tau\sim q_i(\cdot;\theta)}[g_i(\tau;\theta)], g_i(\tau;\theta)=\sum_{h=1}^H\nabla_\theta\log \pi_i(a_h|s_h;\theta)R_i^h(\tau)$ where $R^h_i(\tau)=\sum_{t=h}^H r_i(s_t,a_t)$. In RL practice, the policy gradient $\nabla J_i(\theta)$ is replaced by its Monte Carlo (MC) estimation, since evaluating the exact value is intractable. Given a batch of trajectories $\mathcal{D}_i(\theta)$ under the policy $\pi_i(\cdot|\cdot;\theta)$, the MC estimation is 
	$\hat{\nabla} J_i(\theta,\mathcal{D}_i(\theta)):={1}/{|\mathcal{D}_i(\theta)|}\sum_{\tau\in \mathcal{D}_i(\theta)} g_i(\tau;\theta)$.
 
Instead of focusing on individual tasks $\mathcal{T}_i$, MAMRL, as introduced in \citep{finn2017model}, aims to find a good initial policy, also called a meta policy that returns satisfying rewards in expectation when it is updated using limited number of gradient step updates under a new environment $\mathcal{T}'$, sampled from a task distribution $p$. In particular, for only one-step policy gradient update with a step size $\alpha$, the optimization problem of MAMRL \citep{fallah_sgmrl} is 
\begin{align}\label{eq:metarl}
	\max_{\theta\in \R^d} L(\theta)=\mathbb{E}_{\mathcal{T}_i\sim p}\mathbb{E}_{\mathcal{D}_i(\theta)\sim_{q_i(\cdot;\theta)}}[J_i(\theta+\alpha\hat{\nabla}J_i(\theta, \mathcal{D}_i(\theta)))].	
\end{align}
 A stochastic gradient  method (SG-MRL) is proposed in \citep{fallah_sgmrl} for solving the maximization problem \eqref{eq:metarl} (see \Cref{algo:sg-marl}), where an unbiased estimator of $\nabla L(\theta)$ can be efficiently computed for performing gradient ascent.  Suppose that for simplicity $\mathcal{D}^1_i=\{\tau\}$, the data set sampled using $q_i(\cdot;\theta_t)$,  only contains one trajectory, then $\nabla_\theta \E_{\mathcal{D}\sim q_i(\cdot;\theta)}[J_i(\theta+\alpha \hat{\nabla}J(\theta, {\mathcal{D}}))]=\E_{\tau\sim q_i(\cdot;\theta)}[\nabla J_i(\theta+ \alpha g_i(\tau;\theta))(I+\alpha\nabla_\theta g_i(\tau;\theta))+J_i(\theta+ \alpha g_i(\tau;\theta))\nabla_\theta \log\pi(\tau;\theta)]$. The MC estimation of $\nabla_\theta \E_{\mathcal{D}\sim q_i(\cdot;\theta)}[J_i(\theta+\alpha \hat{\nabla}J(\theta, {\mathcal{D}}))]$ is given below:
\begin{align}
	\nabla_\theta^i MC=\hat{\nabla} J_i(\theta+ \alpha g_i(\tau;\theta))(I+\alpha \nabla_\theta g_i(\tau;\theta))+\hat{J}_i(\theta+ \alpha g_i(\tau;\theta))\nabla_\theta \log\pi(\tau;\theta),\label{eq:mc_nabla}
\end{align}
where $\hat{\nabla} J_i(\theta+ \alpha g_i(\tau;\theta))$ and $\hat{J}_i(\theta+ \alpha g_i(\tau;\theta))$ are estimated using another batch of trajectories $\mathcal{D}_i^2$ rolled out under the policy $\pi(\theta+\alpha  g_i(\tau;\theta))$. The average of $\nabla_\theta^i MC$ constitutes an unbiased estimate of $\nabla L(\theta)$. 

\section{Sampling Attack Models}\label{sec:model}
In this section, we propose three sampling attack models and characterize them as Stackelberg games between the attacker and the learner.  As shown in  \Cref{algo:sg-marl}, MAMRL includes two sampling processes. The first sampling (line 5 in  \Cref{algo:sg-marl}) is for the MC estimation of the corresponding policy gradient, while the second sampling (line 7 in \Cref{algo:sg-marl}) is for the MC estimation of the gradient of the objective function $L(\theta)$. The main idea of the sampling attack is to manipulate the reward signal so that MC estimations are corrupted. Since the meta training includes two sampling processes, there are three possible sampling attacks: 1) Inner Sampling Attack (ISA): only attacking the first sampling process, where samples are used to compute the gradient adaptation in the inner loop; 2) Outer Sampling Attack (OSA): only attacking the second sampling process, where samples are used to perform one step meta optimization in the outer loop; 3) Combined Sampling Attack (CSA): attacking both the first and the second sampling process.
 
\begin{minipage}{0.5\textwidth}
	\begin{algorithm}[H]	
	\caption{SG-MARL \citep{fallah_sgmrl}}
	\begin{algorithmic}[1]\label{algo:sg-marl}
		\State \textbf{Input} Initialization $\theta_0$, $t=0$, step size $\alpha,\eta_1$.
		\While{not converge}
		\State Draw a batch of i.i.d tasks $\mathcal{T}_i, i\in \mathcal{I}$ with size $|\mathcal{I}|=I$;
		\For{$i\in\mathcal{I}$ }
		\State Sample a batch of trajectories $\mathcal{D}^1_i$ under $q_i(\cdot;\theta_t)$;
		\State $\theta^i_{t+1}=\theta_t+\alpha \hat{\nabla} J_i(\theta_t,\mathcal{D}^1_i)$ ;
		  
		\State Sample a batch of trajectories $\mathcal{D}_i^2$ under $q_i(\cdot;\theta_{t+1}^i)$;
		\EndFor
		\State Compute $\nabla_\theta^i MC$ using \eqref{eq:mc_nabla}
		\State $\theta_{t+1}=\theta_t+(\eta_1/I) \sum_{i\in\mathcal{I}}\nabla^i_{MC}$
		\EndWhile
		\State \textbf{Return } $\theta_{t+1}$
	\end{algorithmic}
\end{algorithm}
\end{minipage}
\hfill
\begin{minipage}{0.45\textwidth}
	\begin{figure}[H]
	\centering
	\includegraphics[width=1\textwidth]{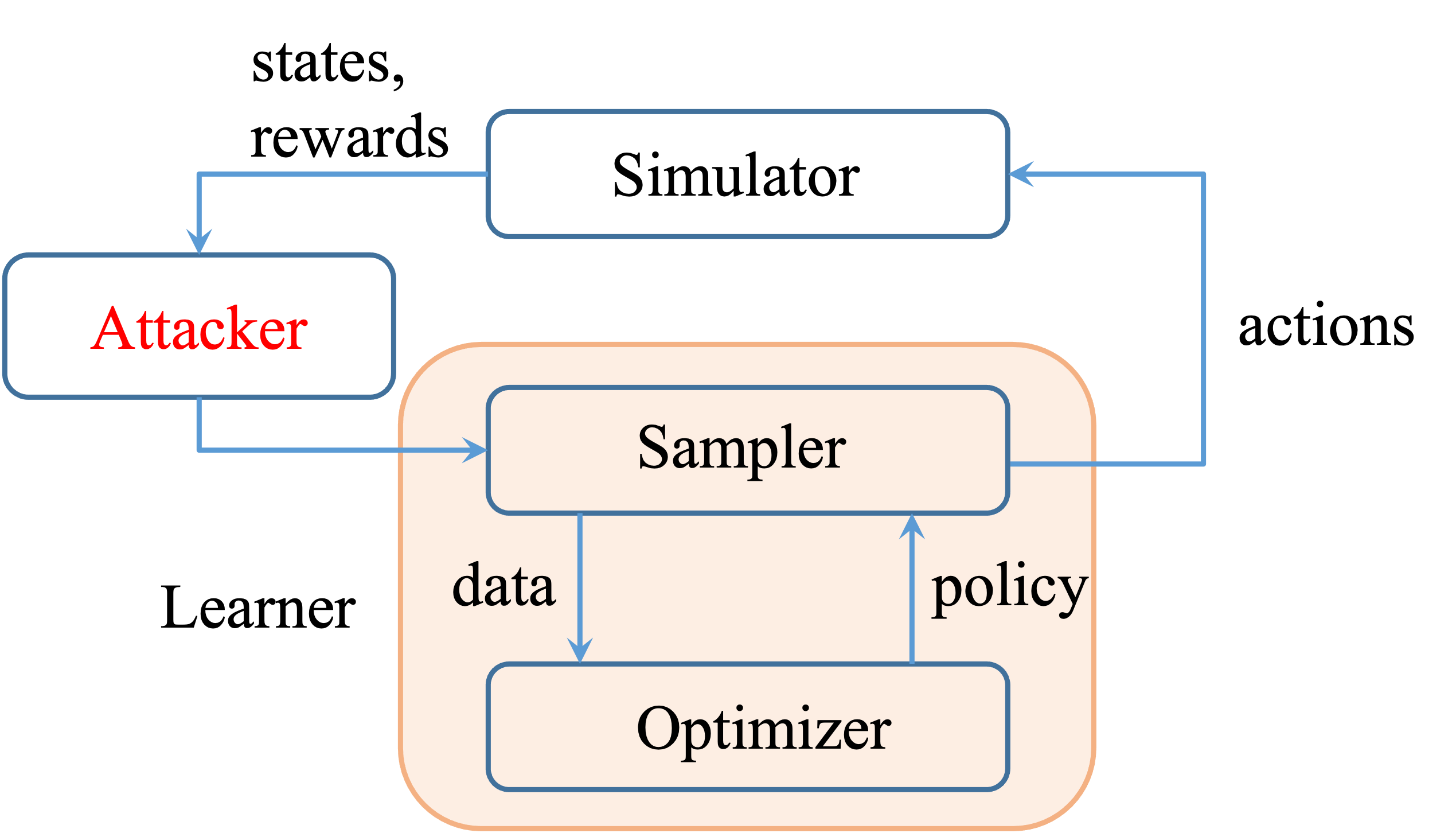}
	\caption{A schematic illustration of sampling attack. The attacker hijacks the reward feedback sent from the simulator to the learner, and poisons the rewards using attack methods (Intermittent and Persistent Attack) introduced in \Cref{sec:method}, in order to mislead the learner. }
	\label{fig:sample_attack}
	\end{figure}
\end{minipage}

\textbf{Practicability of Sampling Attacks} Before delving into the details of these attacks, we first illustrate how this attack can happen in a training environment. For the training process in RL, a learner, as the orange box in \Cref{fig:sample_attack}, usually consists of two components: a sampler and an optimizer. At each state, the sampler chooses an action based on the current policy, and sends it to the simulator, which returns a reward as well as the next state. The state-action-reward tuple will be stored within the sampler, and this process repeats until multiple trajectories are sampled. These sample trajectories make up a training data set fed to the optimizer, which performs one-step policy improvement. 

To launch sampling attacks, the attacker first covertly infiltrates into the learning system shown in \Cref{fig:sample_attack} through Advanced Persistent Threats (APT) processes \citep{zhu18apt}. Details on this infiltration are included in \Cref{app:prac}. After the infiltration, the attacker intercepts communication between the sampler and the simulator and sends the learner poisoned reward signals, misleading the learner. One may argue that the attacker could gain control of the whole system, and its possible maneuvers are more than reward manipulation (e.g., compromising the simulator, corrupting the state input). In \Cref{app:justify}, we justify our choice of reward manipulation as the first step of the investigation without losing the generality of sampling attacks.         
 
The following summarizes three important aspects of the proposed sampling attacks. 1) \textbf{Knowledge of the attacker}: the attacker only has access to the sample trajectories rolled out by the learner and the current policy. The simulator and the learner's internal learning process remain black-box to the attacker. 2)  \textbf{Attack Strategies}: the attacker is allowed to modify the rewards of sample trajectories provided by the simulator. It can adjust its attack sequentially depending on the meta learning process. Note that the attacker needs to choose the manipulation strategy from a constraint set, which can be interpreted as the attack budget (see \Cref{app:budget}).  3) \textbf{The attacker's objective}: this paper considers a greedy attacker who misleads the learner into learning the worst possible meta policy, i.e., the value function of the learned meta policy [$L(\theta)$ in \eqref{eq:metarl}] is minimized after sampling attacks. In the subsequent, we use ISA as an example to elaborate on these aspects of the proposed sampling attacks.   
        
\textbf{Inner Sampling Attack} The first attack is to apply reward manipulations to the first sampling process. Denote the manipulated reward by $r^{adv}(s,a;\delta)$, which is parameterized by $\delta\in \Delta\subset\R^n, n\in \N$. We assume that $r^{adv}(s,a;\delta):=\delta\cdot r(s,a)$, i.e., the manipulated reward is obtained by applying the linear transformation to the original reward, and $\delta\in\Delta\subset \R$. It is a more general practice to represent $r^{adv}$ by a neural network, $\delta$ being the weights of the network. Another remark is that the admissible  manipulation strategies $\delta$ are confined to a bounded set $\Delta$ (see \Cref{ass:domain}), limiting the attacker's ability. \Cref{app:budget} compares our attack constraints with existing constraints in the literature. 

Under ISA, the policy adaptation $\theta_{t+1}^i=\theta_t+\alpha \hat{\nabla}J_i(\theta, \mathcal{D}_i^1)$ is compromised, as $\mathcal{D}_i^1$ is replaced by $\widetilde{\mathcal{D}}_i^{1}$, in which every reward is scaled by $\delta$. For simplicity, it is assumed that reward signals in $\mathcal{D}_i^1$ are uniformly scaled by $\delta$. More sophisticated manipulation strategies, such as adaptive poisoning \citep{zhang2020adaptive}, can also be considered, yet to keep the theoretical analysis neat, we limit ourselves to this uniform scaling within each batch.  In the following, notations with $\widetilde{\mbox{\hspace{4pt}}\cdot\mbox{\hspace{4pt}}}$ either denote data corrupted by the manipulation or quantities computed using corrupted data. 

Following the policy gradient theorem reviewed in \Cref{sec:pre}, the policy gradient under ISA takes the following form $\widetilde{\nabla }J_i(\theta)=\E_{\tau\sim q_i(\cdot;\theta)}[{g}_i(\widetilde{\tau};\theta)]$, where $ 
	\widetilde{g}_i(\tau;\theta)= \sum_{h=1}^H\nabla_\theta\log \pi_i(a_h|s_h;\theta){R}_i^h(\widetilde{\tau})$. Since $\widetilde{\tau}$ only differs from $\tau$ in that the rewards are scaled by $\delta$, ${R}_i^h(\widetilde{\tau})=\delta R_i^h(\tau)$,  the relationship between the corrupted policy gradient and the original one is given by $\widetilde{\nabla J_i}(\theta)=\delta \nabla J_i(\theta)$. Similarly, for the MC estimation, we have $\hat{\nabla}J(\theta, \widetilde{\mathcal{D}})=\delta \hat{\nabla}J(\theta,\mathcal{D})$. Finally, the goal of the attack is to find a $\delta$ such that the performance of the learned meta policy [evaluated through the objective function $L(\theta)$, specified in \eqref{eq:metarl}.] is minimized. The optimization problem associated with (ISA) is given by
\begin{align}\label{eq:inner-attack}
	\min_{\delta\in \Delta}\quad & L(\theta^*) \tag{ISA}\\
	\text{s.t.}\quad &  \theta^*\in \argmax_{\theta\in \R^d}\E_{i\sim p}\E_{\mathcal{D}\sim q_i(\cdot;\theta)}[J_i(\theta+\alpha \cdot\delta\hat{\nabla}J(\theta, {\mathcal{D}}))]. 	\nonumber
\end{align}

\textbf{Outer Sampling Attack}
Another  sampling attack is to consider manipulating the samples $\mathcal{D}_i^2$ in \Cref{algo:sg-marl}. The outer sampling collects samples under the updated policy $\theta_{t+1}^i$, which are used to compute the MC estimation of the gradient of $L(\theta_t)$. In this case, the optimization problem in OSA share the same attack objective $\min_{\delta\in\Delta} L(\theta^*)$ as in ISA, but the constraint becomes 
\begin{align}\label{eq:outer-attack}
	\text{s.t.}\quad & \theta^*\in \argmax_{\theta\in \R^d}\E_{i\sim p}\E_{\mathcal{D}\sim q_i(\cdot;\theta)}[\delta J_i(\theta+\alpha \cdot\hat{\nabla}J(\theta, {\mathcal{D}}))]. \tag{OSA}
\end{align}

\textbf{Combined Sampling Attack}
Finally, the inner and the outer sampling attack can be combined together. Following the same line of argument, the objective is $\min_{\delta\in\Delta} L(\theta^*)$, and the constraint is 
\begin{align}\label{eq:com-attack}
	\text{s.t.} \quad   \theta^*\in \argmax_{\theta\in \R^d}\E_{i\sim p}\E_{\mathcal{D}\sim q_i(\cdot;\theta)}[\delta J_i(\theta+\alpha \cdot\delta\hat{\nabla}J(\theta, {\mathcal{D}}))].\tag{CSA}
\end{align}

Some remarks are in order. Note that compared with the other two sampling attacks,  OSA is rather trivial: a reward flipping, i.e., $\delta=-1$ is sufficient, which makes $\theta^*$ already a minimizer. Hence, for the rest of the paper, we mainly discuss \eqref{eq:inner-attack} and \eqref{eq:com-attack}. Our theoretical analysis focuses on  \eqref{eq:inner-attack}, and its extension to \eqref{eq:com-attack} is straightforward (see \Cref{app:mmp_up}).  Another remark is that  even though there may exist multiple maximizers $\theta^*$, as the objective function is nonconvex differentiable, we assume that in our formulation, $\theta^*$ is properly selected, and hence unique. One way to justify this assumption is that $\theta^*$ can be viewed as the policy parameter returned by the RL algorithm in the training process, and hence, it is unique. We will elaborate on this selection more mathematically (see \Cref{lemma:phi_diff} and its proof) in \Cref{sec:method}. 

Note that all three sampling attacks are formulated as bi-level optimization problems. Hence they amount to a Stackelberg game between the attacker (leader) and the learner (follower). The best response of the learner is to learn an optimal policy under the reward manipulation, and the three constraints on $\theta^*$ in the above formulations  correspond to the learner's meta learning processes under ISA, OSA, and CSA, respectively. Knowing the learner's best response, the attacker aims to find a $\delta$ such that the learned meta policy adapts poorly in the testing phase without further attacks.  

Since the attacker targets the testing performance of $\theta^*$, it needs to consider the policy gradient adaptation $(\theta^*+\alpha \hat{\nabla}J_i(\theta^*, \mathcal{D}(\theta^*))$  to evaluate $L(\theta^*)$, where $\mathcal{D}(\theta^*)$ is the genuine sample trajectories under $\theta^*$ without manipulation. However, in (ISA), the attacker can only access the learner's contaminated training data [i.e., sample trajectories under the policy $(\theta+\alpha \cdot \delta \hat{\nabla}J(\theta, {\mathcal{D}}))$]. In this case, the feedback of the attacker's manipulation [e.g., estimates of $\nabla L(\theta)$] is not directly available during the training time. Hence, searching for the optimal attack defined in \eqref{eq:inner-attack} in an online manner is challenging, due the attacker's limited knowledge of the meta learning process.

\textbf{From Stackelberg to Minimax} As discussed above, it is a daunting task for the attacker to find the optimal $\delta$ by solving the bi-level optimization problem in \eqref{eq:inner-attack}. Thus, we reformulate the attacker's problem as a minimax problem (MMP), which is  more tractable than \eqref{eq:inner-attack}, as the attacker now targets the learner's training performance as shown below. 
 \begin{align}\label{eq:minimax_attack}
	\min_{\delta\in \Delta} \max_{\theta\in \R^d} \E_{i\sim p}\E_{\mathcal{D}\sim q_i(\cdot;\theta)}[ J_i(\theta+\alpha \cdot \delta \hat{\nabla}J(\theta, {\mathcal{D}}))].\tag{MMP}
\end{align}  
{Note that in general the $\min$ and $\max$ operator in \eqref{eq:minimax_attack} is not changable, since the objective is nonconvex-nonconcave \citep{fallah_sgmrl}. Hence, \eqref{eq:minimax_attack} still amounts to a Stackelberg game, yet the attacker now aims to minimize the training performance under ISA, i.e., the value function under the corrupted dataset [the max part of \eqref{eq:minimax_attack}]. Sample trajectories rolled out by the learner can also help guide the attacker's search for the optimal attack, and no extra data or information is needed. In this case, the attack freerides the meta learing process, making the proposed sampling attack stealthy and lightweight. This free-ride nature will be more clear in \Cref{sec:method}, where two online attack schemes based on the gradient feedback are introduced.   
}

Before concluding this section, we comment on the relationship between original \eqref{eq:inner-attack} and \eqref{eq:minimax_attack}. Even though the minimax value of \eqref{eq:minimax_attack} is by definition different from the Stackelberg equilibrium payoff given by \eqref{eq:inner-attack}, our following \Cref{lem:upper-bound} states that under the assumption of bounded policy gradients (see \Cref{ass:bound_pg} in \Cref{app:regular}), the minimax value, when combined with a $\ell_2$-norm regularization $\|\delta-1\|$ serves as an upper-bound of the value function under the optimal attack.  

\begin{proposition}\label{lem:upper-bound}
Under the assumption of bounded policy gradients, there exists a constant $C$, such that for $\theta^*=\theta^*(\delta)$ defined in \eqref{eq:inner-attack}, any batch of trajectories $\mathcal{D}$,
	\begin{align}
	 J_i(\theta^*+\alpha \hat{\nabla}J_i(\theta^*, \mathcal{D}))]\leq &  J_i(\theta^*+\alpha \cdot \delta \hat{\nabla}J(\theta^*, {\mathcal{D}}))]+C\|\delta-1\|.\label{eq:pre_upper}
	\end{align}
	As a consequence, 
	\begin{align}	
	\min_{\delta\in \Delta}L(\theta^*)\leq \min_{\delta\in \Delta} \E_{i\sim p}\E_{\mathcal{D}\sim q_i(\cdot;\theta)}[ J_i(\theta^*+\alpha \cdot \delta \hat{\nabla}J(\theta^*, {\mathcal{D}}))]+ C\|\delta-1\|.\label{eq:stacktominimax}
	\end{align}
\end{proposition}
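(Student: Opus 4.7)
The plan is to reduce the first inequality \eqref{eq:pre_upper} to a Lipschitz estimate on $J_i$ combined with a uniform norm bound on the Monte Carlo gradient estimator $\hat{\nabla}J_i$, and then to obtain \eqref{eq:stacktominimax} by taking expectations and a minimum over $\delta$. The central observation is that the two arguments of $J_i$ on the two sides of \eqref{eq:pre_upper} differ only by the vector $\alpha(1-\delta)\hat{\nabla}J_i(\theta^*,\mathcal{D})$, so controlling its norm together with the Lipschitz constant of $J_i$ yields the bound directly.

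First I would invoke the bounded policy gradient assumption (\Cref{ass:bound_pg} in \Cref{app:regular}), together with the implicit boundedness of per-step rewards $r_i$ in the finite-horizon MDP setup, to establish two uniform bounds: (i) $\|\nabla J_i(\theta)\|\leq L_J$ for every $\theta$, which via the mean value theorem makes $J_i$ globally $L_J$-Lipschitz on $\R^d$; and (ii) $\|g_i(\tau;\theta)\|\leq G$ for every trajectory $\tau$ and parameter $\theta$, which via the triangle inequality applied to $\hat{\nabla}J_i(\theta^*,\mathcal{D}) = |\mathcal{D}|^{-1}\sum_{\tau\in\mathcal{D}} g_i(\tau;\theta^*)$ gives $\|\hat{\nabla}J_i(\theta^*,\mathcal{D})\|\leq G$ for every batch $\mathcal{D}$. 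Combining these,
\begin{align*}
J_i(\theta^*+\alpha\hat{\nabla}J_i(\theta^*,\mathcal{D})) - J_i(\theta^*+\alpha\delta\hat{\nabla}J_i(\theta^*,\mathcal{D})) \leq L_J \cdot \alpha \|\delta-1\| \cdot G,
\end{align*}
so that setting $C:=\alpha L_J G$ delivers \eqref{eq:pre_upper} with $C$ independent of $\theta^*$, $\mathcal{D}$, $i$, and $\delta$.

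To pass to \eqref{eq:stacktominimax}, I would take the expectation $\E_{i\sim p}\E_{\mathcal{D}\sim q_i(\cdot;\theta^*)}$ of both sides of \eqref{eq:pre_upper}; by the definition of $L$ in \eqref{eq:metarl} the left side becomes $L(\theta^*)$, while the right side becomes the integrand appearing in \eqref{eq:stacktominimax} plus $C\|\delta-1\|$, which is constant with respect to the averaging variables. Taking the minimum over $\delta\in\Delta$ of both sides and using the monotonicity of $\min$ with respect to pointwise inequalities then yields the claim.

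The main obstacle is ensuring that $C$ can be chosen globally, i.e., independently of $\theta^*$, which is itself an implicit function of $\delta$ through the inner maximization in \eqref{eq:inner-attack}. This is precisely why \Cref{ass:bound_pg} must be stated in uniform form: without a uniform bound on the policy gradients over all of $\R^d$, the Lipschitz argument would only be local and $C$ might blow up along a sequence of attacker strategies $\{\delta_k\}$ driving $\theta^*(\delta_k)$ to infinity. A secondary bookkeeping point is that bounding $\|g_i(\tau;\theta)\|$ uniformly requires not just bounded $\nabla_\theta\log\pi_i$ but also bounded cumulative per-trajectory rewards $R_i^h(\tau)\leq H r_{\max}$, which is free in the finite-horizon, bounded-reward setting the paper adopts.
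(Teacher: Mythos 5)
Your proposal is correct and follows essentially the same route as the paper: both reduce \eqref{eq:pre_upper} to the Lipschitz continuity of $J_i$ (which the paper imports from \citep[Lemma~1]{fallah_sgmrl} rather than rederiving via the mean value theorem as you do) applied to the two arguments differing by $\alpha(1-\delta)\hat{\nabla}J_i(\theta^*,\mathcal{D})$, bound that displacement using the uniform bound on the Monte Carlo gradient from \Cref{ass:bound_pg}, and then pass to \eqref{eq:stacktominimax} by taking expectations and minimizing over $\delta$. Your remark on the necessity of the \emph{uniform} gradient bound so that $C$ does not depend on $\theta^*(\delta)$ is a worthwhile clarification that the paper leaves implicit.
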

From \Cref{lem:upper-bound}, the free-rider nature can explained in the following way: as long as the training performance, i.e., the right-hand side of \eqref{eq:stacktominimax} deteriorates, the testing performance or the quality of the meta policy, i.e., the left-hand side of \eqref{eq:stacktominimax} cannot be any better. Even though the proposed sampling attack models are discussed in the context of MAMRL, they are also relevant to other meta RL frameworks, as discussed in \Cref{app:relevance}.

\section{Sampling Attacks under Concurrent Learning }\label{sec:method}
Based on \eqref{eq:minimax_attack}, this section presents two onine attack schemes that enable the attacker to learn the optimal attack alongside with learner's learning process. We present the two schemes in the context of ISA, and the extension to CSA is straightforward (see \Cref{app:mmp_up}). To streamline the investigation into the optimality of proposed mechanisms, we skip the $\ell_2$ term, treating it as a regularization in numerical implementations, and mainly focus on  \eqref{eq:minimax_attack}.

\textbf{Intermittent Attack} In  \eqref{eq:minimax_attack}, the inner maximization problem $\max_{\theta\in \R^d}\E_{i\sim p}\E_{\mathcal{D}\sim q_i(\cdot;\theta)}[J_i(\theta+\alpha \cdot\delta\hat{\nabla}J(\theta, {\mathcal{D}}))]$ corresponds to the meta RL under ISA, while the attacker's problem is given by $\min_{\delta\in \Delta} \E_{i\sim p}\E_{\mathcal{D}\sim q_i(\cdot;\theta)}[J_i(\theta^*+\alpha \cdot\delta\hat{\nabla}J(\theta^*, {\mathcal{D}}))]$. A natural way to seek the minimizer is to apply gradient descent to the objective function.
Using the similar argument in \eqref{eq:mc_nabla}, we arrive at the following MC estimation of the gradient with respect to $\theta$, 
\begin{align}\label{eq:attack_mc_theta}
\widetilde{\nabla}_\theta^i MC=\hat{\nabla} J_i(\theta+\delta g_i(\tau;\theta))(I+\delta\nabla_\theta g_i(\tau;\theta))+\hat{J}_i(\theta+\delta g_i(\tau;\theta))\nabla_\theta \log\pi(\tau;\theta).
\end{align}
 Similarly for the gradient with respect to $\delta$ when fixing $\theta^*$, the MC estimate is given by  
\begin{align}
\nabla_\delta^i MC= \alpha \cdot \hat{\nabla} J_i(\theta^*+\alpha\cdot \delta g_i(\tau;\theta^*))g_i(\tau;\theta^*)\label{eq:mc_delta}
\end{align}

The attacker's learning proceeds as follows. At first, the attacker initializes the attack by setting $\delta=1$, and closely monitors the learning process. During the meta training process, the attacker remains dormant, i.e., $\delta$ remains constant, until policy parameter $\theta$ stabilizes. Then it is activated and updates $\delta$ using one-step stochastic gradient descent under the Euclidean projection $\Pi_{\Delta}(\cdot)$.  It repeats the above steps until the $\delta$ stabilizes. Since the attacker only adapts the reward manipulation intermittently, we refer to such attack as Intermittent Attack. The pseudocode is in \Cref{algo:intermit}. 

\textbf{Persistent Attack}
Instead of waiting for the learner to converge to $\theta^*(\delta)$, the attacker can also implement a real-time attack, i.e., it performs gradient descent along with the learner's gradient ascent, no matter whether the current $\theta$ stabilizes or not. In this case, with the initialization $\delta=1$, the attacker is spared from detecting the stabilization of the learner's training and updates $\delta$ every time $\theta$ is updated. \Cref{algo:persist} summarizes this kind of attack, which we refer to as Persistent Attack, since the attacker keeps adjusting $\delta$ according to the learner's progress all the time. The intuition behind Persistent Attack is that if the attacker's learning rate is far smaller than the learner's, the learner views the attack as quasi-static, while the attacker treats $\theta_t$ as stabilized \citep{lin_twotime}, which shares the same spirit as Intermittent Attack.       

Before proceeding to the theoretical analysis, we comment on the differences between the two schemes in terms of  cost and stealthiness. Since Intermittent Attack utilizes the stabilized policy parameter, the attacker's MC estimation \eqref{eq:mc_delta} in Intermittent Attack tends to return a more accurate gradient descent direction than its counterpart in Persistent Attack when minimizing the objective function in \eqref{eq:minimax_attack}. For the attacker's gradient estimation under two attack schemes, the reader is referred to \Cref{lemma:phi_diff}, \Cref{lemma:decrease_d}, and associated proofs. As a result, in practice, Intermittent Attack tends to find the optimal attack using fewer iterations (as shown in \Cref{fig:experi}(b)(d).) However, in Intermittent Attack, the attacker needs to detect the stabilization of the learner's training. This detection can be costly, as the attacker needs to keep a record of the average return of each iteration. In contrast, in Persistent Attack, the attacker simply updates the attack $\delta$ per iteration without detection. On the other hand, since Persistent Attack manipulates the rewards of sample trajectories during each iteration, it is less stealthy than Intermittent Attack. 

\vspace{-0.5cm}
\begin{minipage}[t]{0.49\textwidth}
	\begin{algorithm}[H]
	\caption{Intermittent Attack}\label{algo:intermit}
	\begin{algorithmic}
		\State\textbf{Input} maximum iterations $N, K$, meta learning step size $\eta_1$, meta attack step size $\eta_2$, initialization $\delta_0, \theta_0$, policy gradient step size $\alpha$.
		\For{$t=0,\ldots,N-1$}
		\State $\theta_0(\delta_t)=\theta_t$
		\For{$k=0,\ldots, K-1$}
		\State Draw a batch of tasks $\mathcal{T}_i$, $i\in \mathcal{I}$;
		\For{$i\in \mathcal{I}$ }
		\State Sample  $\mathcal{D}^1_i$ from $q_i(\cdot;\theta_t)$;
		\State $\theta^i_{k+1}=\theta_k+\alpha\cdot \delta \hat{\nabla} J_i(\theta_k,\mathcal{D}^1_i)$;
		\State Sample $\mathcal{D}^2_i$ from $q_i(\cdot;\theta^i_{k+1})$;
		\EndFor  
		\State $\theta_{k+1}=\theta_k+(\eta_1/I) \sum_{i\in\mathcal{I}}\widetilde{\nabla}^i_\theta{MC}$; 
		\EndFor
		\State $\theta_{t+1}= \theta_K(\delta_t)$;
		\State $\delta_{t+1}=\Pi_{\Delta}\left(\delta_t+(\eta_2/I) \sum_{i\in \mathcal{I}}\nabla^i_\delta{MC}\right)$ ;  
		\EndFor
		\State \textbf{Return} $\{\delta_{t},\theta_K(\delta_t)\}$ for $t=0,1,\ldots, N$.
	\end{algorithmic}
\end{algorithm}
\end{minipage}
\hfill
\begin{minipage}[t]{0.49\textwidth}
	\begin{algorithm}[H]
	\caption{Persistent Attack}
	\begin{algorithmic}
		\State \textbf{Input} Initialization $(\delta_0,\theta_0)$, \\
		step sizes $\eta_1>\eta_2$.
		\For {$t=0, 1, 2,\ldots, N-1$}
		\State Draw a batch of iid tasks $\mathcal{T}_i$, $i\in \mathcal{I}$ with size $I$;
		\For{$i\in I$ }
		\State Sample  $\mathcal{D}^1_i$ with respect to $q_i(\cdot;\theta_t)$;
		\State $\theta^i_{t+1}=\theta_t+\alpha \cdot \delta \hat{\nabla} J_i(\theta_t,\mathcal{D}^1_i)$;
		\State Sample  $\mathcal{D}^2_i$ with respect to $q_i(\cdot;\theta^i_{t+1})$;
		\State Compute  $\widetilde{\nabla}^i_\theta{MC}$ using \eqref{eq:attack_mc_theta}, and $\nabla^i_\delta{MC}$ using \eqref{eq:mc_delta};
		\EndFor  
		\State $\theta_{t+1}=\theta_t+(\eta_1/I) \sum_{i\in\mathcal{I}}\widetilde{\nabla}^i_\theta{MC}$;
		\State $\delta_{t+1}=\Pi_{\Delta}\left(\delta_t-(\eta_2/I) \sum_{i\in\mathcal{I}}\nabla^i_\delta{MC}\right)$;
		\Comment{ $\Pi_{\Delta}(\cdot)$ is the Euclidean projection}
		\EndFor
		\State \textbf{Return} \{$\delta_t,\theta_t\}$ for $t=0,1,\dots, N$.
	\end{algorithmic}\label{algo:persist}
\end{algorithm}
\end{minipage}

\textbf{Optimality of Sampling Attacks}
The objective in \eqref{eq:minimax_attack} is nonconvex-nonconcave. An appropriate optimality condition is the $\epsilon$-first order stationary point ($\epsilon$-FOSP), widely adopted in the study of programming and game theory \citep{gda_two,li2021confluence}.
\begin{definition}[$\epsilon$-FOSP]\label{def:fosp}
 $(\delta^*,\theta^*)$ is an $\epsilon$-FOSP of a continuously differentiable function $f(\delta,\theta) $ if $\min_{\delta\in B(\delta^*,1)} \left\langle\nabla_\delta f(\delta^*,\theta^*),\delta-\delta^*\right\rangle \geq -\epsilon$, $\max_{\theta\in B(\theta^*,1)}\left\langle\nabla_{\theta}f(\delta^*,\theta^*),\theta-\theta^*\right\rangle\leq \epsilon$, 
	 where $B(\delta^*,1):=\{\delta\in \Delta: \|\delta-\delta^*\|\leq 1\}$, $B(\theta^*,1):=\{\theta\in\R^d: \|\theta-\theta^*\|\leq 1\}$. 
\end{definition} 
Since the objective function in \eqref{eq:minimax_attack} is noncovex-nonconcave, in order to show the optimality of the attack, we need to impose some regularity conditions. For the ease of notation, we define $f(\delta, \theta):=\E_{i\sim p}\E_{\mathcal{D}\sim q_i(\cdot;\theta)}[J_i(\theta+\alpha \cdot\delta\hat{\nabla}J(\theta, {\mathcal{D}}))]$, and denote its MC estimation by $\hat{\nabla}f(\delta,\theta;\xi)$, where the random variable $\xi$ represents the a collection of datasets $\{\mathcal{D}_1^i\}_{i\in \mathcal{I}}, \{\mathcal{D}_i^2\}_{i\in \mathcal{I}}$ to be sampled in the sampling processes for the computation of $\nabla_\delta^i MC , \nabla_\theta^i MC$. Denote $\Phi(\delta):=\max_{\theta}f(\delta,\theta)$. 
The following assumptions extend Polyak-\L ojasiewicz (PL) condition \citep{polyak1963gradient,loja} and restricted secant inequality (RSI) \citep{zhang2013gradient} to the MMP. Note that both \Cref{ass:minmaxpl} and \Cref{ass:rsi} are weaker than commonly assumed strong convexity condition \citep{lin_twotime,gda_two}, and detailed discussions are presented in \Cref{app:regular}. 
\begin{assumption}[Minmax PL]\label{ass:minmaxpl}
	For  \eqref{eq:minimax_attack}, it is assumed that there exists a constant $\mu>0$ such that  $\frac{1}{2}\|\nabla_\theta f(\delta,\theta)\|^2\geq \mu(\max_{\theta}f(\delta,\theta)-f(\delta,\theta))$.
\end{assumption}
\begin{assumption}[Minimax RSI]\label{ass:rsi}
 	For \eqref{eq:minimax_attack}, it is assumed that there exists a constant $\mu>0$ such that for any fixed $\delta$,
 	$\left\langle \nabla_\theta f(\delta,\theta), \theta^*(\delta)-\theta \right\rangle \geq \mu\|\theta^*(\delta)-\theta\|^2, \forall \theta\in \R^d$.	
 \end{assumption}
 In addition to the above assumptions, some regularity assumptions are also needed to show the convergence of  Intermittent and Persistent Attack. They are presented in \Cref{app:regular}. 

\textbf{Nonasymptotic Analysis of Intermittent Attack}\label{sec:analysis_intermittent}
The convergence result rests on that when the meta learning stabilizes, the attacker's gradient feedback $\nabla_\delta f(\delta_t, \theta^*_t(\delta_t))$ equals $\nabla\Phi(\delta_t)$ (see \Cref{lemma:phi_diff}). Then, the standard analysis of gradient descent in nonconvex programming can be applied.
 
\begin{theorem}\label{thm:intermittent}
	Under \Cref{ass:minmaxpl} and regularity assumptions, for any given $\epsilon\in (0,1)$, let the step sizes $\eta_1,\eta_2$ as well as the batch size $M$ be properly chosen (see \Cref{app:inter_proof}), if $N\geq N(\epsilon)\sim \mathcal{O}(\epsilon^{-2})$, $K\geq K(\epsilon)\sim \mathcal{O}(\log\epsilon^{-1})$, then there exists an index $t$ such that $\{\delta_t,\theta_K(\delta_t)\}$ generated by \Cref{algo:intermit} is an $\epsilon$-FOSP.
\end{theorem}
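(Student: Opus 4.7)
The plan is to exploit the two-timescale structure of Intermittent Attack: the inner $K$-loop approximately solves the maximization to produce $\theta_K(\delta_t) \approx \theta^*(\delta_t)$, and the outer $N$-loop then performs projected stochastic gradient descent on $\Phi(\delta) := \max_\theta f(\delta,\theta)$ using the surrogate gradient $\nabla_\delta^i MC$ evaluated at $(\delta_t, \theta_K(\delta_t))$. The cornerstone is a Danskin-type identity (this is precisely the content of \Cref{lemma:phi_diff} flagged earlier in \Cref{sec:model}): under \Cref{ass:minmaxpl} together with smoothness of $f$, the maximizer $\theta^*(\delta)$ is suitably selected and $\Phi$ is differentiable with $\nabla \Phi(\delta) = \nabla_\delta f(\delta, \theta^*(\delta))$. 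This turns the bilevel problem into an ordinary nonconvex stochastic optimization in $\delta$ with an inexact gradient oracle.

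First I would bound the inner-loop error. With $\eta_1$ chosen so that the stochastic update on $\theta$ is a contraction toward $\theta^*(\delta_t)$ in expectation, the minmax PL condition gives linear convergence of the function gap and, combined with Lipschitz smoothness of $f$, of the iterate distance itself, yielding a bound of the form $\mathbb{E}\|\theta_K(\delta_t)-\theta^*(\delta_t)\|^2 \leq (1-c\eta_1\mu)^K \|\theta_0-\theta^*(\delta_t)\|^2 + O(\eta_1 \sigma^2 / M)$, where $\sigma^2$ is the variance of $\widetilde{\nabla}_\theta^i MC$ and $M$ is the batch size. Taking $K = \mathcal{O}(\log \epsilon^{-1})$ together with $M$ large enough to absorb the residual variance term drives this below $\epsilon^2$.

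Next I would control the outer loop. Write the attacker's update as $\delta_{t+1}=\Pi_\Delta(\delta_t - \eta_2 \nabla \Phi(\delta_t) + \eta_2 e_t)$, where $e_t$ collects both the MC noise in $\nabla_\delta^i MC$ and the bias induced by substituting $\theta_K(\delta_t)$ for $\theta^*(\delta_t)$. Lipschitz continuity of $\nabla_\delta f$ in $\theta$ turns this bias into an $O(\|\theta_K(\delta_t)-\theta^*(\delta_t)\|)$ term, which the inner-loop bound keeps at order $\epsilon$; the stochastic part is $O(1/\sqrt{M})$. Smoothness of $\Phi$, which is needed to apply the descent lemma, follows by combining Lipschitz gradients of $f$ with \Cref{ass:rsi} (the RSI gives quadratic growth and hence a Lipschitz selection $\delta \mapsto \theta^*(\delta)$, so $\nabla \Phi$ inherits a Lipschitz constant from $\nabla_\delta f$). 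Applying the standard descent lemma for projected nonconvex stochastic gradient descent, telescoping over $t=0,\ldots,N-1$, and using boundedness of $\Phi$ from below yields an index $t^*$ with $\mathbb{E}\|\mathcal{G}_{\eta_2}(\delta_{t^*})\|^2 \leq O(1/(\eta_2 N)) + O(\epsilon^2)$, where $\mathcal{G}_{\eta_2}$ is the projected gradient map on $\Delta$. Choosing $\eta_2$ constant and $N = \mathcal{O}(\epsilon^{-2})$ delivers the required $\epsilon$-stationarity in the $\delta$-coordinate of Definition 1 via the standard equivalence between a small projected-gradient norm and the variational inequality $\min_{\delta \in B(\delta_{t^*},1)} \langle \nabla_\delta f(\delta_{t^*}, \theta_K(\delta_{t^*})), \delta-\delta_{t^*}\rangle \geq -\epsilon$. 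The $\theta$-coordinate of the FOSP condition is then immediate: the inner-loop bound plus \Cref{ass:minmaxpl} gives $\|\nabla_\theta f(\delta_{t^*},\theta_K(\delta_{t^*}))\| \leq \epsilon$, which yields the second inequality in \Cref{def:fosp}.

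The hard part will be the careful accounting between $K$, $M$, and the two step sizes so that the inner bias does not dominate the outer stochastic term, and establishing smoothness and Lipschitzness of $\Phi$ and $\theta^*(\cdot)$ without strong concavity; here \Cref{ass:rsi} is doing real work, since a bare PL condition would not provide the uniqueness and quantitative stability of $\theta^*(\delta)$ that are needed to move between $\nabla_\delta f$ at $\theta_K(\delta_t)$ and $\nabla \Phi(\delta_t)$. Once those pieces are in place, the iteration count $N(\epsilon) \sim \mathcal{O}(\epsilon^{-2})$ matches the classical rate of SGD on smooth nonconvex objectives with a constant-variance gradient oracle, and $K(\epsilon) \sim \mathcal{O}(\log \epsilon^{-1})$ is the PL rate for the inner loop, consistent with the statement of \Cref{thm:intermittent}.
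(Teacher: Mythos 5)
Your overall architecture matches the paper's: a Danskin-type identity $\nabla\Phi(\delta)=\nabla_\delta f(\delta,\theta^*(\delta))$ with $\Phi$ Lipschitz-smooth (\Cref{lemma:phi_diff}), a linearly convergent inner loop that makes $\nabla_\delta f(\delta_t,\theta_K(\delta_t))$ an inexact oracle for $\nabla\Phi(\delta_t)$ (\Cref{lem:zt}), and a standard nonconvex projected-SGD telescoping argument on $\Phi$ for the outer loop. However, there is one genuine misstep: you claim that ``a bare PL condition would not provide the uniqueness and quantitative stability of $\theta^*(\delta)$'' and therefore invoke \Cref{ass:rsi} to obtain a Lipschitz selection $\delta\mapsto\theta^*(\delta)$ and the smoothness of $\Phi$. \Cref{thm:intermittent} is stated under \Cref{ass:minmaxpl} alone (RSI is reserved for Persistent Attack), and the paper's proof of \Cref{lemma:phi_diff} gets exactly the stability you need from PL by itself: PL implies quadratic growth with respect to the distance to the argmax \emph{set}, so taking $\theta_2$ to be the projection of $\theta_1$ onto $\argmax_\theta f(\delta_2,\theta)$ and combining quadratic growth with $\|\nabla_\theta f(\delta_2,\theta_1)\|\le L_{12}\|\delta_1-\delta_2\|$ yields $\|\theta_1-\theta_2\|\le \frac{L_{12}}{2\mu}\|\delta_1-\delta_2\|$ without any uniqueness of the maximizer. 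Your proof as written therefore establishes the theorem only under the strictly stronger hypothesis RSI, i.e.\ a weaker result than claimed; the fix is to replace the RSI-based stability argument with the PL-plus-quadratic-growth argument on the solution set.

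A second, quantitative point: you drive $\mathbb{E}\|\theta_K(\delta_t)-\theta^*(\delta_t)\|^2$ below $\epsilon^2$, which makes the gradient bias $\|z_t\|=\|\nabla_\delta f(\delta_t,\theta_K(\delta_t))-\nabla\Phi(\delta_t)\|$ of order $\epsilon$. Feeding an $O(\epsilon)$ bias into the outer telescoping sum produces an additive $O(\epsilon)$ term in the \emph{squared} stationarity measure, i.e.\ only $O(\sqrt{\epsilon})$-stationarity. The paper's \Cref{lem:zt} instead forces $\mathbb{E}\|z_t\|=O(\epsilon^2)$ (which is why $M\sim\mathcal{O}(\epsilon^{-4})$ appears and why $K$ carries a factor of $4\log\epsilon^{-1}$), so that $D\sum_{t}\mathbb{E}\|z_t\|$ stays of order $N\epsilon^2$. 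You flag the accounting as the hard part, and since the inner loop is geometric this tightening costs only constants, but as written your constants do not close to give an $\epsilon$-FOSP.
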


\textbf{Nonasymptotic Analysis of Persistent Attack}\label{sec:analysis_persistent}
Although \Cref{ass:minmaxpl} suffices for proving the optimality of Intermittent Attack, it is rather a weak condition when dealing with Persistent Attack. The gap is that in Persistent Attack, the attacker's gradient update does not utilize the exact gradient $\nabla\Phi(\delta_t)=\nabla_\delta f(\delta_t,\theta^*_t)$ or its approximation $\nabla_\delta f(\delta_t,\theta_K(\delta_t))$, instead, $\nabla_\delta f(\delta_t,\theta_t)$ is applied. To control the difference $d_t=\|\theta^*_t-\theta_t\|$, we show in \Cref{app:per_proof} that $d_t$ exhibits a linear contraction using \Cref{ass:rsi}, leading to the convergence result stated as follows. 

\begin{theorem}\label{thm:persistent}
Under \Cref{ass:rsi} and  regularity conditions, for any given $\epsilon\in (0,1)$, let the step sizes $\eta_1,\eta_2$ as well as the batch size $M$ be properly chosen (see \Cref{app:per_proof}),  then if $N\geq N(\epsilon)\sim \mathcal{O}(\epsilon^{-2})$,  there exists an index $t$, such that $\{\delta_t,\theta_t\}$ generated by \Cref{algo:persist} is an $\epsilon$-FOSP. 
\end{theorem}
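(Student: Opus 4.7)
The plan is to carry out a two-timescale stochastic approximation analysis. The pivotal quantity is the tracking error $d_t := \|\theta^*_t - \theta_t\|$, where $\theta^*_t := \theta^*(\delta_t)$ is the (assumed unique) maximizer of $f(\delta_t, \cdot)$. Since Persistent Attack uses $\nabla_\delta f(\delta_t, \theta_t)$ in place of $\nabla\Phi(\delta_t) = \nabla_\delta f(\delta_t, \theta^*_t)$, the entire convergence argument rests on showing that $d_t$ remains small in expectation despite the fact that both $\theta_t$ and the target $\theta^*_t$ move every iteration.

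The first step is to derive a one-step linear contraction for $d_t$. Applying the learner's ascent step and using Assumption~\ref{ass:rsi} together with the smoothness assumptions collected in Appendix~\ref{app:regular}, I expect a recursion of the form
\begin{align*}
\E[d_{t+1}^2] \le (1 - c_1 \eta_1)\, \E[d_t^2] + c_2 \eta_1^2 \sigma^2 / M + c_3\, \E[\|\theta^*_{t+1} - \theta^*_t\|^2],
\end{align*}
where $\sigma^2$ bounds the variance of $\widetilde{\nabla}^i_\theta MC$. The first term is the RSI-induced contraction, the second absorbs stochastic gradient noise (controlled by the batch size $M$), and the third captures the drift of $\theta^*_{\cdot}$ as $\delta$ is updated. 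By the Lipschitz property of $\theta^*(\cdot)$ (a standard consequence of RSI plus smoothness of $f$), this drift is $O(\eta_2^2 \|\nabla^i_\delta MC\|^2)$. Choosing $\eta_2 \ll \eta_1$, as mandated by the two-timescale principle, then yields a uniform bound $\E[d_t^2] \le c_4 (\eta_1 \sigma^2/M + \eta_2^2/\eta_1)$.

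The second step is a descent argument for the attacker's outer objective $\Phi(\delta) = \max_\theta f(\delta, \theta)$. Using smoothness of $\Phi$ (inherited from $f$ via the Lipschitzness of $\theta^*(\cdot)$) and the decomposition
\begin{align*}
\nabla\Phi(\delta_t) - \E[\nabla^i_\delta MC \mid \delta_t, \theta_t] = \nabla_\delta f(\delta_t, \theta^*_t) - \nabla_\delta f(\delta_t, \theta_t),
\end{align*}
bounded in norm by $L_{12} d_t$ via a cross-Lipschitz constant, I would obtain
\begin{align*}
\E[\Phi(\delta_{t+1})] \le \E[\Phi(\delta_t)] - \frac{\eta_2}{2} \E[\|\nabla\Phi(\delta_t)\|^2] + O(\eta_2 L_{12}^2 \E[d_t^2]) + O(\eta_2^2).
\end{align*}
Summing from $t=0$ to $N-1$, telescoping $\Phi(\delta_0) - \min_\delta \Phi$, and substituting the uniform tracking bound from the first step produces $\min_{t<N} \E[\|\nabla\Phi(\delta_t)\|^2] \le O(1/(N\eta_2) + \eta_2 + \eta_1/M + \eta_2^2/(\eta_1 \eta_2))$. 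Balancing the step sizes and batch size per the prescription in Appendix~\ref{app:per_proof} makes the right-hand side $O(\epsilon^2)$ as soon as $N \ge N(\epsilon) \sim \mathcal{O}(\epsilon^{-2})$. A standard translation of a small $\|\nabla\Phi(\delta_t)\|$ into the projected stationarity gap $\min_{\delta\in B(\delta_t,1)}\langle \nabla_\delta f(\delta_t,\theta_t), \delta - \delta_t \rangle \ge -\epsilon$ yields the $\delta$-side of Definition~\ref{def:fosp}, while RSI combined with $\E[d_t^2] = O(\epsilon^2)$ delivers the $\theta$-side via $\|\nabla_\theta f(\delta_t,\theta_t)\| \le L\, d_t$.

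The main obstacle is the drift control in the first step. Because $\theta^*_t$ is a moving target that shifts on the same schedule as $\delta_t$, the linear contraction from RSI must simultaneously dominate gradient-estimation noise and the self-induced drift; this is what forces both the two-timescale separation $\eta_2 \ll \eta_1$ and a delicately chosen batch size $M$. Getting the exponents right so that the final rate is exactly $\mathcal{O}(\epsilon^{-2})$, rather than a slower polynomial in $\epsilon^{-1}$, requires careful bookkeeping of how $M, \eta_1, \eta_2$ interact in the bound on $\E[d_t^2]$ and in the descent inequality for $\Phi$.
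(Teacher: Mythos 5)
Your overall architecture is the same as the paper's: control the tracking error $d_t$ between $\theta_t$ and the moving maximizer $\theta^*(\delta_t)$ via an RSI-induced linear contraction, feed the resulting bias bound $\|\nabla\Phi(\delta_t)-\nabla_\delta f(\delta_t,\theta_t)\|\le L_{12}\sqrt{d_t}$ into a smooth-descent inequality for $\Phi$, and telescope. This matches the paper's Lemmas on the contraction of $d_t$ and the descent of $\Phi$ almost step for step, including the use of the Lipschitz continuity of $\theta^*(\cdot)$ (derived from PL/RSI plus gradient Lipschitzness) to bound the drift of the target.

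There is, however, a genuine quantitative gap in how you close the recursion for $d_t$, and it is exactly the point you flag as ``careful bookkeeping'' without resolving it. You collapse the recursion into a uniform, $t$-independent bound of the form $\E[d_t^2]\le c_4(\eta_1\sigma^2/M+\eta_2^2/\eta_1)$ by treating the drift $\E[\|\theta^*_{t+1}-\theta^*_t\|^2]$ as $O(\eta_2^2)$ times a constant. After dividing the telescoped descent inequality by $N\eta_2$, this leaves an additive floor of order $\eta_2^2/\eta_1$ (or $\eta_2/\eta_1$ in your own displayed bound) that does not shrink with $N$ or $M$. Forcing that floor to be $O(\epsilon^2)$ requires $\eta_2=O(\epsilon)$ (or $O(\epsilon^2)$), and then the $1/(N\eta_2)$ term forces $N=\Omega(\epsilon^{-3})$ or worse --- not the claimed $\mathcal{O}(\epsilon^{-2})$. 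Moreover, the $\theta$-side of the FOSP needs $\E[d_t^2]=O(\epsilon^2)$ at the selected index, which your uniform bound cannot deliver with constant step sizes. The paper avoids this by \emph{not} bounding the drift by a constant: it writes $\E[\|\delta_t-\delta_{t-1}\|^2]\le 4\bar L^2\eta_2^2 d_{t-1}+4\eta_2^2\E[\|\nabla\Phi(\delta_{t-1})\|^2]+2\sigma^2\eta_2^2/M$, keeps the $\E[\|\nabla\Phi(\delta_{t-1})\|^2]$ term inside the $d_t$ recursion, unrolls it so that $\sum_t d_t$ is bounded by a convergent geometric sum plus $O(\kappa^4\eta_2^2)\sum_t\E[\|\nabla\Phi(\delta_t)\|^2]$ plus $O(\kappa N\sigma^2/M)$, and then absorbs the $\sum_t\E[\|\nabla\Phi(\delta_t)\|^2]$ contribution into the negative descent term using $\eta_2\le 1/(8\kappa^2\bar L)$. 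This is what permits \emph{constant} (though separated) step sizes $\eta_1=\mu/L_{22}$, $\eta_2=\min\{1/32L,\,1/(8\kappa^2\bar L)\}$ and hence $N\sim\mathcal{O}(\epsilon^{-2})$ with $M\sim\mathcal{O}(\epsilon^{-2})$. Without this coupling of the drift to the gradient norm, your argument as written does not reach the stated rate.
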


\section{Numerical Experiments}\label{sec:num}
This section presents experimental evaluations of the proposed attack schemes for ISA and CSA. We consider two benchmark meta learning tasks, including a 2D-navigation task and a more complex locomotion problem: Half-cheetah (H-C) \citep{finn2017model}. The following briefly introduces the two tasks, and more details can be found in \Cref{app:meta_task}. In the 2D-navigation task, starting from a fixed initial position [the black triangle in \Cref{fig:experi}(a)] in a 2D plane, a point agent [the red dot in \Cref{fig:experi}(a)] needs to move to a goal position [the green star in \Cref{fig:experi}(a)] randomly sampled from a given task distribution. In the H-C task, a planar cheetah [as shown in \Cref{fig:experi}(c)] is required to run at a particular velocity sampled from the task distribution.

\textbf{Experimental Setup} Hyperparameter setup for the two algorithms is detailed in \Cref{app:setup}. When evaluating the corrupted meta policy in the testing phase, we compare the adaptation performance [$L(\theta)$ defined in \eqref{eq:metarl}] of meta policies learned under different attack settings and the benchmark setting without attacks.  One-step policy adaption is considered throughout the paper. Additional experiments on multi-step adaption can be found in \Cref{app:add_exp}. As in \citep{finn2017model}, the evaluation metric is the average of cumulative rewards of policies adapted from the meta policy.

\textbf{Experiment Results} \Cref{fig:experi} presents an example of the poor adaptation of the corrupted meta policies and the convergence of two online attack schemes under ISA in a single experiment. In \Cref{fig:experi}(a), when using the policy adapted from the meta policy, the agent is misled into searching the lower-right corner [as the search paths concentrate on that corner], opposite to the true goal. Similarly, with a small perturbation $\delta=0.967$, the cheetah agent fails to learn the desired running-forward policy as shown in \Cref{fig:experi}(c). In this example, it is observed in \Cref{fig:experi}(b)(d) that both attack schemes converge. Intermittent Attack takes fewer iterations since its gradient estimation is more accurate as we discussed in \Cref{sec:analysis_intermittent}. To demonstrate the impact of sampling attacks, the first two columns in \Cref{tab:num_res} summarizes the average rewards (and their standard deviations) of the policies after one-step adaptation from the meta policies corrupted by different attacks.
\begin{table}
\centering
\caption{Experimental evaluations of Intermittent Attack and Persistent Attack under ISA and CSA as well as robust training under ISA. Average cumulative rewards (and standard deviations) are  chosen as the evaluation metric (higher the better), which is the sample mean of $L(\theta)$, $\theta$ being the meta policies corrupted by different attacks. As shown in the first two columns, Intermittent and Persistent Attacks significantly deteriorate the adaption performance of meta policies. Unlike Persistent Attack ($\eta_1=0.1> 0.01=\eta_2$), in the robust training, the learner's step size $\eta_1=0.01$ is smaller that the attacker's $\eta_2=0.1$. The decreased step size makes the meta leanring process robust to attacks.}
	\begin{tabular}{cccccc}
	\toprule 
	& & Intermittent Attack & Persistent Attack & Robust Training & Benchmark\\
	\midrule
	\multirow{2}{*}{2D}& ISA& $-30.06\pm 7.37$& $-298.77\pm 34.09$& $-11.50\pm 0.57$ &\multirow{2}{*}{$-10.92\pm0.85$}\\ 
	\cmidrule{2-5}
	& CSA &$-155.84\pm 3.96$ & $-488.97\pm 31.06$&  $---$&\\
	\midrule
	\multirow{2}{*}{H-C}& ISA & $-129.25\pm 12.69$&$-101.13\pm 10.57$& $-63.26\pm 8.69$ &\multirow{2}{*}{$-63.41\pm 6.86$}\\
	\cmidrule{2-5}
	& CSA & $-98.92\pm 10.85$& $-93.65\pm 9.81$& $---$ &\\
	\bottomrule
	\end{tabular}\label{tab:num_res}  
\end{table}   

\textbf{Robust Training against Sampling Attacks} In Persistent Attack, the attacker and the learner operate in a two-timescale manner. Intuitively, the learner first achieves the inner maximization using a larger step size. On the contrary, if the learner's step size is smaller than the attacker's, then the attacker would first achieve the minimization. In this case, the Stackelberg game becomes a maxmin problem $\max_{\theta\in \R^d}\min_{\delta\in \Delta}f(\delta,\theta)$, inspiring a robust training method. During the meta learning process, the learner can decrease $\eta_1$ when it is suspicious of possible attacks. The obtained meta policy returns the best possible adaptation performance when the training data is corrupted. In one of our experiments, when $0.01=\eta_1< \eta_2=0.1$, it is observed that the meta policy is robust to Persistent Attack under ISA, as the adaptation performance is close to the benchmark. The column ``Robust Training'' in \Cref{tab:num_res} compares the performance of the meta policies trained under Persistent Attack [$(\eta_1, \eta_2)=(0.1,0.01)$], Robust Training [$(\eta_1, \eta_2)=(0.01,0.1)$], and the benchmark.  
       
\begin{figure*}
\centering
	\subfigure[The failure of 2D Navigation task under $\delta=-12.96$. ]{\includegraphics[width=0.23\textwidth]{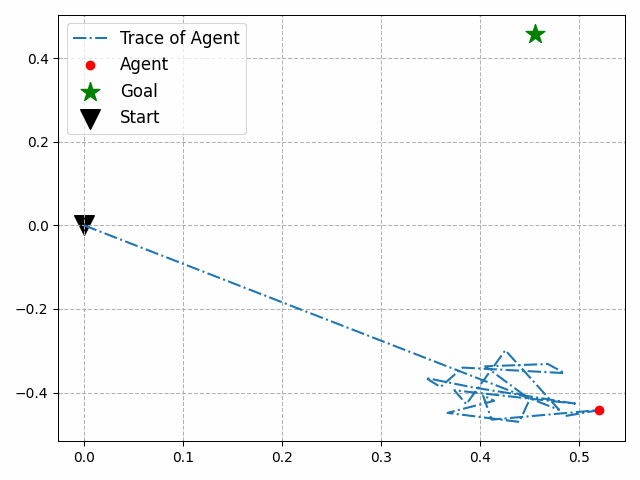}}
	\hfill
	\subfigure[Convergence of $\delta$ under two attacks schemes in 2D navigation.]{\includegraphics[width=0.25\textwidth]{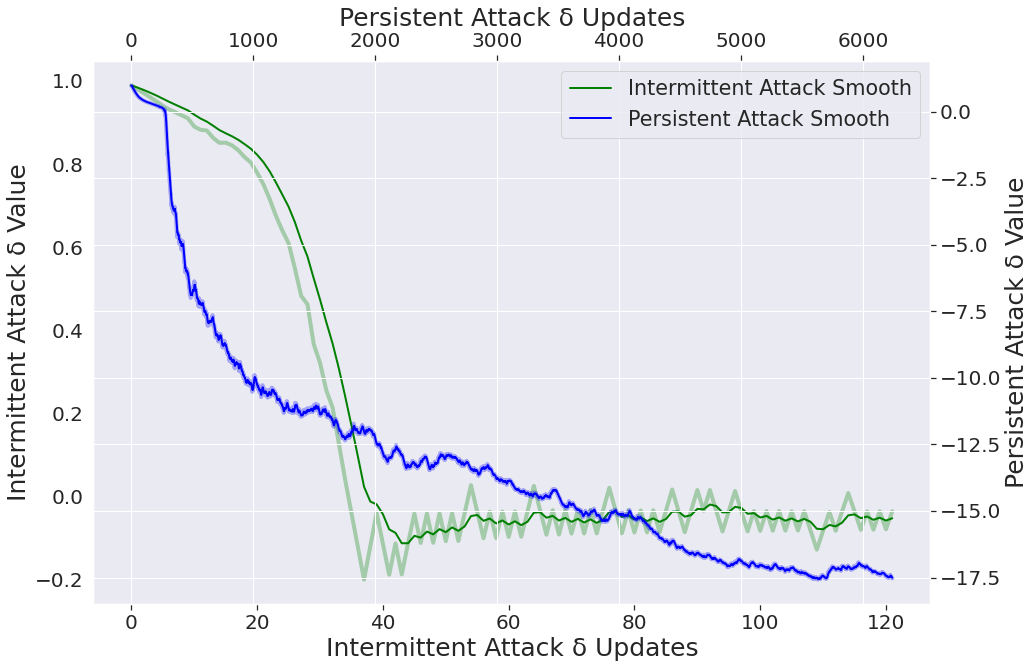}}
	\hfill
	\subfigure[The failure of Half-cheetah task when $\delta=0.967$.]{\includegraphics[width=0.23\textwidth]{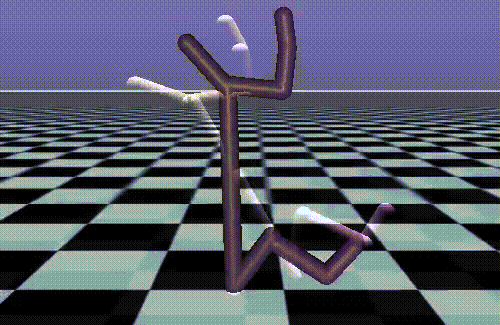}}
	\hfill
	\subfigure[Convergence of $\delta$ under two attacks schemes in Half-cheetah.]{\includegraphics[width=0.25\textwidth]{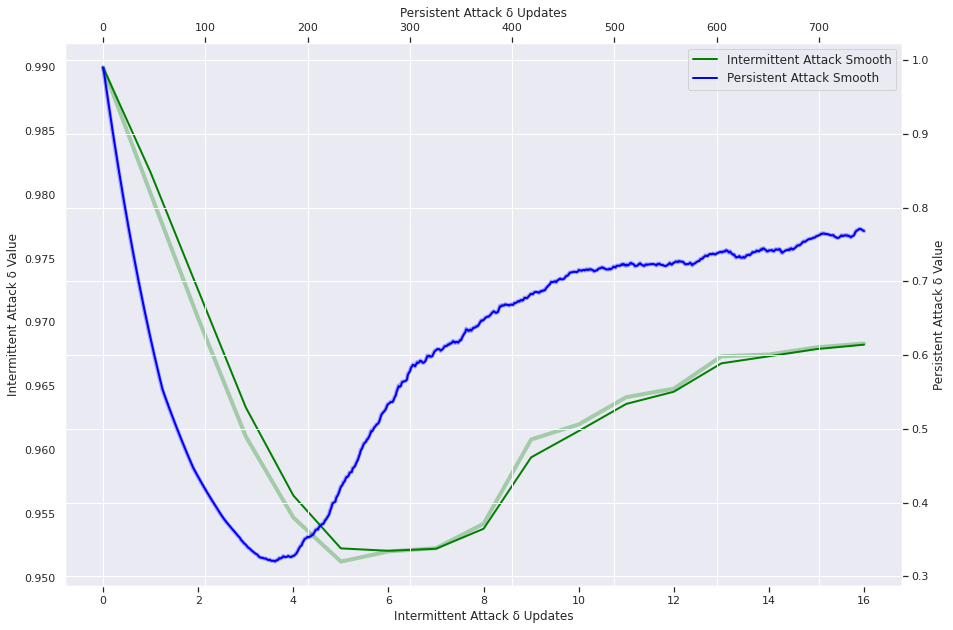}}
	\caption{Experimental results of meta RL agent in 2D navigation task and Half-cheetah task under Intermittent and Persistent Attack under ISA. (a)(c) demonstrates the poor performance of policies adapted from the corrupted meta policy. In (a), the agent is misled to search the lower-right corner, opposite to the true goal. In (c), the agent fails to run forward using the policy adapted from the meta policy. (b)(d) shows that Intermittent Attack need fewer iterations to converge than Persistent Attack.} 
	\label{fig:experi}
\end{figure*}

\section{Discussion}\label{sec:dis}
This work formally defines three sampling attack models on meta RL in the training phase. The optimal sampling attack is formulated as the solution to a minimax problem, which leads to two online black-box attack mechanisms: Intermittent Attack and Persistent Attack. Experiments show that simply scaling the reward feedback during the training, the attacker can significantly deteriorate the adaptation performance of the learned meta policy. 

The main limitation of the proposed online attack schemes is that with the noised gradient feedback, the attacker searches for the $\epsilon$-FOSP (see \Cref{def:fosp}), which is a local notion.  Since the objective function $f(\delta,\theta)$ is nonconvex-nonconcave, it is challenging to characterize these stationary points and compare the corresponding objective values with the global optimum. Hence, quantifying the impact of sampling attacks theoretically (e.g., the performance loss or robustness) remains an open problem. Finally, we comment on the potential negative societal impacts of the proposed sampling attacks. As meta RL has been applied to many real-world applications \citep{meta_survey}, the  stealthy and lightweight sampling attack schemes pose a great threat to these learning systems. Our robust training idea points out a promising way to combat the attack, yet more efforts are needed for a systematic investigation of defense mechanisms.

\bibliographystyle{abbrvnat}
\bibliography{ref}

\newpage

\appendix

\section{Practicability of Sampling Attacks}
\subsection{Infiltration through Advanced Persistent Threats}\label{app:prac}
 In reality, sampling attacks can be achieved through Advanced Persistent Threats \citep{zhu18apt}, a class of multi-stage stealthy attack processes.  In the first stage, the attacker embeds malware in neural network models by replacing the model parameters with malware bytes \cite{liu20stegonet}.  Then, the malware is delivered covertly along with the model into the system in Figure 1. In the second stage, the malware performs reconnaissance and tailors its attack to the communication channel between the simulator and the sampler. In the final stage, the malware launches sampling attacks using either Intermittent Attack (see \Cref{algo:intermit}) or Persistent Attack (see \Cref{algo:persist}). When implementing sampling attacks, the attacker stands between the simulator and the learner, cutting off the information transmission between the two and sending the learner manipulated reward feedback to mislead the learner.

\subsection{Justification of the Reward Manipulation} \label{app:justify} 
When it has covertly infiltrated the learning system, the attacker can corrupt any component within the learning system in \Cref{fig:sample_attack}. This work considers sampling attacks, a particular case of man-in-the-middle attacks, where the attacker interrupts an existing conversation or data transfer between the simulator and the sampler. Sampling attacks only require sample trajectories under the current policy model to sabotage the learning process. In contrast, if the attacker were to control the simulator or other components of the system, it requires knowledge of how these components are built and operate. Therefore, sampling attacks are more lightweight and stealthy, requiring less knowledge of the learning system. 

The attacker can manipulate everything in sample trajectories during sampling attacks, including states, actions, and rewards. We choose reward manipulation as the first step of the investigation because rewards are one-dimensional scalars, and they are more vulnerable to adversarial manipulations (state and action manipulations may require more advanced devices \citep{zhang2020robust,russoACC21state_perturb} than simply linear scaling). Its analysis leads to sufficient insights without losing the generality of sampling attacks, since the Stackelberg formulation also applies to state and action manipulations.
\subsection{Attack Budgets}\label{app:budget}
The attack budget aims to limit the attacker's ability to sabotage the learning process; otherwise, it is not surprising that an omnipotent attacker can compromise the learning system of interest. Previous works mainly study two kinds of attack budgets. The first one is to restrict the amount of sample data the attacker can attack \citep{zhang21npg,yunhan19,ma2019policy} which we refer to as the volume constraint. This volume constraint is often considered in offline RL or batch RL scenarios where the training dataset is prefixed \citep{zhang21npg,ma2019policy}. The other one is to confine the attacker's perturbation to a limited magnitude \citep{zhang2020robust,russoACC21state_perturb,zhang2020adaptive}, which we refer to as the magnitude constraint. 

This work considers the magnitude constraint as in other studies on online attacks \citep{zhang2020robust,zhang2020adaptive}. Note that the proposed Intermittent and Persistent Attacks can accommodate the volume constraint by restricting the number of sample trajectories the attacker can poison. In this case, our Stackelberg formulations are still valid, yet the optimality analysis needs refinements as the gradient-based analysis is not directly available. The associated optimality analysis is beyond the nonconvex-nonconcave programming framework and remains an open problem.     

\section{Relevance to Broader Meta RL research}\label{app:relevance}
This work chooses a well-accepted meta RL framework \citep{finn2017model,fallah_sgmrl} and studies associated sampling attacks. In particular, our convergence analysis in \Cref{sec:analysis_intermittent} is built on the theoretical results regarding SG-MRL established in \citep{fallah_sgmrl}. 

Picking MAMRL serves as a starting point, and our findings can apply to many other variants of meta RL algorithms.  The proposed minimax formulation in this work remains valid for other non-MAMRL-based algorithms. Recall that in (MMP), the max part corresponds to the objective in MAMRL. We can replace this objective with its counterparts using other meta RL formulations, such as recurrent-network-based ones \citep{16learnRL,abbeel16rl2}. Similar to MAMRL, meta RL methods in \citep{16learnRL,abbeel16rl2} utilize on-policy data during both meta-training and adaptation. In this case, the attacker can also leverage these on-policy data to estimate its gradient feedback. Hence, Intermittent and Persistent Attacks also apply to these meta RL frameworks.  

On the other hand, our online attack schemes need refinements when dealing with meta RL methods based on off-policy learning \citep{Fakoor2020,rakelly19a}. In off-policy learning, such as Q-learning \citep{Watkins:1992jx} and its variants \citep{silver16go,tao_multiRL,tao_blackwell}, sample trajectories are collected using a behavior policy different from the target policy (the optimal policy). In this case, the attacker may not be able to assess its attack impact on the target policy if it can only access sample trajectories under the behavior policy. One possible remedy would be to reveal to the attacker some side information on the importance ratio between the target policy and the behavior policy \citep{bannon2020causality} so that the target policy's performance can be estimated using off-policy data. We leave this topic as future work. 

\section{Experiment Details}\label{app:experiment}
\subsection{Meta Reinforcement Learning Tasks}\label{app:meta_task}
\paragraph{2D Navigation} Consider in a 2D plane, a point agent must move to different goal positions that are sampled according to a task distribution. Goals are picked from a unit square, centered at the origin with length and width ranging from [-0.5, 0.5]. The state observation is the current 2D position, a 2-dimensional vector, and actions correspond to velocity commands clipped to the range [-0.1, 0.1]. The reward is the negative squared distance to the goal, and iterations terminate when the agent is within 0.01 of the goal or at the horizon of 100 steps. The discounting factor is $0.99$.
\paragraph{Half-Cheetah} In the half-cheetah goal velocity problem, a planar cheetah is required to run at a particular velocity, randomly sampled from $[0,2]$. The state variable is a 17-dimensional vector, representing the position and velocity of the cheetah's joints, and actions correspond to continuous momentum of six leg joints. The transition follows the physical rule simulated by MuJoCo \cite{mujoco}, and the reward is defined as negative absolute value between the current velocity of the agent and a goal, which is chosen uniformly at random from [0, 2]. The horizon length is 200. The discounting factor is $0.99$. 

 
\subsection{Experimental Setup} \label{app:setup}
The attack program is built on the meta RL program developed in \citep{mehta20curriculum}. We modify the original meta RL program to support parallel computing using graphics cards, saving a significant amount of training time. All the experiments are conducted using a Linux (Ubuntu 18.04) desktop with 4 Nvidia RTX8000 graphics cards and an AMD Threadripper 3990X (64 Cores, 2.90 GHz) processor. The experiments use a feedforward neural network policy with two 64-unit hidden layers and tanh activiations. The rest of this subsection summarizes the experimental setup of the meta training and sampling attack schemes. 

\paragraph{Meta RL Training} 
Recall that in \Cref{algo:sg-marl}, a batch of i.i.d. tasks are first sampled from the task distribution. This batch of tasks is referred to as the meta batch \citep{finn2017model,fallah_sgmrl}. Then for each task $\mathcal{T}_i$, the sampler returns two batches of trajectories: the inner batch $\mathcal{D}_i^1$ and the outer batch $\mathcal{D}_i^2$. The batch sizes (the number of trajectories/ episodes) of the inner and the outer batch are the same in the experiments. This size is referred to as the batch size. 

In the training phase, the total iteration number is 1000 for 2D navigation (2D) and half-cheetah (H-C) tasks, and the meta batch size is 20 for 2D and 40 for H-C. For both tasks, the batch size is 20 episodes. The learning rate is $\eta_1=0.1$. The policy gradient is obtained through the generalized advantage estimator (GAE) \citep{gae}, which is a refinement of the vanilla policy gradient reviewed in \Cref{sec:pre}. Unlike the original policy gradient estimator $\nabla J(\theta)=\E_{\tau \sim q(\cdot;\theta)}[\sum_{h=1}^H \nabla_\theta \log \pi (a_h|s_h;\theta)R^h(\tau)]$, the GAE estimator takes the following form
\begin{align*}
	\nabla^{GAE} J(\theta)&=\E_{\tau \sim q(\cdot;\theta)}[\sum_{h=1}^H \nabla_\theta \log \pi (a_h|s_h;\theta)A(s_h)],\\
	&=\nabla_\theta \E_{\tau \sim q(\cdot;\theta)}[\sum_{h=1}^H \log \pi (a_h|s_h;\theta)A(s_h)],
\end{align*}    
where the advantage function $A(\cdot)$ is defined as $A(s_h):=R^h(\tau)-\hat{V}(s_h)$, and $\hat{V}(\cdot)$ is the baseline function. $\E_{\tau \sim q(\cdot;\theta)}[\sum_{h=1}^H \log \pi (a_h|s_h;\theta)A(s_h)]$ (its MC estimation) is referred to as the reinforce loss function (empirical reinforce loss), and this loss is the optimization objective maximizing the probability of taking good actions. In short, the higher the reinforce loss, the better policy is. The introduction of the advantage function helps reduce the estimation variance in the RL training. The reader is referred to \citep{gae} for more details on computing the baseline function and interpretations of the advantage function.

\paragraph{Sampling Attacks}
For both Intermittent (\Cref{algo:intermit}) and Persistent (\Cref{algo:persist}) Attacks, the learner's meta learning process uses the hyperparameters specified above, albeit the inner batch and the outer batch are poisoned according to sampling attack schemes. Unless otherwise specified, the learner's and the attacker's learning rates in Intermittent Attack are $\eta_1=0.1, \eta_2=0.1$, respectively, and $\eta_1=0.1, \eta_2=0.01$ in Persistent Attack. In sampling attack experiments, the  regularization $|\delta-1|$ is incorporated into the \eqref{eq:minimax_attack} objective function.
\paragraph{Testing}
To evaluate the quality of the learned meta policy, we conduct test experiments. In the experiment, a batch of tasks is first sampled. The meta policy first perform the gradient adaptation using the inner batch in each sampled task. Then, the adaptation performance $L(\theta)$ is given by the averaging cumulative rewards of episodes from the outer batch of every sampled task. Note that in the testing phase, no attacks are allowed. For a single meta policy, We repeat the test experiment with 10 random seeds and report mean rewards and standard deviations of these experimental results in \Cref{tab:num_res} and other tables in the following subsection.  
\subsection{Additional Experiment Results}\label{app:add_exp}
\paragraph{The intuition behind Sampling Attacks}
By scaling the reward signals by $\delta$, the attacker distorts the learner's perception of its reinforce loss, misleading the meta learning process. This distortion can be explained using the reinforce loss plots in \Cref{fig:compare}. We choose one representative experiment for each meta task under ISA and plot three kinds of reinforce loss functions (under smoothing) reviwed in the paragraph ``Meta Training''. The benchmark loss (red curves) is obtained by averaging all empirical reinforce loss (ERL) of policies adapted from the meta policy during the training phase in the benchmark setting (no attack). Denote by $\theta_t$ the meta policy at the $t$-th iteration in the benchmark training, the benchmark loss at time $t$ is given by  
\begin{align*}
	\text{Bechmark Loss} =&  \frac{1}{|\mathcal{I}|}  \sum_{i\in \mathcal{I}}\operatorname{ERL}(\theta_t^i),\quad  \theta_t^i = \theta_t+ \alpha \widehat{\nabla}J_i(\theta_t, \mathcal{D}_i^1)\\
	 \operatorname{ERL}(\theta_t^i):=&\frac{1}{|\mathcal{D}_i^2|}\sum_{\tau\in \mathcal{D}_i^2}[\sum_{h=1}^H \log \pi (a_h|s_h;\theta_t^i)A(s_h)], \quad (s_h,a_h)\in \tau,
\end{align*}        
where the inner batch $\mathcal{D}_i^i$ the outer batch $\mathcal{D}_i^2$ are sampled using $\theta_t$ and $\theta_t^i$, respectively (see \Cref{algo:sg-marl}). ERL is the MC estimation of the reinforce loss. 

The benchmark loss shows the training progress of meta RL in a neutral environment and serves as the baseline. To compare the training progress under adversarial attacks with this baseline, we introduce two other loss functions. Denote by $\hat{\theta}_t$ the meta policy obtained under ISA and by $\delta$ the attacker's manipulation, then the attacked loss (green curves for Intermittent Attack and blue curves for Persistent Attack) and the actual loss (yellow curves) are given by 
\begin{align*}
	\text{Attacked Loss} &= \frac{1}{|\mathcal{I}|}  \sum_{i\in \mathcal{I}}\operatorname{ERL}(\tilde{\theta}_t^i),\quad  \tilde{\theta}_t^i = \hat{\theta}_t+ \alpha\cdot \delta \widehat{\nabla}J_i(\theta_t, \mathcal{D}_i^1), \\
	\text{Actual Loss} & = \frac{1}{|\mathcal{I}|}  \sum_{i\in \mathcal{I}}\operatorname{ERL}(\hat{\theta}_t^i),\quad  \hat{\theta}_t^i = \hat{\theta}_t+ \alpha \widehat{\nabla}J_i(\theta_t, \mathcal{D}_i^1).
\end{align*} 
The attacked loss implies the learner's distorted perception of its training progress as its policy gradient is corrupted. The actual loss, on the other hand, reflects the true learning progress in the adversarial environment since the ERL of $\hat{\theta}_t^i$ reflects the current meta policy's adaptation performance under actual sample data.   

As shown in \Cref{fig:compare}, the agent mistakenly believes that it is making progress in the adversarial environment, as the green/blue curves go up (see Figures \ref{fig:compare}(a)(d)) or even above the benchmark (see Figures \ref{fig:compare}(b)). However, the actual performance of the learning either keeps deteriorating (see Figures \ref{fig:compare}(c)(d)) or remains at a modest level (see Figures \ref{fig:compare}(a)(b)).  
   
\begin{figure}
	\centering
	\subfigure[Reinforce loss under benchmark and Intermittent Attack in 2D navigation.]{\includegraphics[width=0.48\textwidth]{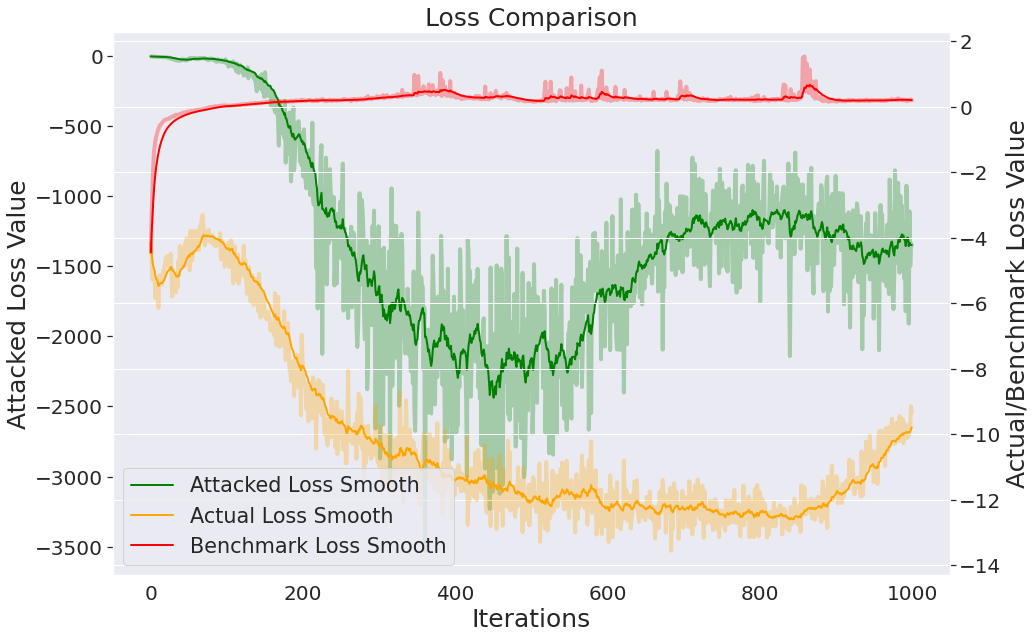}}
	\subfigure[Reinforce loss under benchmark and Persistent Attack in 2D navigation.]{\includegraphics[width=0.48\textwidth]{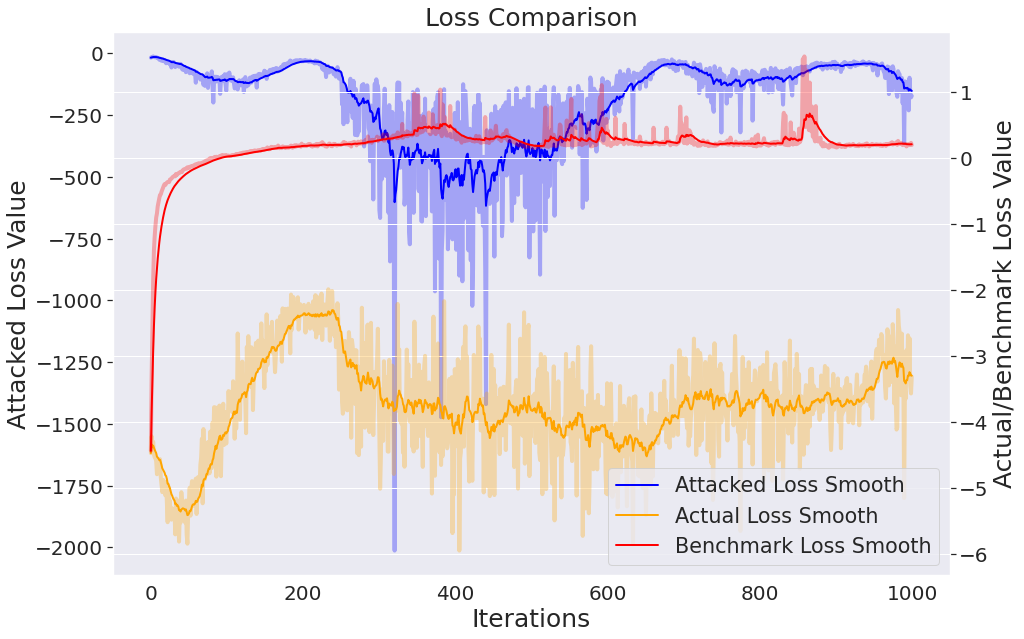}}
		\subfigure[Reinforce loss  under benchmark and Intermittent Attack in half-cheetah.]{\includegraphics[width=0.48\textwidth]{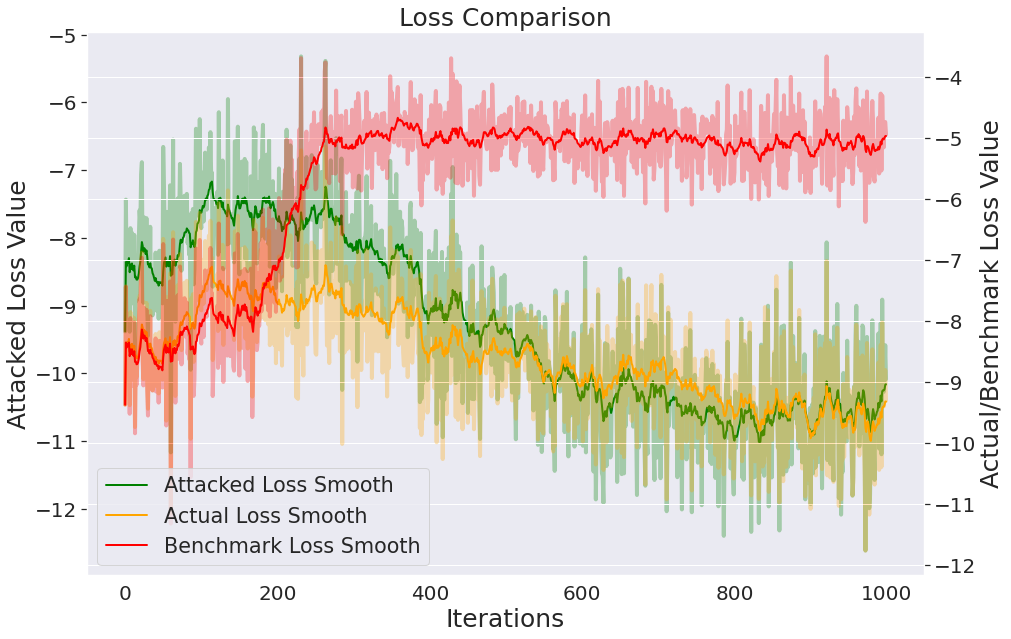}}
	\subfigure[Reinforce loss under benchmark and Persistent Attack in half-cheetah.]{\includegraphics[width=0.48\textwidth]{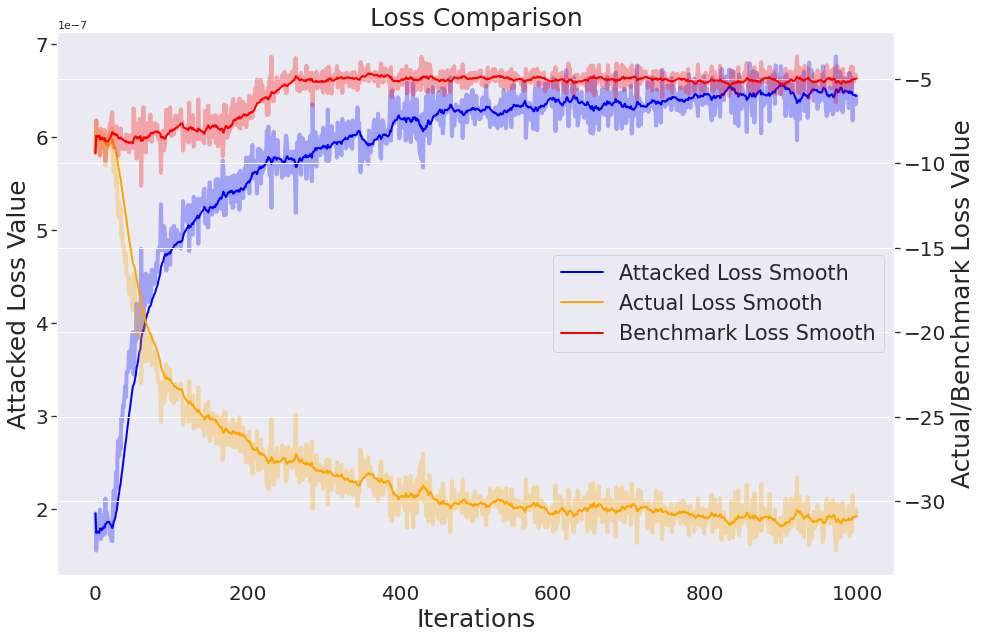}}
	\caption{Comparisons of the learner's empirical reinforce loss function under the benchmark and ISA settings.}
	\label{fig:compare}
\end{figure}

\paragraph{Additional Results on Intermittent and Persistent Attacks}
This subsection presents additional experimental results on sampling attacks. We begin with experiments on attacking meta RL with two-step policy gradient adaptation (the one-step adaptation case is in \Cref{sec:pre} ).  \Cref{algo:sg-marl2} summarizes the MAMRL algorithm using two-step policy gradient adaptation. Note that when using two-step policy adaption, the inner sampling process requires two batches of trajectories $\mathcal{D}_i^1$ and $\mathcal{D}_i^2$, both of which are attacked in the experiments.
\begin{algorithm}[]	
	\caption{SG-MARL-2}
	\begin{algorithmic}[1]\label{algo:sg-marl2}
		\State \textbf{Input} Initialization $\theta_0$, $t=0$, step size $\alpha,\eta_1$.
		\While{not converge}
		\State Draw a batch of i.i.d tasks $\mathcal{T}_i, i\in \mathcal{I}$ with size $|\mathcal{I}|=I$;
		\For{$i\in\mathcal{I}$ }
		\State Sample a batch of trajectories $\mathcal{D}^1_i$ under $q_i(\cdot;\theta_t)$;
		\State $\bar{\theta}^i_{t+1}=\theta_t+\alpha \hat{\nabla} J_i(\theta_t,\mathcal{D}^1_i)$ ;
		\State Sample a batch of trajectories $\mathcal{D}_i^2$ under $q_i(\cdot;\bar{\theta}_{t+1}^i)$;
		\State $\theta^{i}_{t+1}=\bar{\theta}^{i}_{t+1}+\alpha \hat{\nabla} J_i(\bar{\theta}_{t+1}^i, \mathcal{D}_i^2) $ 
		\State Sample a batch of trjectories $\mathcal{D}_i^3$ under $q_i(\cdot;\theta_{t+1}^i)$;
		\EndFor
		\State Compute $\nabla_\theta^i MC$ using $\mathcal{D}_i^3$;
		\State $\theta_{t+1}=\theta_t+(\eta_1/I) \sum_{i\in\mathcal{I}}\nabla^i_{\theta_t}{MC}$;
		\EndWhile
		\State \textbf{Return } $\theta_{t+1}$
	\end{algorithmic}
\end{algorithm}

 Similar to our findings in the one-step case, the proposed sampling attack can significantly deteriorate the two-step adaptation performance of meta policies. Since Intermittent and Persistent Attacks are based on stochastic gradient descent, and the objective function is nonconvex-nonconcave, the two attack schemes do not return the same optimal attack $\delta^*$ or the same corrupted meta policy in general. \Cref{tab:optimal_attack} summarizes optimal attacks $\delta^*$ under different attack settings.  Finally, we comment on the robust training idea. As discussed in \Cref{sec:num}, decreasing the learner's step size turns the minimax into maxmin. For example, the maxmin problem in ISA is 
 \begin{align*}
 	\max_{\theta\in \R^d}\min_{\delta\in \Delta} \E_{i\sim p}\E_{\mathcal{D}\sim q_i(\cdot ;\theta)}[J_i(\theta+\alpha\cdot \delta\hat{\nabla}J(\theta,\mathcal{D}))].
 \end{align*} 
 In this case, the learner aims to maximize its training performance under sampling attacks, which does not necessarily lead to the maximization of the testing performance. It is not guaranteed that the meta policy obtained through the robust training would achieve satisfying adaptation performance in the testing phase. \Cref{tab:robust_train} summarizes the robust training experiments where $\eta_1=0.01, \eta_2=0.1$. Some encouraging results (shown in bold in \Cref{tab:robust_train}) do appear in some experiments where the obtained meta policies achieve comparable adaptation performance as the benchmark ones. However, the robust training does not succeed\footnote{We run multiple robust training experiments in half-cheetah, yet not every obtained meta policy shows adaptation performance comparable to the benchmark ones.} in the half-cheetah task under CSA (the blank entries in \Cref{tab:robust_train}).  This maxmin training remains largely a heuristic, and more efforts are needed to investigate defense mechanisms for meta learning thoroughly.  
\begin{table}
\centering
\caption{Experimental evaluations of Intermittent Attack and Persistent Attack under ISA and CSA when meta RL uses two-step gradient adaptation. Average cumulative rewards (and standard deviations) are  chosen as the evaluation metric, which is the sample mean of $L(\theta)$, $\theta$ being the meta policies corrupted by different attacks. Intermittent and Persistent Attacks significantly deteriorate the two-step adaption performance of meta policies. }
	\begin{tabular}{ccccc}
	\toprule 
	Task & Attack Model & Intermittent Attack & Persistent Attack & Benchmark\\
	\midrule
	\multirow{2}{*}{2D}& ISA& $-33.75\pm 6.24 $& $-362.07\pm 28.97 $ &\multirow{2}{*}{$-6.41\pm 0.62$}\\ 
	\cmidrule{2-4}
	& CSA &$-141.45\pm 3.96 $ & $-151.20\pm 36.60 $&\\
	\midrule
	\multirow{2}{*}{H-C}& ISA & $-91.92\pm 10.85 $&$ -127.17\pm 8.04 $ &\multirow{2}{*}{$-61.86 \pm 8.10 $}\\
	\cmidrule{2-4}
	& CSA & $-94.11\pm 10.82 $& $-104.53\pm 9.46 $ &\\
	\bottomrule
	\end{tabular}
	\label{tab:num_step2} 
\end{table}

\begin{table}
	\centering
	\caption{A summary of optimal attacks $\delta^*$ under different attack settings.}
	\label{tab:optimal_attack}
	\begin{tabular}{ccccc}
	\toprule
		Task & gradient step & Attacl Model & Intermittent Attack & Persistent Attack\\ 
		\midrule
		\multirow{4}{*}{2D} & \multirow{2}{*}{1} & ISA & -0.24 & -39.56\\
		& & CSA & -12.96 & -38.58\\
		\cmidrule{2-5}
		&\multirow{2}{*}{2} & ISA & 2.78 & -2.36\\
		& & CSA & -0.29 & -2.60\\
		\midrule
		\multirow{4}{*}{H-C} & \multirow{2}{*}{1} & ISA & 1.11 & 0.98\\
		& & CSA & -5.94 & 0.89\\
		\cmidrule{2-5}
		&\multirow{2}{*}{2} & ISA & 5.81 & 0.06\\
		& & CSA &4.23 & 0.66\\
		\bottomrule 
	\end{tabular}
\end{table}

\begin{table}
	\centering
	\caption{A summary of adaption performance of robust meta policy against Persistent Attack.}
	\label{tab:robust_train}
	\begin{tabular}{cccccc}
	\toprule
		Task & gradient step & Attacl Model & Persistent Attack & Robust Training& Benchmark \\ 
		\midrule
		\multirow{4}{*}{2D} & \multirow{2}{*}{1} & ISA & $-298.77\pm 34.09$ & $\mathbf{-11.50\pm 0.57}$ & \multirow{2}{*}{$-10.92\pm 0.85$}\\
		& & CSA & $-488.97\pm 31.06$ &$\mathbf{-11.34\pm 0.48}$ \\
		\cmidrule{2-6}
		&\multirow{2}{*}{2} & ISA & $-362.07\pm 28.97$ & $-33.67\pm 1.34$& \multirow{2}{*}{$-6.41\pm 0.62$}\\
		& & CSA & $-151.20\pm 36.60$  & $-26.26\pm 1.41$&\\
		\midrule
		\multirow{4}{*}{H-C} & \multirow{2}{*}{1} & ISA & $-101.13\pm 10.57$ & $\mathbf{-63.26\pm 8.69}$& \multirow{2}{*}{$-63.41\pm 6.86$}\\
		& & CSA & $-93.65\pm 9.81$ & $---$\\
		\cmidrule{2-6}
		&\multirow{2}{*}{2} & ISA & $-127.17\pm 8.04$ & $-75.09\pm 9.29$& \multirow{2}{*}{$-61.04\pm 8.10$}\\
		& & CSA &$-104.53\pm 9.46$ & $---$&\\
		\bottomrule 
	\end{tabular}
\end{table}

\section{Regularity Assumptions}\label{app:regular}
In this section, we provide further justifications for the regularity conditions assumed in this paper.  In our analysis, it is assumed that the policy gradient $\nabla J_i(\theta)$ is bounded as stated in \Cref{ass:bound_pg}. 
\begin{assumption}[Bounded Policy Gradient]\label{ass:bound_pg}
	There exists a constant $G$, such that for any trajectory $\tau$ and policy parameter $\theta$, $\|g_i(\tau;\theta)\|\leq G$. Equivalently, for any $\theta$ and any batch of trajectories $\mathcal{D}$, $\|\nabla J_i(\theta)\|\leq G$, $\|\hat{\nabla} J_i(\theta,\mathcal{D})\|\leq G$.  
\end{assumption}

Moreover, the value function $f(\delta,\theta)$ is assumed to Lipschitz smooth in \Cref{ass:lip}.
\begin{assumption}[Lipschitz Gradients]\label{ass:lip}
	The function $f$ is continuously differentiable in both $\delta$ and $\theta$. Furthermore, there exists constants $L_{11}, L_{12}$ and $L_{22}$ such that for every $\delta, \delta_1,\delta_2$ and $\theta,\theta_1,\theta_2$, 
	\begin{align}\label{eq:grad_lip}
	\begin{aligned}
		&\|\nabla_\delta f(\delta_1,\theta)-\nabla_\delta f(\delta_2,\theta)\|\leq L_{11}\|\delta_1-\delta_2\|,\\
		 &\|\nabla_\theta f(\delta,\theta_1)-\nabla_\theta f(\delta,\theta_2)\|\leq L_{22}\|\theta_1-\theta_2\|,\\
		&\|\nabla_\delta f(\delta,\theta_1)-\nabla_\delta f(\delta,\theta_2)\|\leq L_{12}\|\theta_1-\theta_2\|,\\
		&\|\nabla_\theta f(\delta_1,\theta)-\nabla_\theta f(\delta_2,\theta)\|\leq L_{12}\|\delta_1-\delta_2\|. 
	\end{aligned}  
	\end{align}
\end{assumption} 

The two assumptions are direct extensions of \citep[Lemma~1]{fallah_sgmrl} in the context of adversarial MAMRL, which is established on customary assumptions (e.g., bounded rewards) in the policy gradient literature \cite{kakde_pg}. On the other hand, the unbiasedness and finite variance of the stochastic gradient assumed in \Cref{ass:sto} is also valid in MAMRL, which has been proved in \cite{fallah_sgmrl}, in order to show the convergence of \Cref{algo:sg-marl}.    
\begin{assumption}[Stochastic Gradients]\label{ass:sto}
	The stochastic gradient $\hat{\nabla}f(\delta,\theta,\xi)$, with $\xi$ being the random variable corresponding to the batch of trajectories with batch size $M$, satisfies
	\begin{align}\label{eq:grad_sto}
			\begin{aligned}
		&\mathbb{E}[\hat{\nabla}f(\delta,\theta,\xi)-\nabla f(\delta,\theta)]=0\\
		&\mathbb{E}[\|\hat{\nabla}f(\delta,\theta,\xi)-\nabla f(\delta,\theta)\|^2]\leq \sigma^2/M.
	\end{aligned}
	\end{align} 
\end{assumption}

Independent of the above assumptions regarding the regularity of the value function $f(\delta,\theta)$, we require that the admissible set of attacks $\delta$ is also regular as stated in \Cref{ass:domain}, which helps the derivation of complexity results.
\begin{assumption}\label{ass:domain}
	The set $\Delta$ is convex and compact, with its diameter upper bounded by $D\in \R_{+}$. Without loss of generality, it is assumed that $D\geq 1$. 
\end{assumption}

In addition to the assumptions discussed above, PL condition in \Cref{ass:minmaxpl} and RSI condition in \Cref{ass:rsi} play an important role in showing the convergence of the proposed attack mechanisms. Given their relationships with respect to strong convexity(SC) \cite{karimi16plqc}:
\begin{align*}
	\mbox{SC}\Rightarrow \mbox{RSI} \Rightarrow \mbox{PL},
\end{align*}    
this work extends the convergence results of Gradient Descent Ascent \citep{karimi16plqc,lin_twotime} to nonconvex situations under weaker conditions. The key point is that the linear contraction of $\E[\|\theta^*_t-\theta_t\|^2]$ (see \Cref{lemma:decreasing_error}), previously established under strong convexity \cite{lin_twotime}, also holds true under \Cref{ass:rsi}.   Our theoretical treatment on nonconvex-nonconcave minimax problems can be helpful for future studies on this topic. Even though it is unclear under which condition, the value function of RL or meta RL will  satisfy PL or RSI, we provide numerical evidence in \Cref{fig:verify_pl} and \Cref{fig:verify_rsi}, demonstrating their validity in the experiments. 
\begin{figure}[!ht]
	\centering
	\subfigure[The comparison between $\|\nabla_\theta f(\delta_t,\theta_t)\|$ and $(\max_{\theta}f(\delta_t,\theta)-f(\delta_t,\theta_t))$ in 2D navigation]{\includegraphics[width=0.49\textwidth]{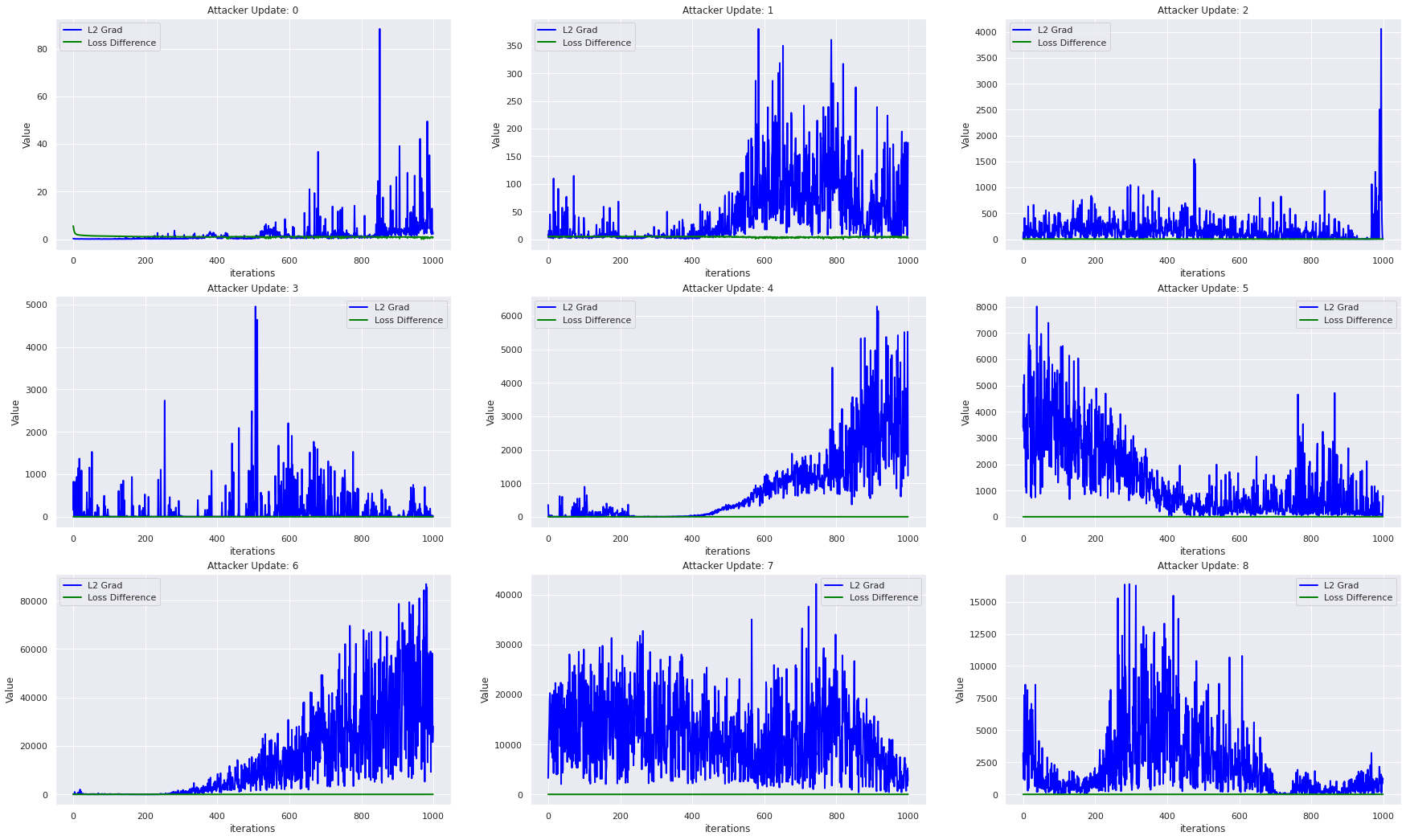}}
	\subfigure[The comparison between $\|\nabla_\theta f(\delta_t,\theta_t)\|$ and $(\max_{\theta}f(\delta_t,\theta)-f(\delta_t,\theta_t))$ in Half-cheetah]{\includegraphics[width=0.49\textwidth]{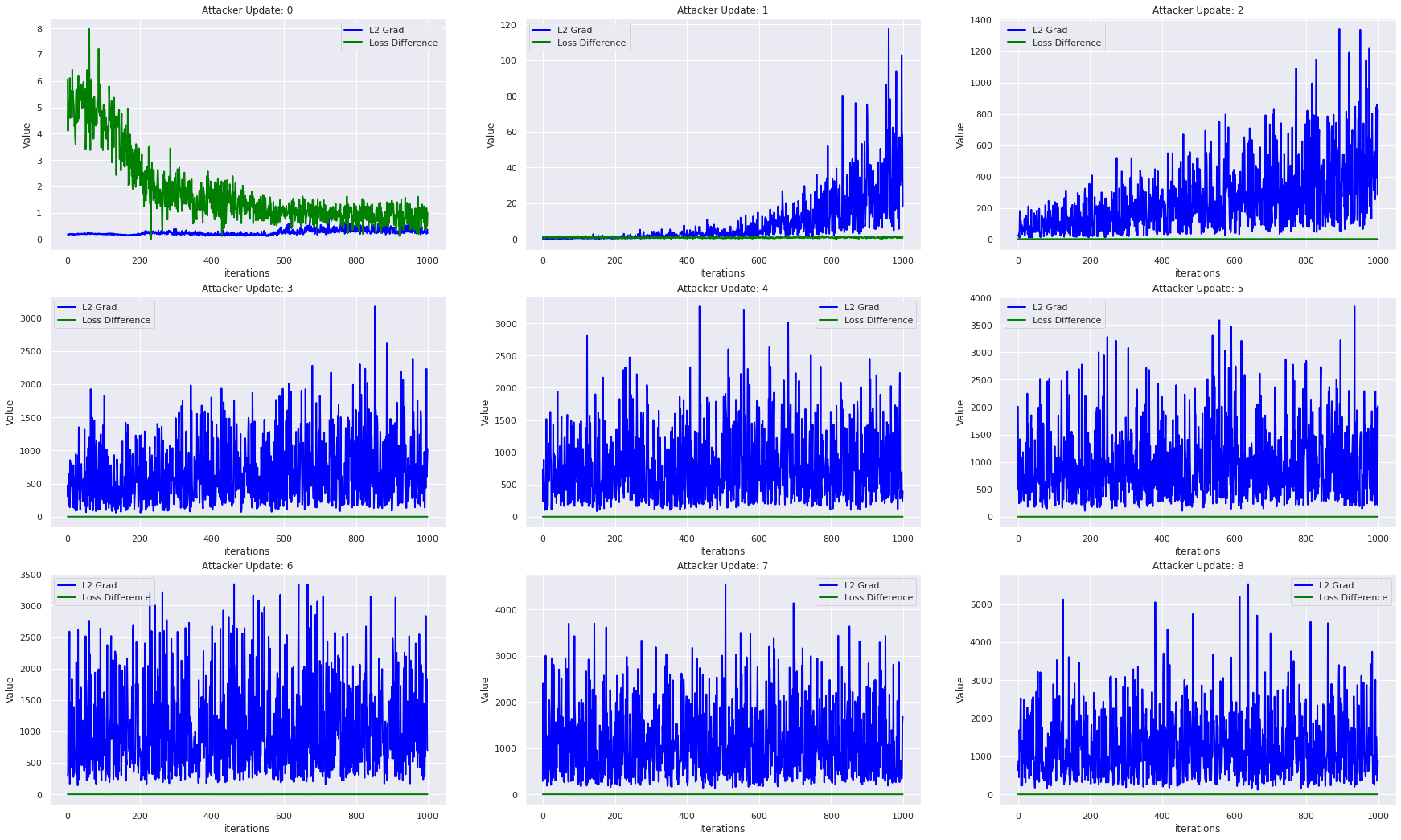}}
	\caption{Numerical Justifications of \Cref{ass:minmaxpl}. As the meta learning proceeds, the norm of gradients (blue curves) are above the loss difference (green curves). }
	\label{fig:verify_pl}
\end{figure}
\begin{figure}[!ht]
	\centering
	\subfigure[The comparison between $\left\langle\nabla_\theta f(\delta_t,\theta_t), \theta^*(\delta)-\theta_t\right\rangle$ and $\|\theta^*-\theta_t\|^2$ in 2D navigation]{\includegraphics[width=0.49\textwidth]{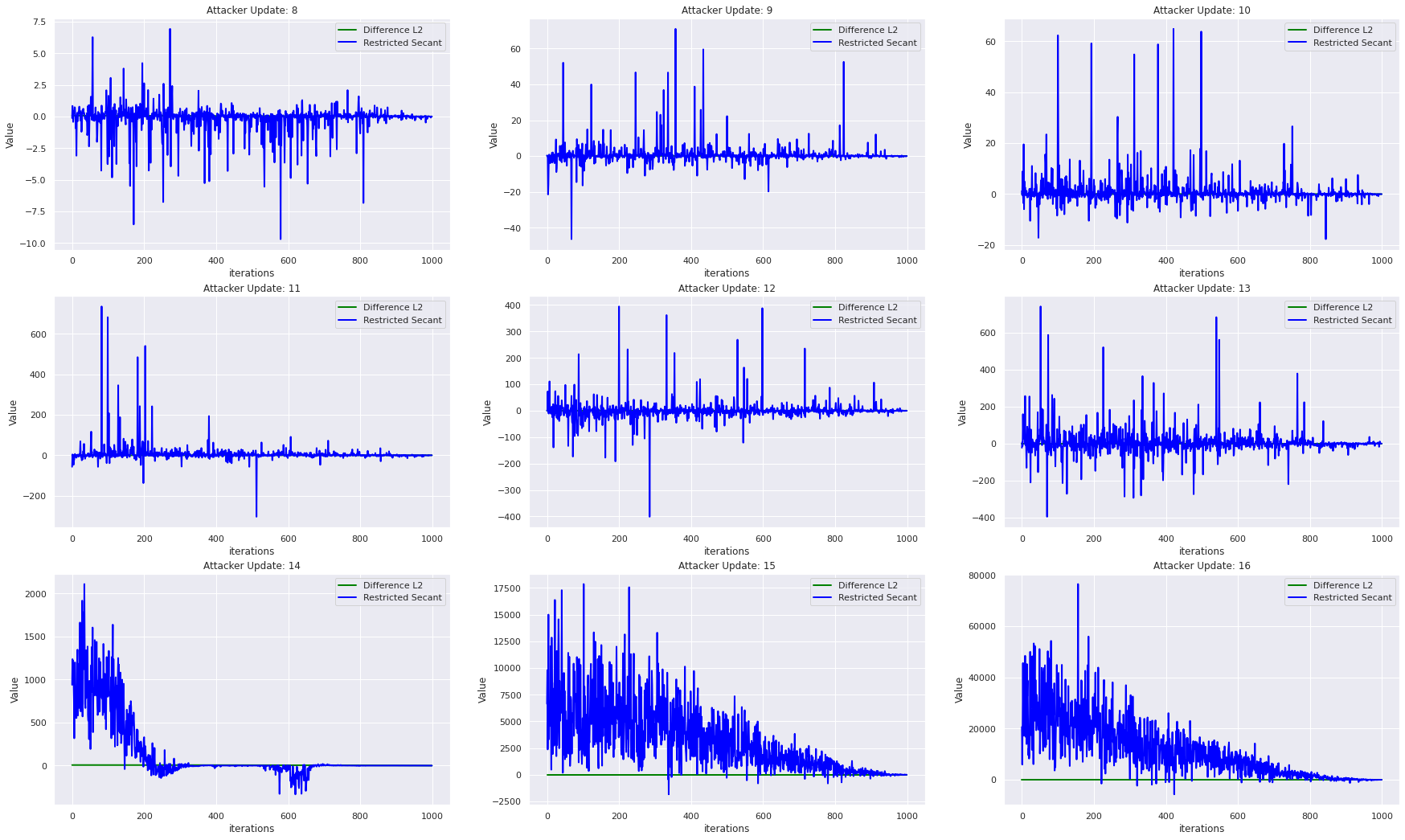}}
	\subfigure[The comparison between $\left\langle\nabla_\theta f(\delta_t,\theta_t), \theta^*(\delta)-\theta_t\right\rangle$ and $\|\theta^*-\theta_t\|^2$ in Half-cheetah]{\includegraphics[width=0.49\textwidth]{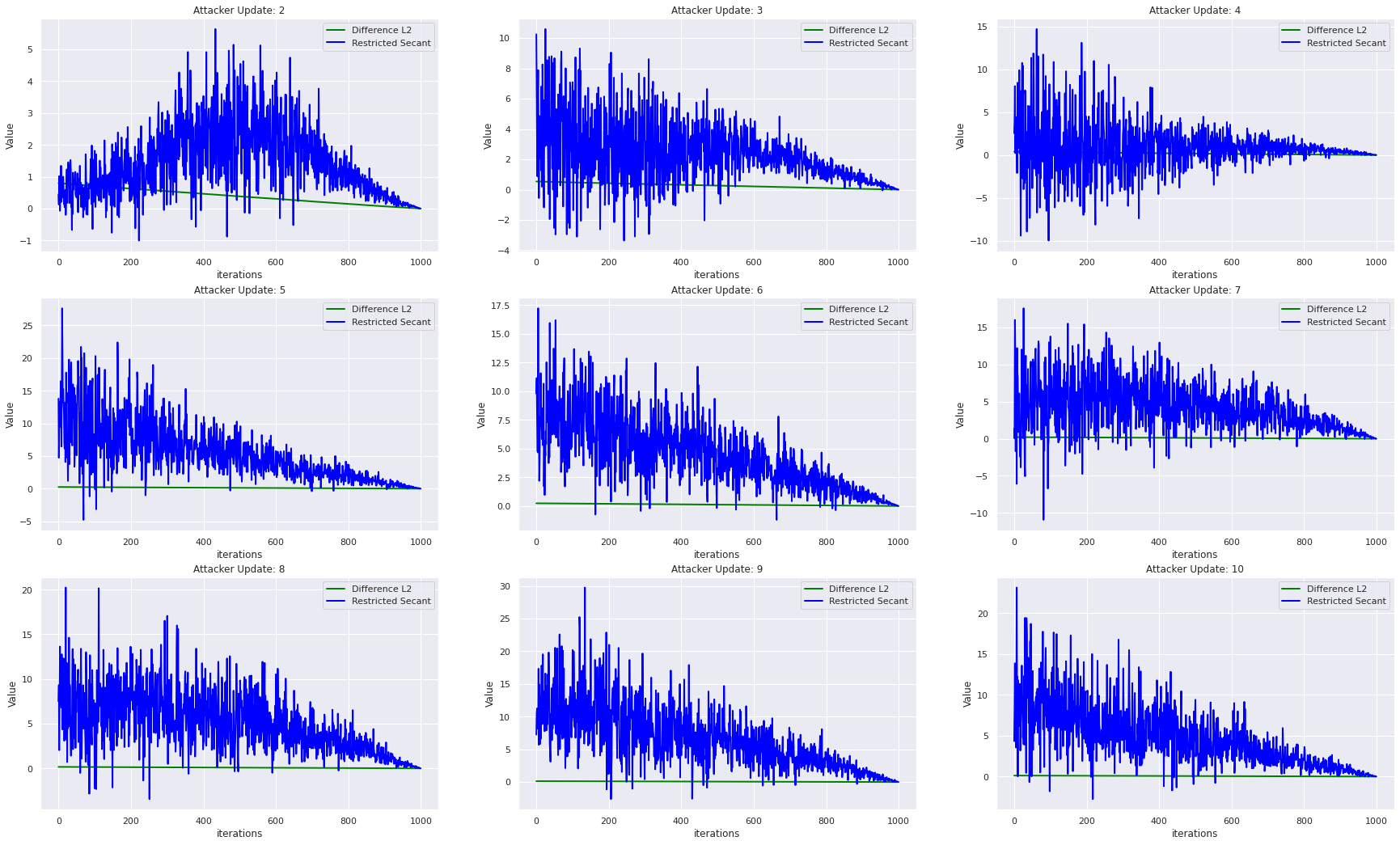}}
	\caption{Numerical Justifications of \Cref{ass:minmaxpl}. As the meta learning proceeds, the restricted secant $\left\langle\nabla_\theta f(\delta_t,\theta_t), \theta^*(\delta)-\theta_t\right\rangle$ (blue curves) are above the $\ell_2$ difference (green curves) up to a constant. }
	\label{fig:verify_rsi}
\end{figure}

\section{Proofs}\label{app:proof}
\setcounter{equation}{0}
\renewcommand{\theequation}{\Alph{section}\arabic{equation}}
\setcounter{lemma}{0}
\renewcommand{\thelemma}{\Alph{section}\arabic{lemma}}
\setcounter{theorem}{0}
\renewcommand{\thetheorem}{\Alph{section}\arabic{theorem}}

\subsection{Proofs in \Cref{sec:model}}\label{app:mmp_up}
\begin{proof}[Proof of \Cref{lem:upper-bound}]
	As shown in \citep[lemma 1]{fallah_sgmrl}, under \Cref{ass:bound_pg}, for any $\theta_1,\theta_2\in \R^d$, we have 
	\begin{align*}
		|J_i(\theta_1)-J_i(\theta_2)|\leq C(G)\|\theta_1-\theta_2\|, 
	\end{align*}
	where $C(G)$ is a constant depending on $G$. Fix a $\delta\in \Delta$, and let $\theta_1=\theta^*+\alpha \widehat{\nabla} J_i(\theta^*,\mathcal{D})$, $\theta_2=\theta^*+\alpha\cdot \delta \widehat{\nabla}J_i(\theta^*,\mathcal{D})$, the above inequality leads to 
	\begin{align*}
		|J_i(\theta^*+\alpha \widehat{\nabla} J_i(\theta^*,\mathcal{D}))-J_i(\theta^*+\alpha\cdot \delta \widehat{\nabla}J_i(\theta^*,\mathcal{D}))|&\leq C(G)\alpha\|\delta-1\|\|\widehat{\nabla}J_i(\theta^*,\mathcal{D})\|\\
		&\leq C(\alpha,G)\|\delta-1\|,
	\end{align*}
	where the second inequality holds by the boundedness of $\|\widehat{\nabla}J_i(\theta^*,\mathcal{D})\|$ assumed in \Cref{ass:bound_pg}. Combining triangle inequality and the above one yields \eqref{eq:pre_upper}. Taking expectations on both sides of \eqref{eq:pre_upper}, and further taking the minimization of the left-hand side, we obtain 
	\begin{align*}
		\min_{\delta\in \Delta}\E_{i\sim p}\E_{\mathcal{D}\sim q_i(\cdot;\theta)}[ J_i(\theta^*+\alpha \hat{\nabla}J_i(\theta^*, \mathcal{D}))]&\leq  \E_{i\sim p}\E_{\mathcal{D}\sim q_i(\cdot;\theta)}[ J_i(\theta^*+\alpha \cdot \delta \hat{\nabla}J(\theta^*, {\mathcal{D}}))]\\
		&\qquad + C(\alpha,G)\|\delta-1\|, \qquad \forall \delta.
	\end{align*} 
	Finally, taking the minimization of the right-hand side of the above inequality leads to \eqref{eq:stacktominimax}.
\end{proof}

\paragraph{Extensions to CSA}

\begin{proposition}
	Under \Cref{ass:bound_pg} and \Cref{ass:domain}, there exists a constant $C$ such that for $\theta^*=\theta^*(\delta)$ defined in \eqref{eq:com-attack} and any batch of trajectories $\mathcal{D}$,
	\begin{align}
		J_i(\theta^*+\alpha \widehat{\nabla} J_i(\theta^*, \mathcal{D}))\leq \delta J_i(\theta^*+\alpha\cdot \delta \widehat{\nabla}J_i(\theta^*,\mathcal{D}))+ C|\delta-1|.
		\label{eq:csa_ineq}
	\end{align}
	Hence, 
	\begin{align}
		&\min_{\delta\in \Delta}\E_{i\sim p}\E_{\mathcal{D}\sim q_i(\cdot;\theta)}[ J_i(\theta^*+\alpha \hat{\nabla}J_i(\theta^*, \mathcal{D}))]\\
		&\leq \min_{\delta\in \Delta} \E_{i\sim p}\E_{\mathcal{D}\sim q_i(\cdot;\theta)}[ \delta J_i(\theta^*+\alpha \cdot \delta \hat{\nabla}J(\theta^*, {\mathcal{D}}))]+ C|\delta-1|.
		\label{eq:csa_expect}
	\end{align}
\end{proposition}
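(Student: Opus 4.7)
The plan is to mirror the proof of \Cref{lem:upper-bound} with one additional algebraic decomposition that isolates the two roles played by $\delta$ in \eqref{eq:com-attack}: the perturbation inside the gradient step and the outer multiplicative factor. First I add and subtract the intermediate quantity $J_i(\theta^*+\alpha\cdot\delta\widehat{\nabla}J_i(\theta^*,\mathcal{D}))$ to write
\[
J_i(\theta^*+\alpha\widehat{\nabla}J_i)-\delta\,J_i(\theta^*+\alpha\delta\widehat{\nabla}J_i)
=\underbrace{\bigl[J_i(\theta^*+\alpha\widehat{\nabla}J_i)-J_i(\theta^*+\alpha\delta\widehat{\nabla}J_i)\bigr]}_{(\mathrm{I})}+\underbrace{(1-\delta)\,J_i(\theta^*+\alpha\delta\widehat{\nabla}J_i)}_{(\mathrm{II})},
\]
and then I bound $(\mathrm{I})$ and $(\mathrm{II})$ by two separate uses of the regularity assumptions.

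For term $(\mathrm{I})$ I reuse verbatim the argument from the proof of \Cref{lem:upper-bound}: the Lipschitz continuity of $J_i$ (Lemma~1 of \citep{fallah_sgmrl}) combined with the bounded-gradient \Cref{ass:bound_pg} yields $|(\mathrm{I})|\le C(G)\,\alpha\, G\,|\delta-1|$. For term $(\mathrm{II})$ I invoke the uniform boundedness of $J_i$ itself, which is a standard consequence of bounded per-step rewards (implicit in the hypotheses underlying \Cref{ass:bound_pg}) together with the finite horizon $H$; this yields $|(\mathrm{II})|\le C_J\,|\delta-1|$, where $C_J$ is any uniform upper bound of $|J_i|$ over $\theta\in\R^d$. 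Summing these constants into a single $C$ and applying the triangle inequality produces \eqref{eq:csa_ineq}.

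The passage from \eqref{eq:csa_ineq} to \eqref{eq:csa_expect} is mechanical. Taking expectations over $i\sim p$ and $\mathcal{D}\sim q_i(\cdot;\theta)$ on both sides preserves the pointwise inequality because the right-hand-side constant is deterministic in $i$ and $\mathcal{D}$. Then, as in the closing lines of the proof of \Cref{lem:upper-bound}, I take $\min_{\delta\in\Delta}$ on the left while keeping any fixed $\delta$ on the right, and finally take $\min_{\delta\in\Delta}$ on the right-hand side as well.

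The main obstacle is not analytical — the computation is very short — but conceptual: one must justify the uniform boundedness of $J_i$ needed for term $(\mathrm{II})$, which is not stated explicitly among the paper's regularity assumptions. It follows from the customary bounded-reward, finite-horizon setting already underlying \Cref{ass:bound_pg} (and used to derive Lemma~1 of \citep{fallah_sgmrl}), so the step is safe but worth flagging. Without that boundedness, term $(\mathrm{II})$ would not admit a clean linear bound in $|\delta-1|$, and one would be forced to rely on a local argument together with \Cref{ass:domain} to produce a $\delta$-uniform constant, which would weaken the conclusion.
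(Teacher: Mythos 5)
Your proof is correct and follows essentially the same route as the paper's: a triangle-inequality decomposition through an intermediate term, bounding one piece via the Lipschitz estimate already established for \Cref{lem:upper-bound} and the other via a uniform bound $|J_i|\leq \frac{1-\gamma^H}{1-\gamma}\|r\|_\infty$ from bounded rewards and the finite horizon --- exactly the justification you flag as the conceptual gap, which the paper supplies explicitly. The only cosmetic difference is the choice of intermediate point: the paper inserts $\delta J_i(\theta^*+\alpha\widehat{\nabla}J_i(\theta^*,\mathcal{D}))$, so its Lipschitz term carries an extra factor $|\delta|$ that must be controlled by the diameter $D$ from \Cref{ass:domain}, whereas your insertion of $J_i(\theta^*+\alpha\cdot\delta\widehat{\nabla}J_i(\theta^*,\mathcal{D}))$ avoids that factor.
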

\begin{proof}
	Note that 
	\begin{align*}
		&|J_i(\theta^*+\alpha \hat{\nabla}J_i(\theta^*, \mathcal{D}))-\delta J_i(\theta^*+\alpha\cdot \delta \widehat{\nabla}J_i(\theta^*,\mathcal{D}))|\\
		&\leq |J_i(\theta^*+\alpha \hat{\nabla}J_i(\theta^*, \mathcal{D}))-\delta J_i(\theta^*+\alpha\widehat{\nabla}J_i(\theta^*,\mathcal{D}))|\\
		&\qquad  +|\delta J_i(\theta^*+\alpha\widehat{\nabla}J_i(\theta^*,\mathcal{D}))-\delta J_i(\theta^*+\alpha\cdot \delta \widehat{\nabla}J_i(\theta^*,\mathcal{D}))|\\
		& \leq |\delta-1||J_i(\theta^*+\alpha\widehat{\nabla}J_i(\theta^*,\mathcal{D}))|+|\delta|C(\alpha, G) |\delta-1|,
	\end{align*}
	where in the last inequality we use the following result obtained in the proof of \Cref{lem:upper-bound} 
	\begin{align*}
		|J_i(\theta^*+\alpha \widehat{\nabla} J_i(\theta^*,\mathcal{D}))-J_i(\theta^*+\alpha\cdot \delta \widehat{\nabla}J_i(\theta^*,\mathcal{D}))|\leq C(\alpha,G)\|\delta-1\|.
	\end{align*}
	
	Let $\|r\|_\infty:=\max_{s,a}|r(s,a)|$ (assumed to be bounded). According to the definition of the value function $J_i$, the horizon length $H$, and the discounting factor $\gamma$, we obtain  $$|J_i(\theta^*+\alpha\widehat{\nabla}J_i(\theta^*,\mathcal{D}))|\leq \frac{1-\gamma^H}{1-\gamma}\|r\|_\infty. $$ Since $\delta\in \Delta$, $|\delta|$ can be controlled by the diameter $D$. Hence, we obtain
	\begin{align*}
		|J_i(\theta^*+\alpha \hat{\nabla}J_i(\theta^*, \mathcal{D}))-\delta J_i(\theta^*+\alpha\cdot \delta \widehat{\nabla}J_i(\theta^*,\mathcal{D}))|\leq C(\alpha, G, \|r\|_\infty, D) |\delta-1|,
	\end{align*}
	leading to \eqref{eq:csa_ineq}. Following the same argument in the proof of \Cref{lem:upper-bound}, we obtain \eqref{eq:csa_expect}.
	\end{proof}

The above proposition leads to the MMP formulation of CSA in below.
\begin{align*}
	\min_{\delta\in \Delta}\max_{\theta\in \R^d}\E_{i\sim p}\E_{\mathcal{D}\sim q_i(\cdot;\theta)}[ \delta J_i(\theta+\alpha \cdot \delta \hat{\nabla}J(\theta, {\mathcal{D}}))].
\end{align*}
Since $\Delta$ is a bounded compact convex set, our nonasymptotic convergence analysis on Intermittent and Persistent Attacks under ISA also applies to CSA using the above MMP formulation. Finally, we comment on the relationship between MMP and ISA(CSA). The original Stackelberg formations in \eqref{eq:inner-attack} and \eqref{eq:com-attack} target the testing performance, while in \eqref{eq:minimax_attack}, the objective is the training performance. Hence, the two are different problems, even though the minimax values serves as the upper bounds for \eqref{eq:inner-attack} and \eqref{eq:com-attack}. When the access to the testing performance is not available during the training phase, this minimax upper bound helps guide the attacker's update.

\subsection{Proofs in \Cref{sec:analysis_intermittent}}\label{app:inter_proof}

In this section, we present the full version of \Cref{thm:intermittent} with detailed choices of step sizes $\eta_1,\eta_2$ and batch size $M$. We begin with some technical lemmas.  


\begin{lemma}\label{lemma:phi_diff}
Under \Cref{ass:minmaxpl}, \Cref{ass:lip} and \Cref{ass:sto},  there exists a $\theta^*\in \argmax_{\theta}f(\delta,\theta)$, such that
	\begin{align*}
		\nabla \Phi(\delta)=\nabla_{\delta} f(\delta,\theta^*)=\E[\hat{\nabla}f(\delta,\theta^*;\xi)].
	\end{align*}
	Moreover, $\Phi(\delta)$ is $L$-Lipschitz smooth with the Lipschitz constant $L:=L_{11}+\frac{L^2_{12}}{2\mu}$,
	\begin{align*}
		\|\nabla\Phi(\delta_1)-\nabla\Phi(\delta_2)\|\leq L\|\delta_2-\delta_1\|. 
	\end{align*}	
\end{lemma}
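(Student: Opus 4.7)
The plan is to derive the identity $\nabla\Phi(\delta)=\nabla_\delta f(\delta,\theta^*)$ from a Danskin-type envelope argument, then bound $\|\nabla\Phi(\delta_1)-\nabla\Phi(\delta_2)\|$ by combining the joint Lipschitz smoothness of $f$ with Lipschitz continuity of the maximizing selection $\theta^*(\delta)$.

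First I would fix $\delta$ and take $\theta^*\in\argmax_\theta f(\delta,\theta)$; existence is secured because \Cref{ass:minmaxpl} together with the $L_{22}$-smoothness of $\nabla_\theta f$ forces gradient ascent on $f(\delta,\cdot)$ to converge linearly in function value, and the paper already stipulates that the selection is unique (``$\theta^*$ is properly selected''). With this selection in hand, I would apply Danskin's theorem to obtain the directional derivative $\nabla\Phi(\delta)\cdot v=\nabla_\delta f(\delta,\theta^*)\cdot v$ for every $v$, which yields the first equality. The second equality $\nabla_\delta f(\delta,\theta^*)=\E[\hat\nabla f(\delta,\theta^*;\xi)]$ is just the unbiasedness property from \eqref{eq:grad_sto} restricted to the $\delta$-coordinate.

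Next I would show $\theta^*(\cdot)$ is Lipschitz in $\delta$. For $\delta_1,\delta_2$, let $\theta_i^*=\theta^*(\delta_i)$; the first-order optimality $\nabla_\theta f(\delta_1,\theta_1^*)=0$ together with the cross-Lipschitz estimate from \eqref{eq:grad_lip} gives $\|\nabla_\theta f(\delta_2,\theta_1^*)\|\le L_{12}\|\delta_1-\delta_2\|$, and the error-bound consequence of PL (\Cref{ass:minmaxpl}) converts this gradient slack into the displacement bound $\|\theta_1^*-\theta_2^*\|\le (L_{12}/(2\mu))\|\delta_1-\delta_2\|$ when derived via a sharp quadratic-growth estimate. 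Plugging the chain-rule decomposition
\[
\nabla\Phi(\delta_1)-\nabla\Phi(\delta_2)=\nabla_\delta f(\delta_1,\theta_1^*)-\nabla_\delta f(\delta_2,\theta_1^*)+\nabla_\delta f(\delta_2,\theta_1^*)-\nabla_\delta f(\delta_2,\theta_2^*)
\]
into the $L_{11}$- and $L_{12}$-Lipschitz bounds from \eqref{eq:grad_lip} then gives
\[
\|\nabla\Phi(\delta_1)-\nabla\Phi(\delta_2)\|\le L_{11}\|\delta_1-\delta_2\|+L_{12}\cdot\frac{L_{12}}{2\mu}\|\delta_1-\delta_2\|,
\]
which is exactly $L=L_{11}+L_{12}^2/(2\mu)$.

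The main obstacle is establishing Lipschitz continuity of the selection $\theta^*(\cdot)$ under the PL-only hypothesis rather than strong concavity. PL permits a non-singleton $\argmax$, so one must either lean on the paper's stipulated ``proper selection'' or take the closest-point selection and derive quadratic growth from PL. Getting the prefactor $(2\mu)^{-1}$ instead of the looser $\mu^{-1}$ requires a careful interpolation between the PL inequality evaluated along the segment connecting $\theta_1^*$ and $\theta_2^*$ and the quadratic-growth lower bound on $\Phi(\delta_2)-f(\delta_2,\theta_1^*)$, rather than a direct error-bound substitution; I expect this constant-tracking to be the most delicate piece of bookkeeping.
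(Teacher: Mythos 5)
Your proposal follows essentially the same route as the paper's proof: the paper also derives $\|\theta^*(\delta_1)-\theta^*(\delta_2)\|\le \frac{L_{12}}{2\mu}\|\delta_1-\delta_2\|$ by combining the cross-Lipschitz bound $\|\nabla_\theta f(\delta_2,\theta_1^*)\|\le L_{12}\|\delta_1-\delta_2\|$ with the PL inequality and the quadratic-growth consequence of PL (with the closest-point selection onto the argmax set), then uses exactly your two-term decomposition of $\nabla\Phi(\delta_1)-\nabla\Phi(\delta_2)$ to obtain $L=L_{11}+L_{12}^2/(2\mu)$, and it handles the envelope identity by citing a PL-adapted substitute for Danskin's theorem rather than the classical version. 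The delicate points you flag (non-singleton argmax under PL and the $(2\mu)^{-1}$ prefactor from quadratic growth) are precisely the ones the paper resolves the same way.
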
  
\begin{proof}[Proof of \Cref{lemma:phi_diff}]
	Our first step is to show that for any $\delta_1,\delta_2$ and $\theta_1\in \argmax_{\theta}f(\delta_1,\theta)$, there exists $\theta_2\in \argmax_{\theta}f(\delta_,\theta)$ such that $\|\theta_1-\theta_2\|\leq \frac{L_{12}}{2\mu}\|\delta_1-\delta_2\|$.
	
	First, based on the assumption of Lipschitz gradients in \eqref{eq:grad_lip}, 
	\begin{align*}
		\|\nabla_{\theta} f(\delta_2,\theta_1)-\nabla_{\theta}f(\delta_1,\theta_1)\|\leq L_{12}\|\delta_2-\delta_1\|,
	\end{align*}
which is equivalent to $\|\nabla_{\theta} f(\delta_2,\theta_1)\|\leq L_{12}\|\delta_2-\delta_1\|$, since $\nabla_{\theta}f(\delta_1,\theta_1)=0$. Then, using the PL condition in \eqref{ass:minmaxpl}, we obtain 
\begin{align*}
	\max_\theta f(\delta_2,\theta)-f(\delta_2,\theta_1)\leq \frac{1}{2 \mu} \|\nabla_\theta f(\delta_2,\theta_1)\|^2,
\end{align*}
which implies that 
\begin{align}
	\Psi_{\delta_2}(\theta_1)-\min_{\theta} \Psi_{\delta_2}(\theta)\leq \frac{L_{12}^2}{2 \mu}\|\delta_2-\delta_1\|^2.\label{eq:leq}
\end{align}

	As shown in \citep[Theorem 2]{karimi16plqc}, PL condition implies the quadratic growth, that is, if $\Psi_\delta(\theta)$ satisfies $\mu$-PL condition, the following holds true 
	\begin{align}\label{eq:geq}
		\Psi_{\delta}(\theta)-\min_{\theta}\Psi_{\delta}(\theta)\geq 2\mu\|\theta-\theta^*\|^2,
	\end{align} 
	where $\theta^*\in \argmin_{\theta} \Psi_\delta(\theta) $ and $\|\theta-\theta^*\|=\min_{\theta'\in \argmin_{\theta} \Psi_\delta(\theta)}\|\theta-\theta'\|$. In other words, $\theta^*$ is the projection of $\theta$ onto the set of optimal solution, and $\|\theta-\theta^*\|$ is the distance of the point $\theta$ to the optimal solution set.
		
	Finally,  let $\theta_2$ be the projection of $\theta_1$ onto the set $\{\theta'|\theta'\in argmax_\theta f(\delta_2,\theta)\}$, then combining \eqref{eq:leq} and \eqref{eq:geq} leads to the desired result
	\begin{align}\label{eq:theta_delta}
		\|\theta_1-\theta_2\|\leq \frac{L_{12}}{2\mu}\|\delta_1-\delta_2\|.
	\end{align}
	
	The second step is to show the existence of $\nabla\Phi(\delta)$ and $\nabla \Phi(\delta)=\nabla_\delta f(\delta,\theta^*)$, which is straightforward from \citep[lemma A.5]{gda_two}. Finally, we show that $\Phi(\delta)$ is also Lipschitz smooth because of \Cref{ass:lip}.  Notice that 
	\begin{align*}
		\|\nabla\Phi(\delta_1)-\nabla\Phi(\delta_2)\|&=\|\nabla_\delta f(\delta_1,\theta_1)-\nabla_\delta f(\delta_2,\theta_2)\|\\
			&=\|\nabla_\theta f(\delta_1,\theta_1)-\nabla_\delta f(\delta_2,\theta_1)+\nabla_\delta f(\delta_2,\theta_1)-\nabla_\theta f(\delta_2,\theta_2)\|\\
			&\leq L_{11}\|\delta_1-\delta_2\|+L_{12}\|\theta_1-\theta_2\|,
	\end{align*} 
	together with \eqref{eq:theta_delta}, we have 
	\begin{align*}
		\|\nabla\Phi(\delta_1)-\nabla\Phi(\delta_2)\|\leq (L_{11}+\frac{L^2_{12}}{2\mu}) \|\delta_1-\delta_2\|,
	\end{align*}
	showing that $\Phi(\delta)$ is also Lipschitz smooth with the Lipschitz constant being $L_{11}+\frac{L^2_{12}}{2\mu}$. 
\end{proof}
The above lemma, as an extension of \citep[Lemma A.1]{gda_two},  serves as a substitute of Danskin's theorem in convex analysis \citep{BERNHARD19951163}, which states that the gradient of $\max_\theta f(\delta,\theta)$ can be directly evaluated using the gradient $\nabla_\delta f(\delta,\theta^*)$. Moreover, such gradient function $\nabla \Phi$ is Lipschitz smooth, reducing the attacker's convergence problem to a standard analysis of gradient descent of nonconvex objectives.

Our next lemma shows that $\theta_{K}$ returned by the inner loop properly approximates $\theta^*(\delta)$ in the sense that $\nabla_\delta f(\delta_t,\theta^K_t)\approx \nabla\Phi(\delta_t)$, where $\theta^K_t:=\theta_K(\delta_t)$ denotes the final iterate of the inner loop under $\delta_t$. The proof of the following lemma relies on the assumption that there exists a constant $B$, such that $\Phi(\delta_t)-f(\delta_t,\theta_0)\leq B$ holds for every $t$, which can be verified using 
 \Cref{lemma:phi_diff}, as shown in \citep[Lemma~A.6]{gda_two}
 \begin{lemma}\label{lem:zt}
	Let $\rho=1-\frac{2\mu}{L_{22}}$, $\bar{L}=\max\{L_{12}, L_{22}, 1, \Phi_\infty\}$, where $\Phi_\infty:=\max\{\max_\delta\|\nabla \Phi\|,1\}$. Assume that there exists a constant $B>0$ such that $\Phi(\delta_t)-f(\delta_t,\theta_0)\leq B$. For any $\epsilon>0$, if $K$ and $M$ are large enough such that 
	\begin{align*}
	&K\geq 	\frac{1}{\log \rho^{-1}}\left(4\log\epsilon^{-1}+\log \frac{2^4 D^2(2\Phi_\infty+LD)^2\bar{L}^2B}{L^2 \mu}\right)\\
	& M\geq \frac{24\sigma^2D^2(2\Phi_\infty+LD)^4}{2\bar{L}L^2\epsilon^4}
	\end{align*}
     then for $z_t:=\nabla_\delta f(\delta_t,\theta^K_t)-\nabla\Phi(\delta_t)$, 
     \begin{align}\label{eq:zt_ineq}
     \E[\|z_t\|]\leq \frac{L\epsilon^2}{4D (2\Phi_\infty+LD)^2}	
     \end{align}
      and 
      \begin{align}\label{eq:theta_fosp}
      \E[\|\nabla_\theta f(\delta_t,\theta^K_t)\|]\leq \epsilon.	
      \end{align}                     
\end{lemma}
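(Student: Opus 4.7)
The plan is to reduce both desired bounds in \eqref{eq:zt_ineq} and \eqref{eq:theta_fosp} to a single control on the expected inner-loop optimality gap $\E[\Phi(\delta_t)-f(\delta_t,\theta^K_t)]$, and then invoke standard stochastic gradient ascent analysis under the PL condition (Assumption 1) to make that gap small by choosing $K$ and $M$ appropriately.

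For the first reduction, observe that by \Cref{lemma:phi_diff}, $\nabla\Phi(\delta_t)=\nabla_\delta f(\delta_t,\theta^*(\delta_t))$, so the $L_{12}$-Lipschitzness of $\nabla_\delta f(\delta_t,\cdot)$ from Assumption \ref{ass:lip} gives $\|z_t\|\le L_{12}\|\theta^K_t-\theta^*(\delta_t)\|$. The quadratic growth consequence of PL (shown in the proof of \Cref{lemma:phi_diff} via Karimi et al.) then yields
\begin{equation*}
\|\theta^K_t-\theta^*(\delta_t)\|^2 \le \frac{1}{2\mu}\bigl(\Phi(\delta_t)-f(\delta_t,\theta^K_t)\bigr),
\end{equation*}
and after taking expectations and applying Jensen's inequality I obtain $\E[\|z_t\|]\le (L_{12}/\sqrt{2\mu})\sqrt{\E[\Phi(\delta_t)-f(\delta_t,\theta^K_t)]}$. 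For the second reduction, $L_{22}$-smoothness of $f(\delta_t,\cdot)$ (from Assumption \ref{ass:lip}) gives the standard one-sided bound $\|\nabla_\theta f(\delta_t,\theta^K_t)\|^2\le 2L_{22}\bigl(\Phi(\delta_t)-f(\delta_t,\theta^K_t)\bigr)$, so again by Jensen $\E[\|\nabla_\theta f(\delta_t,\theta^K_t)\|]\le \sqrt{2L_{22}\E[\Phi(\delta_t)-f(\delta_t,\theta^K_t)]}$. Thus everything hinges on one inequality of the form $\E[\Phi(\delta_t)-f(\delta_t,\theta^K_t)]\le \rho^K B+c\,\sigma^2/(\mu M)$ for an explicit constant $c$.

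To establish that linear-plus-noise contraction I would run the textbook SGA-under-PL argument on the inner iterate. Using $L_{22}$-smoothness to expand $f(\delta_t,\theta^{k+1})$ around $\theta^k$, choosing step size $\eta_1=1/L_{22}$ (which produces the contraction factor $\rho=1-2\mu/L_{22}$ advertised in the lemma), and plugging in the unbiased, variance-$\sigma^2/M$ stochastic gradient from Assumption \ref{ass:sto} together with the PL lower bound $\|\nabla_\theta f\|^2\ge 2\mu(\Phi-f)$, one obtains the one-step recursion
\begin{equation*}
\E[\Phi(\delta_t)-f(\delta_t,\theta^{k+1})]\le \rho\,\E[\Phi(\delta_t)-f(\delta_t,\theta^{k})]+\frac{\sigma^2}{2L_{22}M}.
\end{equation*}
Unrolling over $k=0,\ldots,K-1$, using the geometric series on the noise term and the hypothesis $\Phi(\delta_t)-f(\delta_t,\theta_0)\le B$ on the initial gap, yields $\E[\Phi(\delta_t)-f(\delta_t,\theta^K_t)]\le \rho^K B+\sigma^2/(2\mu M)$. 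I would then verify that the stated thresholds on $K$ and $M$ (with $\bar L\ge \max\{L_{12},L_{22},1,\Phi_\infty\}$) force each of the two summands to be at most half of $(L\epsilon^2/(4D(2\Phi_\infty+LD)^2))^2\cdot(2\mu/L_{12}^2)$, so that after plugging through the two reductions both \eqref{eq:zt_ineq} and \eqref{eq:theta_fosp} hold; the scaling $K\sim\log\epsilon^{-1}$ and $M\sim\epsilon^{-4}$ is exactly what one reads off from these two conditions.

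The main obstacle is bookkeeping rather than ideas: the prefactors involve the composite constants $L$, $\bar L$, $\Phi_\infty$, and $D$ mixed through both Jensen and the quadratic-growth inequality, and the bound on $\E[\|z_t\|]$ is stated with $\epsilon^2$ (not $\epsilon$) on the right-hand side, which is why $M$ must scale like $\epsilon^{-4}$ rather than $\epsilon^{-2}$. I would have to carefully track how the square appears after Jensen to confirm that the stated $M$ suffices, and to verify that the bound meant for the $\theta$-gradient (which needs only $\epsilon$, not $\epsilon^2$) is also satisfied by the same choice of $M$ — which it will be, since $\epsilon<1$ makes the $z_t$ bound the binding one.
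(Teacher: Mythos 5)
Your proposal is correct and follows essentially the same route as the paper's proof: an SGA-under-PL one-step contraction for the inner loop (smoothness expansion, unbiased variance-$\sigma^2/M$ gradients, PL lower bound), unrolled to $\E[\Phi(\delta_t)-f(\delta_t,\theta^K_t)]\le \rho^K B+\mathcal{O}(\sigma^2/M)$, then transferred to $\E[\|z_t\|]$ via quadratic growth, Jensen, and $L_{12}$-Lipschitzness. The only cosmetic difference is that for \eqref{eq:theta_fosp} you invoke the smoothness bound $\|\nabla_\theta f\|^2\le 2L_{22}(\Phi-f)$ directly, while the paper writes $\|\nabla_\theta f(\delta_t,\theta^K_t)\|=\|\nabla_\theta f(\delta_t,\theta^K_t)-\nabla_\theta f(\delta_t,\theta^*_t)\|\le L_{22}\|\theta^K_t-\theta^*_t\|$; both are immediate and the remaining discrepancies are constant-factor bookkeeping of the kind you already flag.
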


\begin{proof}[Proof of \Cref{lem:zt}]
Since $f(\delta,\theta)$ is assumed to be Lipschitz-smooth, $-f(\delta,\theta)$ is also Lipschitz-smooth, and we have 
\begin{align*}
	-f(\delta_t,\theta^K_t)
	\leq -f(\delta_t,\theta_{t-1}^K)-\left\langle\nabla_\theta f(\delta_t,\theta_t^{K-1}), \theta_t^K-\theta_{t}^{K-1} \right\rangle +\frac{L_{22}}{2}\|\theta_t^K-\theta_{t}^{K-1}\|^2.
\end{align*}
Plugging the updating rule into the above inequality, we obtain
\begin{align*}
	-f(\delta_t,\theta^K_t)
	\leq -f(\delta_t,\theta_{t-1}^K)-\frac{1}{L_{22}}\left\langle \nabla_\theta f(\delta_t,\theta_t^{K-1}), \widehat{\nabla}_\theta f(\delta_t,\theta_t^{K-1},\xi_t^{K-1})\right\rangle+\frac{1}{2L_{22}}\|\widehat{\nabla}_\theta f(\delta_t,\theta_t^{K-1},\xi_t^{K-1})\|^2,
\end{align*}
which implies 
\begin{align*}
	\E[\Phi(\delta_t)-f(\delta_t,\theta_t^K)|\theta_t^{K-1}]\leq \Phi(\delta_t)-f(\delta_t,\theta_t^{K-1})-\frac{1}{L_{22}}\|\nabla_\theta f(\delta_t,\theta^{K-1}_t)\|^2+\frac{1}{2L_{22}}\E[\|\widehat{\nabla}_\theta f(\delta_t,\theta_t^{K-1},\xi_t^{K-1})\|^2|\theta_t^{K-1}].
\end{align*}
 By \Cref{ass:sto}, we have 
 \begin{align*}
 	\E[\|\widehat{\nabla}_\theta f(\delta_t,\theta_t^{K-1},\xi_t^{K-1})\|^2|\theta_t^{K-1}]&=\E[\|\widehat{\nabla}_\theta f(\delta_t,\theta_t^{K-1},\xi_t^{K-1})-\nabla_\theta f(\delta_t,\theta_t)+\nabla_\theta f(\delta_t,\theta_t)\|^2|\theta_t^{K-1}]\\
 	&= \E[\|\widehat{\nabla}_\theta f(\delta_t,\theta_t^{K-1})-\nabla_\theta f(\delta_t,\theta_t)\|^2|\theta_t^{K-1}]+ \E[\|\nabla_\theta f(\delta_t,\theta_t)\|^2|\theta_t^{K-1}]\\
 	&\qquad +2\E[\left\langle\widehat{\nabla}_\theta f(\delta_t,\theta_t^{K-1},\xi_t^{K-1})-\nabla_\theta f(\delta_t,\theta_t), \nabla_\theta f(\delta_t,\theta_t) \right\rangle|\theta_t^{K-1}]\\
 	&\leq 3 \E[\|\widehat{\nabla}_\theta f(\delta_t,\theta_t^{K-1})-\nabla_\theta f(\delta_t,\theta_t)\|^2|\theta_t^{K-1}] + \frac{3}{2}\E[\|\nabla_\theta f(\delta_t,\theta_t)\|^2|\theta_t^{K-1}]\\
 	& \leq \frac{3\sigma^2}{M}+\frac{3}{2}\E[\|\nabla_\theta f(\delta_t,\theta_t)\|^2|\theta_t^{K-1}]
 \end{align*} 
where the second last inequality follows from Young's inequality. Combing the two inequalities above, we arrive at 
\begin{align*}
	\E[\Phi(\delta_t)-f(\delta_t,\theta_t^K)|\theta_t^{K-1}]&\leq \Phi(\delta_t)-f(\delta_t,\theta_t^{K-1})-\frac{1}{4L_{22}}\|\nabla_\theta f(\delta_t,\theta^{K-1}_t)\|^2+\frac{3\sigma^2}{2L_{22}M}\\
	& \leq (1-\frac{2\mu}{L_{22}})(\Phi(\delta_t)-f(\delta_t,\theta_t^{K-1}))+\frac{3\sigma^2}{2L_{22}M}.
\end{align*} 
Recursively applying the above inequality, and let $\rho=1-\frac{2\mu}{L_{22}}<1$, we obtain
\begin{align*}
		\E[\Phi(\delta_t)-f(\delta_t,\theta^K_t)]\leq \rho^K (\Phi(\delta_t)-f(\delta_t,\theta_0))+\left(\frac{1-\rho^K}{1-\rho}\right)\frac{3\sigma^2}{2L_{22}M}
	\end{align*}
	As shown in \citep{gda_two}, there exists a constant $B$, such that $\Phi(\delta_t)-f(\delta_t,\theta_0)\leq B$ holds for every $t$. Then, using the quadratic growth shown in the proof of \Cref{lemma:phi_diff}, we have 
	\begin{align*}
		\E[\|\theta^K_t-\theta^*_t\|^2]\leq \frac{B}{2\mu}\rho^K+ \left(\frac{1-\rho^K}{1-\rho}\right)\frac{3\sigma^2}{2L_{22}M}	
		\end{align*}
		
Hence, with Jensen inequality and the choice of $K$ and $M$
\begin{align*}
	\E[\|z_t\|]&=\E[\|\nabla_\delta f(\delta_t,\theta^K_t)-\nabla \Phi(\delta_t)\|]\\
	&\leq L_{12}\E[\|\theta^K_t-\theta^*_t\|]\\
	&\leq  L_{12}\sqrt{\frac{B}{2\mu}\rho^K+ \left(\frac{1-\rho^K}{1-\rho}\right)\frac{3\sigma^2}{2L_{22}M}}\\
	&\leq \frac{L\epsilon^2}{4D (2\Phi_\infty+{L}D)^2},
\end{align*}
which proves \eqref{eq:zt_ineq}. To prove \eqref{eq:theta_fosp}, note that 
\begin{align*}
	\E[\|\nabla_\theta f(\delta_t,\theta^K_t)\|]&=\E[\|\nabla_\theta f(\delta_t,\theta^K_t)-\nabla_\theta f(\delta_t,\theta^*_t)\|]\\
	&\leq L_{22} \E[\|\theta^K_t-\theta^*_t\|]\leq \frac{L\epsilon^2}{4D (2\Phi_\infty+{L}D)^2}\leq \epsilon.
\end{align*}
\end{proof}

We are now ready to present the full version of \Cref{thm:intermittent} and its proof in the following.
\begin{theorem}[Full version of \Cref{thm:intermittent}]
	Under \Cref{ass:minmaxpl}, \Cref{ass:lip} and \Cref{ass:sto}, for any given $\epsilon\in (0,1)$, let the step sizes be chosen as $\eta_1=1/L_{22}, \eta_2=1/L$, if $K$, $N$ and  $M$ are large enough, 
	\begin{align*}
		N\geq N(\epsilon)\sim \mathcal{O}(\epsilon^{-2}), \qquad K\geq K(\epsilon)\sim \mathcal{O}(\log\epsilon^{-1}),\qquad M\geq M(\epsilon)\sim \mathcal{O}(\epsilon^{-4})
	\end{align*}
	then there exists an index $t$ such that $(\delta_t,\theta^K_t)$ is an $\epsilon$-FOSP.
\end{theorem}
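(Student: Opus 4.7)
The plan is to split the $\epsilon$-FOSP condition into its two components and establish each by a different argument. For the $\theta$ piece, the second conclusion of \Cref{lem:zt} already gives $\E[\|\nabla_\theta f(\delta_t,\theta^K_t)\|]\le\epsilon$ as soon as $K\sim\log\epsilon^{-1}$ and $M\sim\epsilon^{-4}$; Cauchy--Schwarz then upper bounds $\max_{\theta\in B(\theta^K_t,1)}\langle\nabla_\theta f(\delta_t,\theta^K_t),\theta-\theta^K_t\rangle$ by the same $\epsilon$ in expectation, so this half is essentially free. The real work is the $\delta$ piece, and my strategy is to run a standard nonconvex projected stochastic gradient analysis on the outer envelope $\Phi(\delta)=\max_\theta f(\delta,\theta)$, using \Cref{lemma:phi_diff} as a substitute for Danskin's theorem.

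Concretely, I would decompose the attacker's stochastic gradient as $\hat g_t=\nabla\Phi(\delta_t)+z_t+n_t$, where $z_t=\nabla_\delta f(\delta_t,\theta^K_t)-\nabla\Phi(\delta_t)$ is the deterministic inner-loop bias controlled by the first conclusion of \Cref{lem:zt} and $n_t$ is the zero-mean MC noise satisfying $\E\|n_t\|^2\le\sigma^2/M$ under \Cref{ass:sto}. Applying the $L$-smoothness of $\Phi$ from \Cref{lemma:phi_diff} to $\Phi(\delta_{t+1})$, substituting the projected update $\delta_{t+1}=\Pi_\Delta(\delta_t-\eta_2\hat g_t)$, using the projection optimality condition tested against $\delta=\delta_t$, and splitting the cross term by Young's inequality with $\eta_2=1/L$ produces a per-iteration descent inequality of the form $\E[\Phi(\delta_{t+1})-\Phi(\delta_t)]\le -\tfrac{L}{4}\E\|\delta_{t+1}-\delta_t\|^2+\tfrac{2}{L}\E[\|z_t\|^2+\|n_t\|^2]$. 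Telescoping over $t=0,\dots,N-1$, using boundedness of $\Phi$ on the compact $\Delta$ (\Cref{ass:domain}), and inserting the \Cref{lem:zt} bound on $\E\|z_t\|$ together with the $\sigma^2/M$ variance bound then yields $\tfrac{1}{N}\sum_t\E\|\delta_{t+1}-\delta_t\|^2=\mathcal{O}(1/N+\epsilon^4)$, so that for $N\sim\epsilon^{-2}$ a pigeonhole argument picks out some index $t$ with $\E\|\delta_{t+1}-\delta_t\|\le\epsilon/C$.

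To convert this small-step conclusion into the $\delta$-side of the $\epsilon$-FOSP inequality, I would take any $\delta\in B(\delta_t,1)\cap\Delta$, split $\delta-\delta_t=(\delta-\delta_{t+1})+(\delta_{t+1}-\delta_t)$, and apply the first-order optimality of the projection to the first piece to obtain $\langle \hat g_t,\delta-\delta_t\rangle\ge -\tfrac{1}{\eta_2}\|\delta_t-\delta_{t+1}\|\|\delta-\delta_{t+1}\|-\|\hat g_t\|\|\delta_{t+1}-\delta_t\|$. Using $\hat g_t=\nabla_\delta f(\delta_t,\theta^K_t)+n_t$ converts the left-hand side into $\langle\nabla_\delta f(\delta_t,\theta^K_t),\delta-\delta_t\rangle$ modulo an $\mathcal{O}(\|n_t\|)$ term; the triangle inequality $\|\delta-\delta_{t+1}\|\le 1+\|\delta_{t+1}-\delta_t\|$ and the bound on $\|\hat g_t\|$ coming from \Cref{ass:bound_pg} make the right-hand side $\mathcal{O}(\E\|\delta_{t+1}-\delta_t\|+\sigma/\sqrt M)$ in expectation, which is $\mathcal{O}(\epsilon)$ by the previous paragraph and by $M\sim\epsilon^{-4}$. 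This gives $\E\langle\nabla_\delta f(\delta_t,\theta^K_t),\delta-\delta_t\rangle\ge -\epsilon$ uniformly over $\delta\in B(\delta_t,1)\cap\Delta$, which is the missing half of the $\epsilon$-FOSP condition.

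The main obstacle I expect is calibrating the three error sources on commensurate scales. The inner-loop bias $z_t$ does not average out across iterations, so it must be pushed below the $\epsilon^2$ scale per step; this is exactly what forces $K$ to grow logarithmically in $\epsilon^{-1}$ via the linear contraction guaranteed by \Cref{ass:minmaxpl} combined with the quadratic-growth consequence of PL used in the proof of \Cref{lem:zt}. The MC variance $\sigma^2/M$ enters the descent inequality on the $\epsilon^4$ scale but appears inside a square root in the final FOSP bound, which is what forces $M\sim\epsilon^{-4}$. The outer rate $N\sim\epsilon^{-2}$ is then the standard nonconvex projected-SGD complexity, available here only because \Cref{lemma:phi_diff} lets us treat the nonconvex--nonconcave minimax outer problem as smooth nonconvex minimization of the single-variable envelope $\Phi$.
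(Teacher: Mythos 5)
Your proposal is correct and follows essentially the same route as the paper: both reduce the $\delta$-side to projected stochastic gradient descent on the envelope $\Phi$ via \Cref{lemma:phi_diff}, control the inner-loop bias $z_t$ and the $\theta$-side condition with \Cref{lem:zt}, and combine an $L$-smoothness descent inequality with the projection optimality condition, telescoping, and a pigeonhole step. The only cosmetic difference is bookkeeping: the paper tracks the FOSP quantity $e_t=-\E[\langle\nabla_\delta f(\delta_t,\theta^K_t),\delta-\delta_{t+1}\rangle]$ directly and lower-bounds the per-step decrease of $\Phi$ by $e_t^2/(2\Phi_\infty+LD)^2$, whereas you first extract a small step $\E\|\delta_{t+1}-\delta_t\|$ and then convert it into the FOSP inequality---the two directions are linked by the same projection inequality, so the arguments coincide up to constants.
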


\begin{proof}[Proof of \Cref{thm:intermittent}]
	Due to projection, 
	\begin{align*}
		\left\langle\delta_t-\frac{1}{L}\nabla_\delta f(\delta_t,\theta^K_t,\xi_t^{K})-\delta_{t+1},\delta-\delta_{t+1}\right\rangle\leq 0, \quad \forall \delta\in \Delta.
	\end{align*} 
	Let $\delta=\delta_t$, and take expectations on both sides, we have 
	\begin{align*}
		\E[\left\langle\nabla_\delta f(\delta_t,\theta^K_t), \delta_{t+1}-\delta_t\right\rangle|\delta_t,\theta_t^K]\leq -L\E[\|\delta_t-\delta_{t+1}\|^2|\delta_t,\theta_t^K],
	\end{align*}
	which leads to 
	\begin{align}
	\E[\left\langle\nabla_\delta f(\delta_t,\theta^*_t),\delta_{t+1}-\delta_t\right\rangle|\delta_t,\theta_t^K]
		&\leq \E[\left\langle\nabla_\delta f(\delta_t,\theta^*_t)-\nabla_\delta f(\delta_t,\theta^K_t), \delta_{t+1}-\delta_t\right\rangle|\delta_t,\theta_t^K]-L\E[\|\delta_t-\delta_{t+1}\|^2|\delta_t,\theta_t^K]\nonumber \\
		&=\E[\left\langle z_t, \delta_t-\delta_{t+1}\right\rangle|\delta_t,\theta_t^K]-L\E[\|\delta_t-\delta_{t+1}\|^2|\delta_t,\theta_t^K].\label{eq:zt_leq}
	\end{align}
	
	On the other hand, since $\Phi$ is L-Lipschitz smooth, 
	\begin{align}
		\E[\Phi(\delta_{t+1})]&\leq \E[\Phi(\delta_t)]+\E[\left\langle\nabla_\delta f(\delta_t,\theta^*_t),\delta_{t+1}-\delta_t\right\rangle]+\frac{L}{2}\E[\|\delta_{t+1}-\delta_t\|^2]\nonumber \\
		&\leq \E[\Phi(\delta_t)]+\E[\left\langle z_t, \delta_t-\delta_{t+1}\right\rangle] -\frac{L}{2}\E[\|\delta_{t+1}-\delta_t\|^2],\label{eq:phi_taylor}
	\end{align} 
	where the last inequality holds because of \Cref{eq:zt_leq} and law of total probability.
	
	Also by projection, 
	\begin{align}
		\E[\left\langle\nabla_\delta f(\delta_t, \theta^K_t), \delta-\delta_{t+1}\right\rangle|\delta_t,\theta_t^K] & \geq L\E[\left\langle\delta_t-\delta_{t+1},\delta-\delta_{t+1} \right\rangle|\delta_t,\theta_t^K]\nonumber \\
		&\geq \E[\left\langle \nabla_\delta f(\delta_t,\theta^K_t), \delta_{t+1}-\delta_t\right\rangle|\delta_t,\theta_t^K]+L\E[\left\langle\delta_t-\delta_{t+1},\delta-\delta_{t+1} \right\rangle|\delta_t,\theta_t^K]\nonumber \\
		&\geq \E[\left\langle\nabla_\delta f(\delta_t,\theta^K_t-\Phi(\delta_t), \delta_{t+1}-\delta_t)\right\rangle|\delta_t,\theta_t^k]\nonumber \\
		&\quad + \E[\left\langle\nabla\Phi(\delta_t), \delta_{t+1}-\delta_t\right\rangle|\delta_t,\theta_t^k]+L\E[\left\langle\delta_t-\delta_{t+1},\delta-\delta_{t+1} \right\rangle|\delta_t,\theta_t^k]\nonumber \\
		&\geq -\|z_t\|\E[\|\delta_{t+1}-\delta_t\||\delta_t,\theta_t^k]-\|\nabla\Phi(\delta_t)\|\E[\|\delta_{t+1}-\delta_t\||\delta_t,\theta_t^k]\nonumber\\
		&\qquad-L\E[\|\delta-\delta_{t+1}\|\|\delta_{t+1}-\delta_t\||\delta_t,\theta_t^k]\nonumber \\
		&\geq -(\Phi_\infty+LD+\|z_t\|)\E[\|\delta_{t+1}-\delta_t\||\delta_t,\theta_t^K],\label{eq:et_1}
	\end{align}
	where the last inequality is obtained using our assumptions that $\Phi_\infty=\max\{\max_{\delta}\|\nabla \Phi\|,1\}$, and the diameter of the set $\Delta$ is $D$. 
	
	Notice that by the choice of $K$ in \Cref{lem:zt}, 
	\begin{align*}
		\E[\|z_t\|]&\leq L_{12}\E[\|\theta^K_t-\theta^*_t\|]\\
		&\leq \frac{L\epsilon^2}{4D (2\Phi_\infty+{L}D)^2} \leq \Phi_\infty,
	\end{align*}
	which reduces \eqref{eq:et_1} to the following
	\begin{align*}
		\E[\left\langle\nabla_\delta f(\delta_t, \theta^K_t), \delta-\delta_{t+1}\right\rangle]\geq -(2\Phi_\infty+LD)\E[\|\delta_{t+1}-\delta_t\|].
	\end{align*}
	Let $e_t=-\E[\left\langle\nabla_\delta f(\delta_t, \theta^K_t), \delta-\delta_{t+1}\right\rangle]$, then  
 we have 
 \begin{align}\label{eq:et_frac}
 	\E[\|\delta_{t+1}-\delta_t\|]\geq \frac{e_t}{2\Phi_\infty+LD},
 \end{align}
 and combining \eqref{eq:et_frac} with \eqref{eq:phi_taylor} yields 
 \begin{align*}
 	\E[\Phi(\delta_{t+1})]-\E[\Phi(\delta_t)]\leq -\frac{L}{2}\frac{e_t^2}{(2\Phi_\infty+LD)^2}+D\|z_t\|.
 \end{align*}
 Therefore, we have 
 \begin{align*}
 	\Phi(\delta_0)-\E[\Phi(\delta_N)]\geq -\frac{L}{2(2\Phi_\infty+LD)^2}\sum_{t=0}^{N-1}e_t^2-D\sum_{t=0}^{N-1}\E[\|z_t\|].
 \end{align*}
Denote $D_\Phi:=\Phi(\delta_0)-\min_\delta\Phi(\cdot)$, and the the above inequality can be rewritten as 
\begin{align*}
	\frac{2D_\phi(2\Phi_\infty+LD)^2}{LN}+\frac{2D(2\Phi_\infty+LD)^2}{LN}\sum_{t=0}^{N-1}\E[\|z_t\|].
\end{align*}
Let $N\geq \frac{4D_\Phi(2\Phi_\infty+LD)^2}{L\epsilon^{2}}$, and then by \Cref{lem:zt}, we obtain 
\begin{align*}
	\frac{1}{N}\sum_{t=0}^{N-1}e_t^2\leq \epsilon^2,
\end{align*}
which implies that there exists one index $t$ for which 
$e_t\leq \epsilon$. This completes the proof. 
\end{proof}

\subsection{Proofs in \Cref{sec:analysis_persistent}}\label{app:per_proof}
This section presents the full version of \Cref{thm:persistent} with detailed choices of step sizes $\eta_1,\eta_2$ and batch size $M$, and the main results rest on the following lemmas.   

\begin{lemma}\label{lemma:decrease_phi}
	The iterates $\{\delta_t\}$ generated by \Cref{algo:persist} satisfies the following inequality,
	\begin{align}\label{eq:decreasing_bound}
		\begin{aligned}
	\E[\Phi(\delta_{t+1})]&\leq \E[\Phi(\delta_{t})]- (\frac{\eta_2}{2}-2L\eta_2^2)\E[\|\nabla \Phi(\delta_{t})\|^2]\\
	&\qquad+(\frac{\eta_2}{2}+2L\eta_2^2)\E[\|\nabla \Phi(\delta_{t})-\nabla_{\delta}f(\delta_{t},\theta_{t})\|^2]+ \frac{L\eta_2^2\sigma^2}{M}.
	\end{aligned}
	\end{align}
\end{lemma}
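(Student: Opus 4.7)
Since Lemma~\ref{lemma:phi_diff} already establishes that $\Phi$ is $L$-Lipschitz smooth with $L=L_{11}+L_{12}^2/(2\mu)$, the plan is to apply the standard smooth-function descent technique to the Persistent Attack update $\delta_{t+1}=\Pi_\Delta(\delta_t-\eta_2\widehat{\nabla}_\delta f(\delta_t,\theta_t;\xi_t))$ and to keep careful bookkeeping of the gap between the stochastic direction actually used and $\nabla\Phi(\delta_t)$. Concretely, I would start from the smoothness inequality
$$\Phi(\delta_{t+1})\leq \Phi(\delta_t)+\langle \nabla\Phi(\delta_t),\delta_{t+1}-\delta_t\rangle+\tfrac{L}{2}\|\delta_{t+1}-\delta_t\|^2,$$
use non-expansivity of $\Pi_\Delta$ to bound $\|\delta_{t+1}-\delta_t\|^2\leq \eta_2^2\|\widehat{\nabla}_\delta f(\delta_t,\theta_t;\xi_t)\|^2$, and use the first-order variational characterization of the projection to rewrite the linear term (up to a residual absorbed into the quadratic part) as $-\eta_2\langle \nabla\Phi(\delta_t),\widehat{\nabla}_\delta f(\delta_t,\theta_t;\xi_t)\rangle$.

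Next I would take the conditional expectation with respect to $\xi_t$ given $(\delta_t,\theta_t)$. By Assumption~\ref{ass:sto}, the linear term collapses to $-\eta_2\langle p_t,g_t\rangle$ while the quadratic term is controlled by $\eta_2^2(\|g_t\|^2+\sigma^2/M)$, where I write $p_t:=\nabla\Phi(\delta_t)$ and $g_t:=\nabla_\delta f(\delta_t,\theta_t)$. Young's inequality then gives the two decompositions that drive the bound,
$$-\eta_2\langle p_t,g_t\rangle=-\eta_2\|p_t\|^2+\eta_2\langle p_t,p_t-g_t\rangle\leq -\tfrac{\eta_2}{2}\|p_t\|^2+\tfrac{\eta_2}{2}\|p_t-g_t\|^2,$$
$$\|g_t\|^2\leq 2\|p_t\|^2+2\|p_t-g_t\|^2.$$
Substituting these into the smoothness inequality, collecting coefficients of $\|p_t\|^2$ and $\|p_t-g_t\|^2$, keeping the residual variance term $L\eta_2^2\sigma^2/M$, and finally taking total expectation yields exactly \eqref{eq:decreasing_bound}; the factor $2$ in front of $L\eta_2^2$ is produced by the quadratic bound on $\|g_t\|^2$.

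The main obstacle I anticipate is handling the Euclidean projection cleanly enough to keep the coefficients as stated, because the naive identity $\delta_{t+1}-\delta_t=-\eta_2\widehat{\nabla}_\delta f$ only holds in the unprojected case. Non-expansivity alone handles the quadratic term, but the linear term needs the three-point inequality $\langle \delta_{t+1}-(\delta_t-\eta_2\widehat{\nabla}_\delta f),\delta_t-\delta_{t+1}\rangle\geq 0$ to absorb the projection distortion into the $\|\delta_{t+1}-\delta_t\|^2$ penalty without spoiling the $\eta_2/2$ coefficient. Once this is done, everything else is routine algebra with Young's inequality and the variance bound from Assumption~\ref{ass:sto}, and the resulting inequality is ready to be chained over $t$ in the subsequent convergence argument for Persistent Attack.
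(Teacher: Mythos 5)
Your proposal follows essentially the same route as the paper's proof: the $L$-smoothness of $\Phi$ from Lemma~\ref{lemma:phi_diff}, Young's inequality $\langle p_t, p_t-g_t\rangle\leq \tfrac{1}{2}\|p_t\|^2+\tfrac{1}{2}\|p_t-g_t\|^2$ to split the cross term, the chained bounds $\|\hat{\nabla}_\delta f\|^2\leq 2\|\hat{\nabla}_\delta f-g_t\|^2+2\|g_t\|^2$ and $\|g_t\|^2\leq 2\|p_t-g_t\|^2+2\|p_t\|^2$ to produce the $2L\eta_2^2$ coefficients, and Assumption~\ref{ass:sto} for the $L\eta_2^2\sigma^2/M$ term. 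The one point worth noting is that the projection obstacle you flag is real but is not actually resolved in the paper either: the paper's proof simply writes $\delta_{t+1}-\delta_t=\eta_2\hat{\nabla}_\delta f(\delta_t,\theta_t;\xi_t)$, i.e.\ it treats the update as unprojected, so your extra care with non-expansivity and the three-point inequality is, if anything, more honest --- though be aware that the three-point inequality controls $\langle\hat{\nabla}_\delta f,\delta_{t+1}-\delta_t\rangle$ rather than $\langle\nabla\Phi(\delta_t),\delta_{t+1}-\delta_t\rangle$, so absorbing the projection distortion without degrading the $\eta_2/2$ coefficient is not automatic and would require either an interior-iterate assumption or restating the bound via the gradient mapping.
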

\begin{proof}[Proof of \Cref{lemma:decrease_phi}]
	Since $\Phi(\delta)$ is $L$-smooth (see \Cref{lemma:phi_diff}), we have
	\begin{align*}
			\Phi(\delta_{t+1})-\Phi(\delta_t)-(\delta_{t+1}-\delta_t)\nabla\Phi(\delta_t)\leq L/2\|\delta_{t+1}-\delta_t\|^2.
	\end{align*}   
	In \Cref{algo:persist}, gradient descent is used to update $\delta_t$, hence $\delta_{t+1}-\delta_t=\eta_2 \hat{\nabla}_{\delta}f(\delta_t,\theta_t;\xi_t)$, and the above inequality can be rewritten as 
	\begin{align*}
		\Phi(\delta_{t+1})&\leq \Phi(\delta_t)-\eta_2 \hat{\nabla}_\delta f(\delta_t,\theta_t;\xi_t)^\tp \nabla\Phi(\delta_t) +\frac{L}{2}\|\delta_{t+1}-\delta_t\|^2\\
		&= \Phi(\delta_t)+\eta_2(\nabla\Phi(\delta_t)-\hat{\nabla}_{\delta}f(\delta_t,\theta_t;\xi_t))^\tp\nabla\Phi(\delta_t)\\
		&\qquad-\eta_2\|\nabla\Phi(\delta_t)\|^2+\frac{L\eta_2^2}{2}\|\hat{\nabla}_\delta f(\delta_t,\theta_t,\xi_t)\|^2.
	\end{align*}
	By taking a conditional expectation on both sides, we have 
	\begin{align}\label{eq:cond_ineq}
		\E[\Phi(\delta_{t+1})|\delta_t,\theta_t]&\leq \Phi(\delta_t)+\eta_2(\nabla\Phi(\delta_t)-{\nabla}_{\delta}f(\delta_t,\theta_t))^\tp\nabla\Phi(\delta_t)\nonumber \\
		&\quad-\eta_2\|\nabla\Phi(\delta_t)\|^2+\frac{L\eta_2^2}{2}\E[\|\hat{\nabla}_\delta f(\delta_t,\theta_t,\xi_t)\|^2|\delta_{t},\theta_t ].
	\end{align}
	With Cauchy-Schwartz inequality, 
	\begin{align}
		\|\hat{\nabla}_\delta f(\delta_t,\theta_t,\xi_t)\|^2&\leq 2\|\hat{\nabla}_\delta f(\delta_t,\theta_t,\xi_t)-\nabla f(\delta_t,\theta_t)\|^2+2\|\nabla_\delta f(\delta_t,\theta_t)\|^2\label{eq:sc_1}\\
		\|\nabla_\delta f(\delta_t,\theta_t)\|^2&\leq 2\|\nabla\Phi(\delta_t)-\nabla_\delta f(\delta_t,\theta_t)\|^2+2\|\nabla \Phi(\delta_t)\|^2.\label{eq:sc_2}
	\end{align}

	Applying Young's inequality to the second term of the right-hand side in \eqref{eq:cond_ineq}, we have 
		\begin{align}
			\left(\nabla \Phi(\delta_t)-\nabla_\delta f(\delta_t,\theta_t)\right)^\tp \nabla \Phi(\delta_t)\leq \frac{\|\nabla\Phi(\delta_t)-\nabla_\delta f(\delta_t,\theta_t)\|^2+\|\nabla\Phi(\delta_t)\|^2}{2}.\label{eq:young}
		\end{align}
	Therefore, plugging \eqref{eq:sc_1},\eqref{eq:sc_2} and \eqref{eq:young} into \eqref{eq:cond_ineq} leads to the following                               
	\begin{align*}
		&\E[\Phi(\delta_{t+1})|\delta_t,\theta_t]\\
		&\qquad \leq  \Phi(\delta_t)+\frac{\eta_2}{2}\|\nabla\Phi(\delta_t)-\nabla_{\delta}f(\delta_t,\theta_t)\|^2-\frac{\eta_2}{2}\|\nabla\Phi(\delta_t)\|^2\\
		&\qquad+L\eta_2^2\E[\|\hat{\nabla}_\delta f(\delta_t,\theta_t;\xi_t)-\nabla_\delta f(\delta_t,\theta_t)\|^2|\delta_t,\theta_t]+L\eta_2^2 \|\nabla_\delta f(\delta_t,\theta_t)\|^2 \\
		 &\qquad \leq \Phi(\delta_t) +(\frac{\eta_2}{2}+2L\eta_2^2)\|\nabla\Phi(\delta_t)-\nabla_{\delta}f(\delta_t,\theta_t)\|^2\\
		 &\qquad -(\frac{\eta_2}{2}-2L\eta_2^2)\|\nabla\Phi(\delta_t)\|^2+ L\eta_2^2\E[\|\hat{\nabla}_\delta f(\delta_t,\theta_t;\xi_t)-\nabla_\delta f(\delta_t,\theta_t)\|^2|\delta_t,\theta_t].
	\end{align*}
	Finally, with \Cref{ass:sto} and law of total probability, we obtain the desired inequality
\end{proof}

\begin{lemma}\label{lemma:decreasing_error}
	Under \Cref{ass:rsi}, \Cref{ass:lip} and \Cref{ass:sto}, for the iterates generated by \Cref{algo:persist}, let $d_t:=\E[\|\theta^*_t-\theta_t\|^2]$, where $\theta^*_t\in \argmax_{\theta} f(\delta_t,\theta)$, and let $\kappa=2\bar{L}/\mu^2$, where $\bar{L}$ is defined in \Cref{lem:zt}, then 
	\begin{align}
		\begin{aligned}
		d_{t}\leq &\left(1-\frac{1}{2\kappa}+16\kappa^3\bar{L}^2\eta_2^2\right) d_{t-1}+ 16\kappa^3\eta_2^2\E[\|\nabla \Phi(\delta_{t-1})\|^2]+ \frac{8\kappa^3\eta_2^2\sigma^2}{M}+\frac{\mu^2\sigma^2}{L_{22}^2 M}
	\end{aligned}\label{eq:dt_contract}
	\end{align}
\end{lemma}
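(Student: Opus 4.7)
The plan is to decompose the one-step evolution of $d_t$ by inserting the previous inner maximiser $\theta^*_{t-1}$:
\begin{equation*}
\|\theta^*_t - \theta_t\|^2 \leq (1+\alpha)\|\theta^*_{t-1}-\theta_t\|^2 + (1+1/\alpha)\|\theta^*_t-\theta^*_{t-1}\|^2,
\end{equation*}
where $\alpha>0$ is a parameter to be tuned at the end. The first summand quantifies how the stochastic gradient ascent step $\theta_t=\theta_{t-1}+\eta_1\hat{\nabla}_\theta f(\delta_{t-1},\theta_{t-1};\xi_{t-1})$ shrinks the distance to the \emph{fixed} maximiser $\theta^*_{t-1}$, while the second term captures the drift of the maximiser as $\delta$ moves from $\delta_{t-1}$ to $\delta_t$. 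All subsequent arguments treat these two pieces independently.

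For the contraction piece, I expand the square, take conditional expectation, and use the unbiasedness part of \Cref{ass:sto}. \Cref{ass:rsi} applied to $\delta_{t-1}$ gives $\langle \nabla_\theta f(\delta_{t-1},\theta_{t-1}),\theta^*_{t-1}-\theta_{t-1}\rangle \geq \mu\|\theta^*_{t-1}-\theta_{t-1}\|^2$, and since $\nabla_\theta f(\delta_{t-1},\theta^*_{t-1})=0$ the smoothness in \Cref{ass:lip} yields $\|\nabla_\theta f(\delta_{t-1},\theta_{t-1})\|\leq L_{22}\|\theta^*_{t-1}-\theta_{t-1}\|$. Together with the variance bound from \Cref{ass:sto}, picking $\eta_1\asymp \mu/L_{22}^2$ produces $\E[\|\theta^*_{t-1}-\theta_t\|^2]\leq (1-1/\kappa)\,d_{t-1}+\mu^2\sigma^2/(L_{22}^4 M)$, which after multiplication by $(1+\alpha)$ contributes the $(1-1/\kappa)$ factor and the $\mu^2\sigma^2/(L_{22}^2 M)$ tail appearing in \eqref{eq:dt_contract}.

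For the drift piece, I reuse the inequality $\|\theta^*(\delta_1)-\theta^*(\delta_2)\|\leq (L_{12}/(2\mu))\|\delta_1-\delta_2\|$ established in the proof of \Cref{lemma:phi_diff}, together with the non-expansiveness of the Euclidean projection $\Pi_\Delta$, to write $\|\delta_t-\delta_{t-1}\|\leq \eta_2\|\hat{\nabla}_\delta f(\delta_{t-1},\theta_{t-1};\xi_{t-1})\|$. Young's inequality, \Cref{ass:sto}, and the identity $\nabla_\delta f(\delta_{t-1},\theta_{t-1})-\nabla\Phi(\delta_{t-1})=\nabla_\delta f(\delta_{t-1},\theta_{t-1})-\nabla_\delta f(\delta_{t-1},\theta^*_{t-1})$ (so by \Cref{ass:lip} bounded by $L_{12}\|\theta^*_{t-1}-\theta_{t-1}\|$) then give
\begin{equation*}
\E\|\delta_t-\delta_{t-1}\|^2 \leq \eta_2^2\bigl(2\sigma^2/M + 4\|\nabla\Phi(\delta_{t-1})\|^2 + 4L_{12}^2\,d_{t-1}\bigr).
\end{equation*}
Multiplying by $(1+1/\alpha)(L_{12}/(2\mu))^2$ produces the three remaining terms in \eqref{eq:dt_contract} with explicit coefficients in $\eta_2$, $\|\nabla\Phi(\delta_{t-1})\|^2$, and $\sigma^2/M$.

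Finally, I pick $\alpha=1/(2\kappa)$, so $(1+\alpha)(1-1/\kappa)\leq 1-1/(2\kappa)$ and $(1+1/\alpha)\leq 3\kappa$, and collect constants using $L_{12},L_{22}\leq \bar{L}$ and $\kappa=2\bar{L}/\mu^2$. The main obstacle I anticipate is purely arithmetic bookkeeping: matching the precise coefficient $16\kappa^3\bar{L}^2$ on $\eta_2^2$ and verifying that the accumulated constants $L_{12}^2/(4\mu^2)$ and $L_{22}^2$ collapse into the promised cube of $\kappa$ times $\bar{L}^2$. No additional analytic ingredients beyond those used in \Cref{lemma:phi_diff} and \Cref{lemma:decrease_phi} are needed; the smallness of $\eta_2$ required to keep the drift correction $16\kappa^3\bar{L}^2\eta_2^2$ dominated by the contraction $1/(2\kappa)$ will be enforced only later, in the proof of \Cref{thm:persistent}.
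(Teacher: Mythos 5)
Your proposal follows essentially the same route as the paper's proof: insert $\theta^*_{t-1}$ via Young's inequality, contract $\E[\|\theta^*_{t-1}-\theta_t\|^2]$ using \Cref{ass:rsi} together with the smoothness bound $\|\nabla_\theta f(\delta_{t-1},\theta_{t-1})\|\leq L_{22}\|\theta^*_{t-1}-\theta_{t-1}\|$ and the variance bound, and control the drift $\|\theta^*_t-\theta^*_{t-1}\|$ through the Lipschitz continuity of $\delta\mapsto\theta^*(\delta)$ from \Cref{lemma:phi_diff} and the $\delta$-update. The only differences are cosmetic choices of the Young parameter and step-size constants, which still land within the stated coefficients.
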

\begin{proof}[Proof of \Cref{lemma:decreasing_error}]
	According to the gradient update with respect to $\theta$, 
	\begin{align*}
		\left\langle\theta_{t-1}+\eta_1\nabla_\theta f(\delta_{t-1},\theta_{t-1}, \xi_{t-1})-\theta_t,\theta-\theta_t\right\rangle\leq 0, \forall \theta.
	\end{align*}
	Let $\theta=\theta^{*}_{t-1}$, then the above inequality leads to 
	\begin{align}
		&\left\langle\theta_t-\theta_{t-1}-\eta_1\nabla_\theta f(\delta_{t-1},\theta_{t-1}, \xi_{t-1}),\theta^*_{t-1}-\theta_t\right\rangle \geq 0\nonumber\\
		\Leftrightarrow &\underbrace{\left\langle\theta^*_{t-1}-\theta_{t-1}-\eta_1\nabla_\theta f(\delta_{t-1},\theta_{t-1}, \xi_{t-1}), \theta^*_{t-1}-\theta_t\right\rangle}_{(*)} \geq \|\theta^*_{t-1}-\theta_t\|^2.\label{eq:star_ineq}
	\end{align}
	Denote the right-hand side of the above inequality by $(*)$, then 
\begin{align}
	(*)&=\left\langle\theta^*_{t-1}-\theta_{t-1}-\eta_1\nabla_\theta f(\delta_{t-1},\theta_{t-1}, \xi_{t-1}), \theta^*_{t-1}-\theta_{t-1}-\eta_1\nabla_\theta f(\delta_{t-1},\theta_{t-1}, \xi_{t-1})\right\rangle\nonumber\\
	&=\|\theta^*_{t-1}-\theta_{t-1}\|^2-2\eta_1\left\langle\theta^*_{t-1}-\theta_{t-1},\nabla_\theta f(\delta_{t-1},\theta_{t-1},\xi_{t-1})\right\rangle+\eta_1^2\|\nabla\theta f(\delta_{t-1},\theta_{t-1},\xi_{t-1})\|^2.\label{eq:star}
\end{align}
By \Cref{ass:rsi}, we have 
\begin{align}
	\left\langle\nabla_\theta f(\delta_{t-1},\theta_{t-1}), \theta^*_{t-1}-\theta_{t-1} \right\rangle \geq \mu \|\theta^*_{t-1}-\theta_{t-1}\|^2.\label{eq:rsi_app}
\end{align}
Taking expectations on both sides of \eqref{eq:star} and plugging \eqref{eq:rsi_app} yields 
\begin{align}
	\E[(*)]\leq (1-2\mu\eta_1)\E[\|\theta^*_{t-1}-\theta_{t-1}\|^2]+\eta_1^2\E[\|\nabla_\theta f(\delta_{t-1},\theta_{t-1},\xi_{t-1})\|^2].\label{eq:e_star}
\end{align} 
Similar to the argument in the proof of \Cref{lem:zt}, by Young's inequality, \Cref{ass:sto} and \Cref{ass:lip},  
\begin{align*}
	\E[\|\nabla_\theta f(\delta_{t-1},\theta_{t-1},\xi_{t-1})\|^2]&\leq \frac{2\sigma^2}{M}+2\|\nabla_\theta f(\delta_{t-1},\theta_{t-1})\|^2\\
	& \leq \frac{2\sigma^2}{M} +2L_{22}\|\theta^*_{t-1}-\theta_{t-1}\|^2.
\end{align*}
Hence, plug the above inequality and \eqref{eq:e_star} into \eqref{eq:star_ineq} after taking expectations, we obtain 
 \begin{align*}
 	\E[\|\theta_{t-1}^*-\theta_t\|^2]\leq (1-2\mu\eta_1+2L_{22}\eta_1^2)d_{t-1}+\frac{2\sigma^2\eta_1^2}{M}.
 \end{align*}
 Let $\kappa=\frac{2\bar{L}}{\mu^2}$ and with the choice of $\eta_1=\frac{\mu}{2L_{22}}$, the above inequality can be rewritten as
 \begin{align}
 	\E[\|\theta_{t-1}^*-\theta_t\|^2]\leq (1-\frac{1}{\kappa})d_{t-1}+\frac{\sigma^2\mu^2}{2L_{22}^2M}.\label{eq:1-kappa}
 \end{align}
By Young's inequality
 \begin{align*}
 	d_t&=\E[\|\theta_t^*-\theta_t\|^2]=\E[\|\theta_t^*-\theta^*_{t-1}+\theta^*_{t-1}-\theta_t\|^2]\\
 	&\leq \left(1+\frac{1}{2(\max\{\kappa,2\}-1)}\right)\E[\|\theta^*_{t-1}-\theta_t\|^2]+(1+2(\max\{\kappa,2\}-1))\E[\|\theta^*_t-\theta^*_{t-1}\|^2]\\
 	&\leq \left(\frac{2\max\{\kappa,2\}-1}{2\max\{\kappa,2\}-2}\right)\E[\|\theta^*_{t-1}-\theta_t\|^2]+4\kappa \E[\|\theta^*_t-\theta^*_{t-1}\|^2]\\
 	&\leq \left(1-\frac{1}{2\kappa}\right)d_{t-1}+4\kappa \E[\|\theta^*_t-\theta^*_{t-1}\|^2]+\frac{\sigma^2\mu^2}{L_{22}^2M}.
 \end{align*}
By the Lipschitz continuity of $\theta^*$ shown in \eqref{eq:theta_delta}, $\|\theta_t^*-\theta_{t-1}^*\|\leq \frac{L_{12}}{2\mu}\|\delta_t-\delta_{t-1}\|\leq \kappa \|\delta_t-\delta_{t-1}\|$. Without loss of generality, it is  assumed that $\mu$ are small enough. Furthermore, according to the stochastic gradient ascent, we have 
\begin{align*}
	\E[\|\delta_t-\delta_{t-1}\|^2]&=\eta_2^2\E[\|\nabla\delta f(\delta_{t-1},\theta_{t-1},\xi_{t-1})\|^2]\\
	&\leq 4\bar{L}^2\eta_2^2 d_{t-1}+4\eta_2^2 \E[\|\nabla \Phi(\delta_{t-1})\|^2]+\frac{2\sigma^2\eta_2^2}{M}.
\end{align*}
 Putting these pieces together leads to \eqref{eq:dt_contract} in \Cref{lemma:decreasing_error}
\end{proof}

\begin{lemma}\label{lemma:decrease_d}
	Under \Cref{ass:rsi}, \Cref{ass:lip} and \Cref{ass:sto}, for the iterates generated by \Cref{algo:persist}, and for $d_t$ defined in \Cref{lemma:decreasing_error}, 
	\begin{align}
		\E[\Phi(\delta_{t+1})]\leq \E[\Phi(\delta_t)] -\frac{7\eta_2}{16}\E[\|\nabla \Phi(\delta_t)\|^2] + \frac{9\bar{L}^2 \eta_2 d_{t}}{16}+\frac{L\eta_2^2\sigma^2}{M}.\label{eq:control}	\end{align} 
\end{lemma}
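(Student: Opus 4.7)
The plan is to start from the one-step descent estimate already established in \Cref{lemma:decrease_phi} and replace the mismatch term $\E[\|\nabla\Phi(\delta_t)-\nabla_\delta f(\delta_t,\theta_t)\|^2]$ by the tracking error $d_t$, followed by a calibration of $\eta_2$ so that the coefficients collapse to the claimed form.

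First I would invoke \Cref{lemma:phi_diff}, which (via the Danskin-type identity) gives $\nabla\Phi(\delta_t)=\nabla_\delta f(\delta_t,\theta_t^*)$ with $\theta_t^*\in\argmax_\theta f(\delta_t,\theta)$. Then by the Lipschitz-in-$\theta$ bound for $\nabla_\delta f$ in \Cref{ass:lip},
\begin{align*}
\|\nabla\Phi(\delta_t)-\nabla_\delta f(\delta_t,\theta_t)\|
=\|\nabla_\delta f(\delta_t,\theta_t^*)-\nabla_\delta f(\delta_t,\theta_t)\|
\leq L_{12}\|\theta_t^*-\theta_t\|.
\end{align*}
Squaring, taking expectations, and using $L_{12}\leq\bar{L}$ (since $\bar{L}=\max\{L_{12},L_{22},1,\Phi_\infty\}$), one obtains the clean bound $\E[\|\nabla\Phi(\delta_t)-\nabla_\delta f(\delta_t,\theta_t)\|^2]\leq \bar{L}^2 d_t$.

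Next I would plug this bound into \eqref{eq:decreasing_bound} to get
\begin{align*}
\E[\Phi(\delta_{t+1})]\leq \E[\Phi(\delta_t)]-\Bigl(\tfrac{\eta_2}{2}-2L\eta_2^2\Bigr)\E[\|\nabla\Phi(\delta_t)\|^2]+\Bigl(\tfrac{\eta_2}{2}+2L\eta_2^2\Bigr)\bar{L}^2 d_t+\tfrac{L\eta_2^2\sigma^2}{M}.
\end{align*}
The final step is a choice-of-stepsize calculation: take $\eta_2\leq 1/(32L)$ so that $2L\eta_2^2\leq \eta_2/16$. This simultaneously forces $\tfrac{\eta_2}{2}-2L\eta_2^2\geq \tfrac{7\eta_2}{16}$ and $\tfrac{\eta_2}{2}+2L\eta_2^2\leq \tfrac{9\eta_2}{16}$, which matches the desired coefficients in \eqref{eq:control}.

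The argument is essentially bookkeeping on top of \Cref{lemma:decrease_phi} and \Cref{lemma:phi_diff}, so no real obstacle is expected. The only subtle point is selecting a coherent admissible range for $\eta_2$ that is compatible with the condition required later in \Cref{lemma:decreasing_error} (which sets $\eta_1=\mu/(2L_{22})$ and leaves $\eta_2$ to be chosen so that the contraction factor $1-\tfrac{1}{2\kappa}+16\kappa^3\bar{L}^2\eta_2^2$ remains below one); the bound $\eta_2\lesssim 1/(32L)$ should be tightened in the global proof of \Cref{thm:persistent} to simultaneously satisfy that requirement, but for this lemma alone the stated inequality follows directly.
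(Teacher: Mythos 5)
Your proposal is correct and follows essentially the same route as the paper's proof: both start from \Cref{lemma:decrease_phi}, bound the mismatch term by $\bar{L}^2 d_t$ via the Danskin identity of \Cref{lemma:phi_diff} together with the $L_{12}$-Lipschitz bound and $L_{12}\leq\bar{L}$, and then use $\eta_2\leq 1/(32L)$ to pin the coefficients at $7\eta_2/16$ and $9\eta_2/16$. The paper's stated step size is $\eta_2=\min\{1/32L,\,1/8\kappa^2\bar{L}\}$, whose first branch is exactly your condition, so the arguments coincide.
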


\begin{proof}[Proof of \Cref{lemma:decrease_d}]
	By the choice of $\eta_2=\min\{1/32L, 1/8\kappa^2\bar{L}\}$, 
	\begin{align*}
		\frac{7}{16}\eta_2\leq \frac{\eta_2}{2}-2L\eta_2^2\leq \frac{\eta_2}{2}+2L\eta_2^2\leq \frac{9}{16}\eta_2,
	\end{align*}
	which reduces \eqref{eq:decreasing_bound} to 
	\begin{align*}
			\E[\Phi(\delta_{t+1})]&\leq \E[\Phi(\delta_{t})]- \frac{7\eta_2}{16}\E[\|\nabla \Phi(\delta_{t})\|^2]\\
	&\qquad+\frac{9\eta_2}{16}\E[\|\nabla \Phi(\delta_{t})-\nabla_{\delta}f(\delta_{t},\theta_{t})\|^2]+ \frac{L\eta_2^2\sigma^2}{M}.
	\end{align*}
	Note that
	\begin{align*}
		\E[\|\nabla \Phi(\delta_{t})-\nabla_{\delta}f(\delta_{t},\theta_{t})\|^2]\leq \bar{L}^2 \E[\|\theta_t^*-\theta_t\|^2]=\bar{L}^2 d_{t},
	\end{align*} 
	Combining the above inequalities, we arrive at
	\eqref{eq:control}.
\end{proof}
\Cref{lemma:decrease_d} implies that the difference $\E[\Phi(\delta_N)]-\E[\Phi(\delta_0)]$ is controlled by the sum of $d_t$, that is,
\begin{align*}
	\E[\Phi(\delta_N)]-\E[\Phi(\delta_0)]\leq -\mathcal{O}(\eta_2) \left(\sum_{t=0}^{N-1}\E[\|\nabla\Phi(\delta_t)\|^2]\right)+\mathcal{O}(\eta_2)\sum_{t=0}^{N-1}d_t.
\end{align*}
Since $d_t$ exhibits a linear contraction as shown in \Cref{lemma:decreasing_error}, $\sum_{t=0}^{N-1}d_t$ can be further controlled by $\sum_{t=0}^{N-1}\E[\|\nabla\Phi(\delta_t)\|^2]$. Eventually, one can show that $\sum_{t=0}^{N-1}\E[\|\nabla\Phi(\delta_t)\|^2]$ is bounded, implying the convergence of \Cref{algo:persist}. The full version of \Cref{thm:persistent} is stated as follows.

\begin{theorem}
Under \Cref{ass:rsi}, \Cref{ass:lip} and \Cref{ass:sto}, for any given $\epsilon\in (0,1)$, let the step sizes be chosen as $\eta_1=\mu/L_{22}$, $\eta_2=\min\{1/32L, 1/8\kappa^2\bar{L}\}$,   then if $N$ and $M$ are large enough, 
\begin{align*}
	N\geq \left(\frac{(8\kappa^2+32)\bar{L}D_{\Phi}+\kappa D^2\bar{L}^2}{\epsilon^2}\right),\qquad M\geq \mathcal{O}(\frac{\sigma^2\kappa }{\epsilon^{2}}),
\end{align*} 	
there exists an index $t$, such that $(\delta_t,\theta_t)$ is an $\epsilon$-FOSP.  
\end{theorem}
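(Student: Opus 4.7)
The strategy is to chain \Cref{lemma:decrease_phi}, \Cref{lemma:decreasing_error}, and \Cref{lemma:decrease_d} into a coupled recursion that simultaneously controls the running averages of $\E[\|\nabla\Phi(\delta_t)\|^2]$ and the tracking error $d_t := \E[\|\theta^*_t - \theta_t\|^2]$, and then to translate smallness of these quantities into the two inequalities of \Cref{def:fosp} evaluated at $(\delta_t,\theta_t)$.

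\textbf{Main steps.} First I would telescope \Cref{lemma:decrease_d} over $t=0,\dots,N-1$ to obtain
\[
\frac{7\eta_2}{16}\sum_{t=0}^{N-1}\E[\|\nabla\Phi(\delta_t)\|^2]\;\leq\; D_\Phi \;+\; \frac{9\bar{L}^2\eta_2}{16}\sum_{t=0}^{N-1} d_t \;+\; \frac{NL\eta_2^2\sigma^2}{M},
\]
where $D_\Phi:=\Phi(\delta_0)-\min_\delta\Phi(\delta)$. Next, with the choice $\eta_2\leq 1/(8\kappa^2\bar{L})$ the contraction factor $1-\tfrac{1}{2\kappa}+16\kappa^3\bar{L}^2\eta_2^2$ in \Cref{lemma:decreasing_error} is at most $1-\tfrac{1}{4\kappa}$; unrolling that linear recursion and exchanging the order of summation bounds $\sum_t d_t$ by $4\kappa d_0 + 64\kappa^4\eta_2^2 \sum_t \E[\|\nabla\Phi(\delta_t)\|^2] + \mathcal{O}(\kappa N\sigma^2/M)$. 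Substituting this back, the gradient-sum term on the right can be absorbed into the left-hand side under the prescribed step size $\eta_2=\min\{1/(32L),\,1/(8\kappa^2\bar{L})\}$, yielding
\[
\frac{1}{N}\sum_{t=0}^{N-1}\E[\|\nabla\Phi(\delta_t)\|^2] \;\leq\; \mathcal{O}\!\left(\frac{D_\Phi+\kappa\bar{L}^2 d_0}{N\eta_2}\right)+\mathcal{O}\!\left(\frac{\kappa\sigma^2}{M}\right),
\]
together with a parallel bound on $\tfrac{1}{N}\sum_{t=0}^{N-1} d_t$ directly from the unrolled recursion. Choosing $N\geq\mathcal{O}(\epsilon^{-2})$ and $M\geq\mathcal{O}(\kappa\epsilon^{-2})$ drives both averages below $\mathcal{O}(\epsilon^2)$, and applying the pigeonhole principle to the weighted sum $\E[\|\nabla\Phi(\delta_t)\|^2]+L_{22}^2 d_t$ produces a single index $t$ at which both $\E[\|\nabla\Phi(\delta_t)\|^2]$ and $d_t$ are simultaneously $\mathcal{O}(\epsilon^2)$.

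\textbf{From averages to $\epsilon$-FOSP.} For the $\theta$-component, \Cref{ass:lip} together with $\nabla_\theta f(\delta_t,\theta^*_t)=0$ gives $\|\nabla_\theta f(\delta_t,\theta_t)\|\leq L_{22}\sqrt{d_t}\leq \epsilon$, which immediately implies $\max_{\theta\in B(\theta_t,1)}\langle\nabla_\theta f(\delta_t,\theta_t),\theta-\theta_t\rangle\leq \epsilon$. For the $\delta$-component, I would replay the projected-gradient argument already used in the proof of \Cref{thm:intermittent}: the bound $\|\nabla_\delta f(\delta_t,\theta_t)-\nabla\Phi(\delta_t)\|\leq L_{12}\sqrt{d_t}$ (again by \Cref{ass:lip}) combined with smallness of $\|\nabla\Phi(\delta_t)\|$ converts the one-step stochastic projected-descent inequality into the projection bound $\min_{\delta\in B(\delta_t,1)\cap\Delta}\langle\nabla_\delta f(\delta_t,\theta_t),\delta-\delta_t\rangle\geq -\epsilon$, closing the definition.

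\textbf{Main obstacle.} The hard part is closing the mutual coupling in the second step: \Cref{lemma:decrease_d} charges the descent of $\Phi$ with an error proportional to $d_t$, while \Cref{lemma:decreasing_error} charges the contraction of $d_t$ with an error proportional to $\|\nabla\Phi\|^2$. The two-timescale prescription $\eta_2\ll\eta_1$ is precisely what decouples these recursions, since after unrolling the contraction of $d_t$ the reinjected $\|\nabla\Phi\|^2$ term carries a prefactor of order $\eta_2^2\kappa^4\bar{L}^2$, which is strictly dominated by the $\tfrac{7\eta_2}{16}$ gain when $\eta_2$ is taken in the stated range. This is also the only place where \Cref{ass:rsi} is essential, since it is the minimal weakening of strong convexity that still delivers the geometric contraction of $d_t$ needed for the telescoping to close.
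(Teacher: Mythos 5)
Your proposal follows essentially the same route as the paper: telescope the descent inequality of \Cref{lemma:decrease_d}, unroll the linear contraction of $d_t$ from \Cref{lemma:decreasing_error} with $\gamma\leq 1-\tfrac{1}{4\kappa}$ so that $\sum_t\gamma^t\leq 4\kappa$, absorb the reinjected $\sum_t\E[\|\nabla\Phi(\delta_t)\|^2]$ term via the step-size choice, and conclude via the projected-gradient argument of \Cref{thm:intermittent}. Your endgame is in fact slightly more explicit than the paper's (which only says ``following the same argument''), since you spell out the pigeonhole step needed to get $\E[\|\nabla\Phi(\delta_t)\|^2]$ and $d_t$ small at the same index and the $L_{22}\sqrt{d_t}$ bound for the $\theta$-component of the $\epsilon$-FOSP condition.
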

\begin{proof}[Proof of \Cref{thm:persistent}]
Let $\gamma=1-\frac{1}{2\kappa}+16\kappa^3\bar{L}^2\eta_2^2$. Applying \eqref{eq:dt_contract} recursively yields 
\begin{align}
	d_t\leq \gamma^t D^2+16\kappa^3\eta_2^2 \left(\sum_{j=0}^{t-1}\E[\|\nabla\Phi(\delta_j)\|^2]\right)+\left(\frac{8\kappa^3\eta_2^2\sigma^2}{M}+\frac{\mu^2\sigma^2}{L_{22}^2 M}\right)\left(\sum_{j=0}^{t-1} \gamma^{t-1-j}\right).\label{eq:dt_recur}
\end{align}
Combining \eqref{eq:control} and \eqref{eq:dt_recur}, we obtain
\begin{align}
	\E[\Phi(\delta_t)]&\leq \E[\Phi(\delta_{t-1})]-\frac{7\eta_2}{16}\E[\|\nabla\Phi(\delta_{t-1})\|^2]+\frac{9\bar{L}^2\eta_2}{16}\gamma^{t-1} D^2\nonumber \\
	&\qquad+ 9\bar{L}^2\kappa^3\eta^3_2\left(\sum_{j=0}^{t-2}\gamma^{t-2-j}\E[\|\nabla\Phi(\delta_j)\|^2]\right)+\frac{9\bar{L}^2\eta_2}{16}\left(\frac{8\kappa^3\eta_2^2\sigma^2}{M}+\frac{\mu^2\sigma^2}{L_{22}^2 M}\right)\left(\sum_{j=0}^{t-2} \gamma^{t-2-j}\right)\nonumber\\
	&\qquad + \frac{L\eta_2^2\sigma^2}{M}.\label{eq:pre_recur}
\end{align}
Applying \eqref{eq:pre_recur} recursively leads to 
\begin{align*}
	\E[\Phi(\delta_{N})]&\leq \Phi(\delta_0)-\frac{7\eta_2}{16}\sum_{t=0}^{N-1}\E[\|\nabla\Phi(\delta_t)\|^2]+\frac{9\bar{L}^2\eta_2D^2}{16} \left(\sum_{t=0}^{N-1}\gamma^t\right)\\
	&\qquad + \frac{NL\eta_2^2\sigma^2}{M}+9\bar{L}^2\kappa^3\eta^3_2\left(\sum_{t=1}^{N-1}\gamma^{t-2-j}\sum_{j=0}^{t-2}\E[\|\nabla\Phi(\delta_j)\|^2]\right)\\
	&\qquad + \frac{9\bar{L}^2\eta_2}{16}\left(\frac{8\kappa^3\eta_2^2\sigma^2}{M}+\frac{\mu^2\sigma^2}{L_{22}^2 M}\right)\left(\sum_{t=1}^{N}\sum_{j=0}^{t-2} \gamma^{t-2-j}\right).
\end{align*}
By the choice of $\eta_2$, $\gamma\leq 1-\frac{1}{4\kappa}$, and hence $\sum_{t=0}^{N-1}\gamma^t\leq 4\kappa$, which implies 
\begin{align*}
	\sum_{t=1}^{N-1}\gamma^{t-2-j}\sum_{j=0}^{t-2}\E[\|\nabla\Phi(\delta_j)\|^2]&\leq 4\kappa \sum_{t=0}^{N-1}\E[\|\nabla\Phi(\delta_t)\|^2],\\
	\sum_{t=1}^{N}\sum_{j=0}^{t-2} \gamma^{t-2-j}\leq 4\kappa N.
\end{align*}
	Combining all the inequalities above, and suppressing the constants in front of variance terms, we arrive at 
	\begin{align*}
		\E[\Phi(\delta_{N})]&\leq \Phi(\delta_0)-\frac{47\eta_2}{128}\sum_{t=0}^{N-1}\E[\|\nabla\Phi(\delta_t)\|^2]+\mathcal{O}(\frac{\eta_2\sigma^2\kappa N}{M})+\mathcal{O}({\kappa\eta_2D^2\bar{L}^2}).
	\end{align*}
	By the definition of $D_\Phi$, the above inequality leads to 
	\begin{align*}
		\frac{1}{N}\sum_{t=0}^{N-1}\E[\|\nabla\Phi(\delta_t)\|^2]&\leq \frac{128D_{\Phi}}{47\eta_2 N}+\mathcal{O}(\frac{\kappa D^2\bar{L}^2}{N})+\mathcal{O}(\frac{\sigma^2\kappa }{M})\\
		&\leq \mathcal{O}(\frac{(8\kappa^2+32)\bar{L}D_{\Phi}+\kappa D^2\bar{L}^2}{N})+\mathcal{O}(\frac{\sigma^2\kappa}{M}).
	\end{align*}
	Since $N\geq\mathcal{O}\left(\frac{(8\kappa^2+32)\bar{L}D_{\Phi}+\kappa D^2\bar{L}^2}{\epsilon^2}\right) $, and $M\geq \mathcal{O}(\frac{\sigma^2\kappa }{\epsilon^{2}})$,
	\begin{align*}
		\frac{1}{N}\sum_{t=0}^{N-1}\E[\|\nabla\Phi(\delta_t)\|^2]\leq \epsilon^2.
	\end{align*}
	Following the same argument in the proof of \Cref{thm:intermittent}, we arrive at the conclusion.
\end{proof}
\end{document}